\documentclass[11pt]{article}
\pdfoutput=1
\usepackage{fullpage}
\usepackage[margin=1in]{geometry}
\usepackage[utf8]{inputenc}

\usepackage{graphicx}

\usepackage{booktabs}

\usepackage{amsmath}
\usepackage{amssymb}
\usepackage{amsthm}

\usepackage{thmtools}
\usepackage{thm-restate}
\usepackage{bbm}
\usepackage{bm}
\usepackage{verbatim}
\usepackage[pagebackref]{hyperref}
\hypersetup{colorlinks=true,citecolor=blue,linkcolor=blue}
\usepackage{array}
\usepackage{caption}
\usepackage{subcaption}
\usepackage{float}
\usepackage{changepage}
\usepackage{enumitem}
\usepackage{cases}
\usepackage[dvipsnames]{xcolor}
\usepackage{algorithm}
\usepackage{algpseudocode}
\usepackage{cleveref}

\usepackage{xfrac}
\usepackage{tikz, pgfplots}
\usepackage{multirow}

\usepackage{tcolorbox}

\usepackage{mathtools}
\PassOptionsToPackage{table}{xcolor}

\PassOptionsToPackage{table}{xcolor}
\usepackage{xcolor}
\usepackage{colortbl}

\usetikzlibrary{decorations.pathreplacing}

\usepackage{natbib}

\usepackage{color-edits}

\usepackage{graphicx} 
\usepackage{tikz} 
\usetikzlibrary{calc} 
\usetikzlibrary{arrows.meta, positioning} 

\definecolor{primaryblue}{RGB}{0, 102, 204} 
\definecolor{highlightred}{RGB}{204, 0, 0} 
\definecolor{projectiongreen}{RGB}{0, 153, 76} 
\definecolor{backgroundgray}{RGB}{180, 180, 180} 
\definecolor{setfillcolor}{RGB}{204, 229, 255} 

\definecolor{gridgray}{RGB}{220, 220, 220} 
\definecolor{axislabelblack}{RGB}{50, 50, 50} 
\definecolor{curvegreen}{RGB}{46, 139, 87} 
\definecolor{blueline}{RGB}{50, 100, 200} 
\definecolor{redline}{RGB}{200, 50, 50} 
\definecolor{graypoint}{RGB}{100, 100, 100} 

\newcommand\yk[1]{}
\newcommand\ms[1]{}

\addauthor{yk}{red}
\addauthor{ms}{green}

\theoremstyle{definition}
\newtheorem{theorem}{Theorem}[section]
\newtheorem*{theorem*}{Theorem}
\newtheorem{lemma}[theorem]{Lemma}
\newtheorem{definition}[theorem]{Definition}

\newtheorem{proposition}[theorem]{Proposition}

\newtheorem{assumption}[theorem]{Assumption}
\newtheorem{observation}[theorem]{Observation}

\newtheorem{remark}[theorem]{Remark}

\newtheorem{example}[theorem]{Example}
\newtheorem*{example*}{Example}

\renewcommand{\hat}{\widehat}
\renewcommand{\tilde}{\widetilde}

\newcommand{\prob}{\mathcal{D}}
\newcommand{\pro}{\prob}

\DeclareMathOperator*{\argmax}{arg\,max}
\DeclareMathOperator*{\argmin}{arg\,min}
\DeclareMathOperator{\E}{\mathbb{E}}

\newcommand{\state}{y}
\newcommand{\staterv}{Y}
\newcommand{\statesp}{\mathcal{Y}}

\newcommand{\expect}[2]{{\mathbf{E}}_{#1}\left[#2\right]}

\newcommand{\score}{\ell}

\newcommand{\predRV}{\bm{Q}}
\newcommand{\pred}{\bm{q}}
\newcommand{\truepred}{\bm{p}}

\newcommand{\repRV}{R}
\newcommand{\rep}{r}

\newcommand{\predsp}{\mathcal{Q}}
\newcommand{\repsp}{\mathcal{R}}

\newcommand{\calmap}{h}
\newcommand{\calmapsp}{\mathcal{H}}

\newcommand{\prop}{\Gamma}      

\newcommand{\simplex}{\Delta(\statesp)}

\newcommand{\genFunc}{G}

\newcommand{\kernel}{K}

\newcommand{\reducespace}[1]{\vspace{#1}}

\newcommand{\statervvec}{\mathbf{Y}}
\newcommand{\statevec}{\mathbf{y}}

\title{Calibration without Ground Truth\footnote{Authorship follows alphabetical order.}}
\author{Yuqing Kong\\
Peking University
\and 
Mingyu Song\\
Peking University
\and
Yizhou Wang\\
Peking University
\and Yifan Wu\\
Microsoft Research
}
\date{}

\allowdisplaybreaks
\begin{document}

\maketitle

\begin{abstract}
    \citet{villalobos2024position} predicts that publicly available human text will be exhausted within the next decade. Thus, improving models without access to ground-truth labels becomes increasingly important. 
    We propose a label-free post-processing framework that improves a strong but miscalibrated model using a weaker yet better-calibrated reference. Our framework guarantees a strict performance improvement under any proper loss.
    Our approach is based on a characterization of when strict improvement is possible: when the strong and reference models are not mutually calibrated. We formalize this condition, connect it to arbitrage and no-trade results from economics, and develop an efficient Bregman projection algorithm that guarantees worst-case loss reduction without labels. 
    Experiments on representative LLMs across varying scales demonstrate that our label-free method significantly reduces proper losses and calibration errors, \ms{may better to add this sentence. Besides, the connections between sentences in abstract maybe can be improved.}achieving performance competitive with supervised baselines.

\end{abstract}

\section{Introduction}
Recent advances in AI, especially in large language models (LLMs), have been largely improved by the scaling law \citep{kaplan2020scaling} and, in particular, by the availability of training data. However, this scaling of training data is not sustainable. \citet{villalobos2024position} predicts that LLMs will be trained on datasets comparable to the stock of public human text between 2026 and 2032. As we enter this regime where training data is exhausted, the improvement of models demands methods without relying on ground truth data. 

In this regime without ground truth, one solution is to use an existing weak model to improve a strong model. \citet{burns2024weak} show a weak-to-strong generalization: a student model in a stronger hypothesis class can outperform a fine-tuned weak reference model. This weak-to-strong generalization suggests that the weak reference can be valuable even when the model is imperfect, which motivates exploring additional ways to extract transferable information from a weak reference. 

Our work proposes \emph{calibration} as a transferable property that can improve a strong model with the outputs of a weak model, without ground truth labels. Calibration requires that model outputs can be reliably interpreted as probabilities \citep{dawid}. 
As an example, your fund manager might state there is a $20\%$ chance that a high-risk investment will double in value. This confidence is calibrated if, among all the times your manager predicts a $20\%$ probability, the investments actually succeed roughly $20\%$ of the time.


Calibration has become increasingly important for modern neural networks and large language models. Despite high accuracy, state-of-the-art models are often miscalibrated and systematically overconfident \citep{guo2017calibration, ovadia2019can}. In real-world deployments, this overconfidence can lead to high-profile failures. For example, in a recent federal court case\footnote{Court filing available at \url{https://storage.courtlistener.com/recap/gov.uscourts.nysd.575368/gov.uscourts.nysd.575368.54.0_3.pdf}.}, attorneys relied on an LLM to generate legal citations that were presented with high confidence but later found to be entirely fictitious, ultimately resulting in court-imposed sanctions. Crucially, the errors are difficult to detect when reported confidence is unreliable. As a result, calibration has emerged as a key notion of trustworthiness: well-calibrated predictions communicate not only what a model predicts, but also when its outputs should or should not be relied upon for downstream decision-making. Moreover, improving calibration can enhance performance under probability-sensitive objectives and decision rules, as calibration aligns predicted probabilities with true conditional likelihoods rather than merely optimizing accuracy.

A practical motivation for improving model calibration without ground truth is that a well-calibrated weak model is relatively easy to obtain. 
For instance, the GPT-4 Technical Report notes that the pre-trained model is well-calibrated, while post-training makes the model less calibrated \citep{achiam2023gpt}. In many settings, \emph{fine-tuned} models resemble aggressive star traders: they are highly capable of finding ``alpha'', but they often become overconfident. On the contrary, \emph{base} pretrained checkpoints resemble conservative index analysts: they may be less skilled at picking individual winners, yet they are more grounded in statistical reality, resulting in a more calibrated prediction. This pattern is common. \citet{guo2017calibration} observe that more complicated modern neural networks are better in accuracy but worse in calibration. For another example, \citet{cruz2024evaluating} show that a general-purpose text model may achieve higher accuracy on a forecasting or classification task, yet produce less calibrated confidence estimates than a simpler model trained on structured signals (e.g., tabular features), which is less accurate but more reliable.

This gap suggests a unique opportunity. Fixing a weaker but better-calibrated \textbf{reference} model (e.g. base), we can improve the calibration of a stronger model  (e.g. fine-tuned), which ``distills'' the reference. We name the stronger model as the \textbf{primary} model we wish to improve. 
In standard model distillation, the primary model is typically trained to match the reference's predictions \citep{hinton2015distilling}. When the reference is weaker than the primary model, such imitation can unnecessarily constrain the primary model and may degrade performance. Instead of distilling the reference model, we treat the weak reference model as a reliable reference and restrict our attention to post-processing the strong primary model, 
preserving the primary model's predictive information.

Formally, our goal is to design a post-processing algorithm that maps the primary model's output to an (strictly) improved predictor, which comes with a provable worst-case improvement guarantee of a proper loss. The guarantee assumes access only to the joint distribution of the primary model's and reference model's outputs. The worst case is taken over all possible ground-truth label distributions consistent with this joint distribution. This provides a guarantee when no labeled ground truth is available.
Crucially, our restriction to post-processing preserves the information of the primary model. The restriction to transformation uses the primary output as a sufficient statistic and never pools it into a coarser summary. Thus, our theoretical guarantee ensures that this transformation cannot degrade the loss in the worst case, even though an arbitrary remapping could, in principle, hurt accuracy.

We characterize the equivalent condition when \emph{strict} improvement is possible, which connects to the calibration of the two models. Intuitively, improvement requires a contradiction between the strong model's prediction and the calibrated reference. We formalize this as a property of \emph{mutual calibration}: two predictors are \emph{mutually calibrated} if there exists a joint distribution over both predictions and the ground truths, such that both predictors are calibrated with respect to the truth under this distribution. Our characterization shows that strict improvement is possible precisely when the two predictors are \emph{not} mutually calibrated, i.e., when no joint distribution exists under which both predictors are calibrated. This characterization connects the improvement of the strong model to the transferrable property of calibration. We give an example of contradicting predictions in \Cref{example: intro contradictory}. 

\begin{example}[Contradictory Predictions]
\label{example: intro contradictory}
Consider two predictors predicting the probability that an investment will double in value.
\reducespace{-3mm}
\begin{itemize}
 \item  The primary (strong) predictor outputs $10\%$ for half of the samples (Group A of time stamps) and $70\%$ for the other half (Group B).
 \reducespace{-2mm}
    \item The reference (weak) predictor always outputs a base rate of $50\%$ and is known to be calibrated.  
\end{itemize}
\reducespace{-3mm}
\end{example}
In \Cref{example: intro contradictory}, the primary predictor implies an average base rate of $0.5(70\%) + 0.5(10\%) = 40\%$,  contradicting the reference's $50\%$. The two predictors are not mutually calibrated. We note that this example is a deliberately oversimplified example. Real predictors typically output a richer range of predictions and may depend on high-dimensional features and contexts.

Our algorithm exploits the contradiction between the two predictors, based on the characterization of strict improvement. First, since strict improvement is {impossible} for predictors that are mutually calibrated with the calibrated reference, we define a \emph{reference-compatible set} as the set of predictors that are mutually calibrated with the reference. This reference-compatible set is specified by a set of linear constraints over the joint distribution, where the linear constraints correspond to the calibration of the two models. 

Second, we project the primary predictor onto this reference-compatible set using the geometry induced by the training loss. Our algorithm selects the model that minimizes the expected Bregman divergence to the original strong model.  
Here, the Bregman divergence is generated by the convex function corresponding to the chosen proper loss (e.g., squared loss yields a squared divergence). Importantly, this is a Bregman projection in \emph{functional model space}: the optimization ranges over all predictors as functions (or conditional distributions) rather than over pointwise predictions on a fixed dataset. In this view, 
our projection then chooses, among all mutually calibrated predictors compatible with those constraints, the one that minimally moves the strong model. This yields a principled post-processing rule that changes the model only as much as needed to restore mutual calibration while preserving the strong model's information under the corresponding Bregman divergence. \Cref{fig:projection} provides an illustrative figure of the Bregman projection. 

The above results also reveal a connection between machine learning and economics. If we view a predictive model as a fund manager reporting subjective beliefs about an investment, 
mutual calibration then parallels the existence of a common prior between two fund managers. The two managers may receive different information, but there exists a single underlying world in which both reported beliefs can be simultaneously rational. 
When mutual calibration fails, the implied worldviews are inconsistent. In economics, this inconsistency induces arbitrage opportunities \cite{milgrom1982information, arieli2021feasible, morris2020no}, where an arbitrageur can trade with the two managers and profit without new information. We bring this view into machine learning and show that the inconsistency guarantees the existence of a transformation that strictly reduces expected loss. Arbitrage and model improvement are thus two sides of the same principle, which extracts value from incoherent probabilistic beliefs. 

\begin{example}[Arbitrage in Contradictory Predictions]
Consider an arbitrageur offering the following contracts ($\epsilon > 0$). 
\reducespace{-4mm}
\begin{itemize}
    \item \textbf{To the reference predictor:} the arbitrageur pays $(0.5+\epsilon)$ if the investment doubles, and receives $0.5$ if not.
    \reducespace{-2mm}
    \item \textbf{To the primary predictor:} When predicting $0.7$, the arbitrageur receives $0.3$ if the investment doubles, and pays $(0.7+\epsilon)$ otherwise; when predicting $0.1$, receives $0.9$ if the investment doubles, and pays $(0.1+\epsilon)$ otherwise.
\end{itemize}
\reducespace{-2mm}
Both predictors accept these contracts as their subjective expected profits are strictly positive. At the same time, the arbitrageur guarantees a risk-free profit of $(0.1 - \epsilon)$ regardless of whether the investment doubles. \yk{check it}
\end{example}


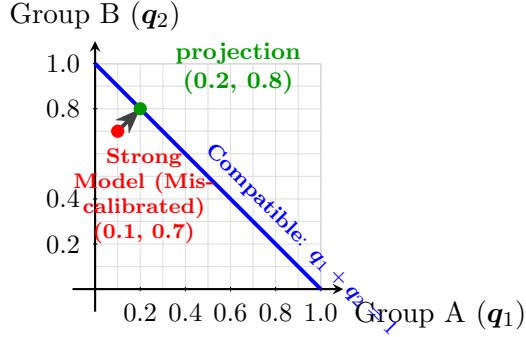
\begin{figure}[h]
\centering
\begin{tikzpicture}[scale=3,
    point/.style={circle, fill, minimum size=5pt, inner sep=0pt},
    declare function={
        xa=0.1; 
        ya=0.7; 
        xb=0.2; 
        yb=0.8; 
    }
]
    \draw[lightgray, line width=0.5pt] (0, 0) grid (1, 1);
    \draw[lightgray!50, thin, step=0.1] (0, 0) grid (1, 1);

    \foreach \x in {0.2, 0.4, 0.6, 0.8, 1.0}
        \draw (\x, 0.01) -- (\x, -0.01) node[below, font=\small, yshift=-1pt] {$\x$};

    \foreach \y in {0.2, 0.4, 0.8, 1.0}
        \draw (0.01, \y) -- (-0.01, \y) node[left, font=\small, xshift=-1pt] {$\y$};

    \draw[->, >=stealth, black, line width=0.8pt] (-0.1, 0) -- (1.1, 0) node[below right] {Group A ($\pred_1$)};
    \draw[->, >=stealth, black, line width=0.8pt] (0, -0.1) -- (0, 1.1) node[above] {Group B ($\pred_2$)};

    \coordinate (Origin) at (0, 0);
    \coordinate (A) at (1, 0);
    \coordinate (B) at (0, 1);
    \coordinate (P) at (xa, ya);
    \coordinate (Q) at (xb, yb);

    \draw[line width=1.5pt, blue] (A) -- (B) 
    node[pos=0.6, above right, sloped, font=\scriptsize] {\textbf{Compatible}: $\pred_1+\pred_2=1$};

    \draw[ darkgray, line width=1.5 pt, -{Stealth}] (P) -- (Q);

    \draw[darkgray, line width=0.5pt] ($(Q)!0.05!(P)$) -- ($(Q)!0.05!(P)!0.05!90:(P)$) -- ($(Q)!0.05!90:(P)$);

    \node[point, fill=green!60!black, label={[green!60!black, font=\bfseries\footnotesize, text width=2.2cm, align=center]above right: projection (0.2, 0.8)}] at (Q) {};

    \node[point, fill=red, label={[red, font=\bfseries\scriptsize, xshift=0.35cm, text width=2cm, align=center]below:Strong Model (Miscalibrated) (0.1, 0.7) }] at (P) {};
\end{tikzpicture}
\caption{The projection to the \textcolor{blue}{reference-compatible set} of predictors that are mutually calibrated with the reference model in \Cref{example: intro contradictory}. The \textcolor{red}{red} point represents the ``strong'' but miscalibrated model. The \textcolor{blue}{blue} line represents the constraints imposed by the ``weak'' but calibrated reference: the predictions to Groups A and B should imply a base rate of $0.5$, i.e., $\pred_1 + \pred_2 = 1$. By projecting the \textcolor{red}{red} point onto the \textcolor{blue}{blue} line of reference-compatible predictors, we obtain a model (\textcolor{Green}{green} point) that retains the informativeness of the strong model but aligns with the predictions of the weak model.}
\label{fig:projection}
\end{figure}


\subsection{Our Contributions}

\paragraph{Theory}
We formalize the problem of improving a miscalibrated primary model using a (approximately) calibrated reference, and develop an efficient Bregman-projection-based algorithm.

\begin{itemize}
\reducespace{-3mm}
    \item \textbf{Condition for Strict Improvement.} We characterize when strict improvement is possible: precisely when the strong and reference models are \emph{not} mutually calibrated. We link mutual calibration to the No-Trade Theorem in economics, and establish a correspondence between maximal extractable arbitrage profit and calibration error metrics (\Cref{sec:arbitrage}).
    \reducespace{-2mm}

    \item \textbf{Worst-Case Improvement Guarantee.} The algorithm outputs a transformation that reduces the strong model’s expected loss at least its Bregman divergence to the reference-compatible set (the ``distance'' between the red and green points in \Cref{fig:projection}), \emph{for all} ground-truth distributions under which the reference model is calibrated (\Cref{thm:main_result}). Furthermore, we relax the assumption of a perfectly calibrated reference, quantifying how the guarantees degrade in relation to the reference's calibration error (\Cref{thm:main_result_relaxed_general}).

\reducespace{-2mm}
    \item \textbf{Generality of the Framework.} Our framework applies broadly to:
    \begin{itemize}
        \item \emph{Diverse Metrics:} Optimization with respect to any proper loss, including Brier Score, Log Loss, and Confidence Loss.
        \item \emph{General Elicitable Properties:} Extensions beyond full predictive distributions to elicitable reports such as means, medians, and top-one classification with confidence.
    \end{itemize}
    \reducespace{-3mm}
\end{itemize}

\paragraph{Experiment}

We evaluate our method on mainstream open-source LLM families across different sizes. We set the better-calibrated Base models as \emph{reference} predictors to improve their \emph{primary} Instruct counterparts. Experimental results show significant reductions in both the Brier Score and Expected Calibration Error (ECE) comparing to the Instruct models. Despite being label-free, our method achieves calibration performance competitive with supervised baselines. Unlike standard post-hoc calibration only modifies confidence scores, our approach re-estimates the full probability distribution, allowing us to preserve the high prediction accuracy of Instruct models while offering the potential for marginal gains in certain cases (\Cref{sec:results}).


\subsection{Related Work}
\reducespace{-1mm}
\paragraph{Model Distillation} Similar to our idea, model distillation trains a model with the output of reference model and aims at a better performance than training the model on ground truth labels (see, e.g., \citealt{hinton2015distilling, sanh2019distilbert, guminillm, agarwal2024policy}). Our work is similar to model distillation in the sense that we ``distill'' the calibration of the weak model. Distillation is most commonly used with a reference model at least as good as the trainable model. As discussed in the introduction, our work is closer to the work on weak-to-strong generalization that uses a weak model to improve a strong model \citep{burns2024weak}. 

\reducespace{-1mm}
\paragraph{Calibration under Distribution Shift} The classical literature of calibration focuses on post-processing model outputs with ground truth \citep{platt1999probabilistic, zadrozny2002transforming, kull2019beyond}. The closest work to ours improves model calibration under distribution shifts. The goal of this literature is similar to our work, which is to measure or to improve calibration when there is no ground truth labels of a shifted distribution. This literature usually makes assumptions of the distribution shift. The proposed solution relies on the assumptions. For example, \citet{alexandari2020maximum, popordanoska2023estimating} assume that only the marginal distribution of the true outcomes is shifted while the conditional distribution of the features remains the same. \citet{park2020calibrated} assumes that the shifted distribution is sufficiently close to the training data. Our work is orthogonal to the work on distribution shift. We use a weak calibrated model as a reference and correct contradictions in predicted probabilities. 

\reducespace{-1mm}
\paragraph{Bregman Projection} 
Bregman projection has been widely studied in a number of learning theory problems, e.g, mamximum entropy models \citep{CollinsSchapireSinger2002, DudikPhillipsSchapire2007, AltunSmola2006, BanerjeeDhillonGhoshMerugu2007, frongillo2014general, Harremoes2016, NockMenonOng2016, CortesKuznetsovMohriSyed2015}, boosting \citep{CollinsSchapireSinger2002, KivinenWarmuth1999}, clustering \citep{banerjee2005clustering}, Bayesian estimation \citep{frigyik2008functional}. Similar to our work, \citet{mohri2025coherence} apply this Bregman projection to the self-improvement of language models. However, \citet{mohri2025coherence} adopt a different focus. Rather than addressing calibration transfer, they project a baseline model to the space of coherent models, where coherence means the answers to paraphrases stay the same.

\reducespace{-1mm}
\paragraph{Mutual Calibration and the No-Trade Theorem}
The notion of \emph{mutual calibration} closely parallels the Common Prior Assumption and rationality in the economic literature. Seminal work by \citet{aumann1976agreeing} shows that Bayesian agents who share a common prior cannot ``agree to disagree''. This insight was extended to market settings by the No-Trade Theorem \citep{milgrom1982information}, which states that risk-averse agents with common priors cannot profit from trade in an equilibrium. More recently, \citet{arieli2021feasible} and \citet{morris2020no} characterize the set of \emph{feasible} joint distributions of posterior beliefs as those that admit no arbitrage, formalizing feasibility through arbitrage-free constraints. This notion of feasibility aligns precisely with our definition of mutual calibration. We leverage this connection to show that violations of mutual calibration correspond to exploitable arbitrage opportunities, which in turn can be harnessed in a learning context to systematically refine model predictions and improve performance.

\reducespace{-1mm}


\section{Problem Formulation}
\label{sec:problem_statement}

\paragraph{Setup.}
We consider a prediction setting with an outcome variable $\staterv \in \statesp$ and two predictive models. We treat the output of the predictive models as random variables:
\begin{itemize}
\reducespace{-3mm}
    \item $\predRV_1$: The \textbf{primary predictor} (e.g., a strong, but miscalibrated model), which we aim to improve.
    \reducespace{-2mm}
    \item $\predRV_0$: The \textbf{reference predictor} (e.g., a weak, but calibrated model).
        \reducespace{-3mm}
\end{itemize}

We first consider the standard case where predictors output full probability distributions over outcomes, $\predRV \in \simplex$. We assume the existence of a calibrated reference predictor.

\begin{assumption}[Calibrated Reference]
    We assume the reference predictor $\predRV_0$ is \emph{calibrated}, meaning $\Pr(\staterv = y \mid \predRV_0 = \pred) = \pred(y)$ for all $y \in \statesp$.
\end{assumption}
    \reducespace{-1mm}

\paragraph{Assumption relaxation} We relax the assumption that the reference predictor is perfectly calibrated, providing a rigorous analysis of how the theoretical guarantees depend on the reference's calibration error (\Cref{thm:main_result_relaxed_general}). Empirically, we observe that even when the reference is only better calibrated than the primary, our method successfully exploits this relative advantage to reduce error, demonstrating robustness to approximate calibration.

We assume access to the joint distribution over $\predRV_0$, $\predRV_1$, denoted by $\prob_{\predRV_0, \predRV_1}$, estimated by the empirical joint distribution of two predictors outputs on an \textbf{unlabeled} dataset. We focus on the setting where predictors have finite supports $\predsp_0,\predsp_1$.

\textbf{Problem Statement.}
We aim to learn a transformation map $\calmap:\predsp_1\mapsto\Delta(\statesp)$ solely based on $\prob_{\predRV_0, \predRV_1}$ such that the transformed predictor $\calmap(\predRV_1)$ strictly outperforms the original $\predRV_1$ in terms of a chosen proper loss, ``transferring'' calibration information from $\predRV_0$ to $\predRV_1$ without requiring $\predRV_0$ during subsequent deployment.


Formally, we evaluate performance using a \emph{ (strictly) proper loss} $\score: \simplex \times \statesp \to \mathbb{R}$, such that the expected loss is (uniquely) minimized when the predicted distribution matches the true posterior. Specifically, for any true distribution $\truepred \in \Delta(\statesp)$ of the outcome $\state$, it holds that \(\E_{\state \sim \truepred}\big[\score(\truepred, \state)\big]\leq\E_{\state \sim \truepred}\big[\score(\pred, \state)\big]\) for any forecast $\pred$. Common examples include the Log-Loss and the Brier Score (Squared Error).

\begin{definition}[Strictly Improving Transformation]
    Given access to the joint distribution $\prob_{\predRV_0, \predRV_1}$ over the primary predictor and the reference predictor, transformation $\calmap$ strictly improves model if 
   1) for all distribution $\prob_{\predRV_0, \predRV_1, \staterv}$  extending $\prob_{\predRV_0, \predRV_1}$ where $\predRV_0$ is calibrated, applying $\calmap$ weakly improves the loss: $\E[\score(\calmap(\predRV_1), \staterv)] \leq  \E[\score(\predRV_1, \staterv)];$   
    and 
   2) there exists a $\prob_{\predRV_0, \predRV_1, \staterv}$, where the inequality is strict. 
\end{definition}

We aim to find a strictly improving transformation. This is not guaranteed in all cases. For instance, if both predictors \(\predRV_0\) and \(\predRV_1\) are already providing the ground truth, no transformation will improve the performance. 
Thus, we will quantify the lower bound of improvement as a function of \(\prob_{\predRV_0, \predRV_1}\), and explore the conditions on \(\prob_{\predRV_0, \predRV_1}\) that make a strictly improving transformation possible. 

We will solve this problem with an even stronger result: if a strictly improving transformation exists, it will lead to a strict improvement for \emph{all} valid ground truth distributions \(\prob_{\predRV_0, \predRV_1, \staterv}\), not just in the case of specific distributions.
 

\subsection{Extension to General Elicitable Properties}
\label{sec:properties}


In many practical applications, models do not output full probability distributions but rather specific summary statistics (e.g., an investing advice with a confidence score). To accommodate this, we generalize our framework from full distributions to \emph{elicitable properties}. Here elicitable properties are those for which a reporter can be incentivized to reveal truthfully through a proper loss \cite{lambert2008eliciting}. While common statistical measures such as the mean and median are elicitable, others—such as variance—are not elicitable on their own.

Let $\repsp$ denote the report space (e.g., $\repsp = \statesp$ for mode estimation, or $\repsp = \mathbb{R}$ for mean estimation). A property $\prop: \Delta(\statesp) \to 2^{\repsp}$ maps a probability distribution to a set of valid reports.

\paragraph{Generalized Definitions}
We extend our core concepts to this setting as follows:

\begin{itemize}
\reducespace{-3mm}
    \item \textbf{Property Calibration:} A predictor is calibrated with respect to property $\prop$ if its report matches the property of the conditional distribution given that report:
    \[
    \rep \in \prop(\prob_{\staterv \mid \repRV=\rep}).
    \]
        \reducespace{-1mm}
    for all possible report $\rep$.
\reducespace{-2mm}
    \item \textbf{Proper Loss for Properties:} A loss function $\score: \repsp \times \statesp \to \mathbb{R}$ elicits property $\prop$ if truthful reporting minimizes the expected loss. Formally, for any distribution $\truepred\in\Delta(\statesp)$:
    \[
    \prop(\truepred) \subseteq \operatorname*{argmin}_{r \in \repsp} \E_{Y \sim \truepred}[\score(r, Y)].
    \]
        \reducespace{-1mm}
    The loss is \emph{strictly proper} if the inclusion is an equality. 
\end{itemize}
    \reducespace{-3mm}
Let $\repRV_0$ and $\repRV_1$ be reports from the reference and primary predictors, corresponding to possibly distinct properties $\Gamma_0$ and $\Gamma_1$. Analogous to the full distribution setting, the objective remains to find a transformation $\calmap:\repsp_1\mapsto\Delta(\statesp)$ that strictly reduce the proper loss associated with $\prop_1$, under the assumption that the reference $\repRV_0$ satisfies property calibration. Here $\repsp_1$ denotes the finite support of primary predictor $\repRV_1$.

\begin{example}[Mode and Confidence]
\label{example3.3:mode and confidence}
    Consider a Question-Answering (QA) system where the predictor outputs a discrete answer and a confidence score.
    \begin{itemize}
        \reducespace{-3mm}
        \item \textbf{Property $\prop$:} The pair $(\text{mode}, \text{max-probability})$:
        \begin{equation*}
        \prop(\pred) = \left\{ (y, c) \mid y \in \argmax_{y' \in \statesp} \pred(y'), \ c = \max_{y' \in \statesp} \pred(y') \right\}.
        \end{equation*}
            \reducespace{-2mm}
        \item \textbf{Report $\rep$:} A tuple $(\hat{y}, \hat{c})$ where $\hat{y} \in \statesp$ is the predicted class and $\hat{c} \in [0,1]$ is the associated confidence.
            \reducespace{-2mm}
        \item \textbf{Loss $\score$:} A strictly proper loss for this property is \cite{damani2025beyond}:
        \begin{equation}
        \label{eq:proper loss}
            \score((\hat{y}, \hat{c}), y) = -\mathbb{I}(y = \hat{y}) + (\hat{c}-\mathbb{I}(y = \hat{y}))^2.
        \end{equation}
    \end{itemize}
    We call it the \emph{confidence loss}. This loss function penalizes both incorrect classification (accuracy) and poor confidence assessment (calibration). 
\end{example}


\section{Solution and Theoretical Guarantees}
\label{sec:solution_theory}

This section presents our core methodology for improving a primary predictor given a calibrated reference. We adopt a geometric perspective, viewing transformed predictors as points. The set of transformed predictors that are mutually calibrated with the reference constitutes a convex region. If a primary predictor is not mutually calibrated with the reference predictor, it must lies outside this region. Our approach projects such a predictor back onto this reference-compatible set, thereby reducing prediction error by eliminating the inconsistencies while preserving maximal information from the original primary predictions.


\subsection{The Condition for Improvement: Mutual Calibration}

We first introduce the central concept determining the possibility of strict improvement: \textbf{Mutual Calibration}. This concept characterizes the consistency of the joint distribution of two predictors without requiring access to the ground truth outcome.

\begin{definition}[Mutual Calibration]
    Given a joint distribution $\prob_{\predRV_0, \predRV_1}$, a primary predictor $\predRV_1$ and a reference predictor $\predRV_0$ are \textbf{mutually calibrated} if there exists a joint distribution $\prob_{\predRV_0, \predRV_1, \staterv}$ extending $\prob_{\predRV_0, \predRV_1,}$ such that both predictors $\predRV_0$ and $\predRV_1$ are simultaneously calibrated with respect to the outcome $\staterv$.
\end{definition}

We will demonstrate that strict improvement is achievable if and only if the two predictors violate mutual calibration condition (\Cref{thm:main_result}).

\paragraph{Connection to the No-Trade Theorem} 
If two predictors are mutually calibrated, they can be interpreted as Bayesian agents updating their beliefs from a common prior distribution, where the common joint distribution $\prob_{\predRV_0, \predRV_1, \staterv}$ serves as the shared prior distribution. The celebrated No-Trade Theorem \cite{milgrom1982information} establishes that under the common prior assumption, no external agent can construct a ``zero-trade'' contract that yields a risk-free profit against them. Recent work in information economics characterizes the space of \emph{reference-compatible} joint distributions $\prob_{\predRV_0, \predRV_1}$ through this lens, aligning with the definition of mutual calibration \citep{arieli2021feasible, morris2020no}. Their results imply that the mutual calibration condition can be viewed as providing a \textbf{no-arbitrage guarantee}. A failure of mutual calibration implies the existence of arbitrage opportunity, making an improvement in predictions possible. We provide a formal mapping between the maximal arbitrage space and specific calibration errors in \Cref{sec:arbitrage}.


\subsection{Algorithm: Projection to Reference-Compatible Set}
When mutual calibration does not hold, an algorithm can be designed to exploit these discrepancies to systematically decrease forecasting loss.

Given a joint distribution between the primary and reference predictors, the algorithm has two steps: first, define a \textbf{reference-compatible set} of predictors as the set of transformations of the primary predictor that are mutually calibrated with the reference. Second, \textbf{project} the primary predictor onto this feasible set by finding the closest reference-compatible predictor in terms of Bregman divergence.

\subsubsection{Step 1: The Reference-Compatible Set}


Let $\calmapsp=\{\calmap\mid\calmap:\predsp_1\mapsto\Delta(\statesp)\}$ denote the space of all functions mapping the primary predictions to a full prediction over outcome space. 

Given the joint distribution $\prob_{\predRV_0, \predRV_1}$, the reference-compatible set of transformation map is defined as the set of predictors that are mutually calibrated with the reference. 
\[
\calmapsp_{\mathrm{comp}}=\Big\{
h\in\calmapsp \mid h(\predRV_1) \text{ and } \predRV_0 \text{ are mutually calibrated}
\Big\}.
\]

    \reducespace{-2mm}
We restrict the reference-compatible set to mutually calibrated transformations because we will show that any violation of the mutual calibration condition allows for a further reduction in forecast loss. 

The reference-compatible set is specified by linear constraints over the joint distribution $\prob_{h(\predRV_0), \predRV_1, \staterv}$, implying the convexity of the set.
\begin{align*}
    \expect{\prob_{\predRV_0, h(\predRV_1), \staterv}}{\staterv \mid h(\predRV_1)=\pred}=\pred\\
    \expect{\prob_{\predRV_0, h(\predRV_1), \staterv}}{\staterv \mid \predRV_0=\pred}=\pred
\end{align*}

\begin{observation}
The reference-compatible set $\calmapsp_{\mathrm{comp}}$ is a convex set.
\end{observation}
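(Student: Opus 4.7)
My plan is to realize $\calmapsp_{\mathrm{comp}}$ as an affine image of a convex polytope of joint distributions, from which convexity is automatic.

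I would first define the auxiliary set
\[
\mathcal{M} = \bigl\{\mu \in \Delta(\predsp_0 \times \predsp_1 \times \statesp) : \mu_{\predRV_0, \predRV_1} = \prob_{\predRV_0, \predRV_1},\ \E_\mu[\staterv \mid \predRV_0 = q_0] = q_0 \ \forall q_0 \in \predsp_0\bigr\},
\]
which is cut out by linear equalities in $\mu$ and is therefore a convex polytope. For each $\mu \in \mathcal{M}$, define the canonical transformation $h_\mu : \predsp_1 \to \Delta(\statesp)$ by $h_\mu(q_1)(y) := \Pr_\mu(\staterv = y \mid \predRV_1 = q_1)$. Note that $h_\mu \in \calmapsp_{\mathrm{comp}}$, since $\mu$ itself is a witness under which $\predRV_0$ is calibrated by construction and $h_\mu(\predRV_1)$ is calibrated by the tower property.

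Next, I would argue that every $h \in \calmapsp_{\mathrm{comp}}$ equals some $h_\mu$ with $\mu \in \mathcal{M}$. Given a witness joint distribution for the mutual calibration of $h(\predRV_1)$ and $\predRV_0$, I would lift it to the finer space $(\predRV_0, \predRV_1, \staterv)$ so that all marginals and calibration constraints are preserved, and so that the posterior of $\staterv$ given $\predRV_1$ matches $h$ pointwise. This identifies $\calmapsp_{\mathrm{comp}}$ with $\{h_\mu : \mu \in \mathcal{M}\}$. Because $\Pr_\mu(\predRV_1 = q_1) = \Pr(\predRV_1 = q_1)$ is constant over $\mu \in \mathcal{M}$, the denominator is fixed and $h_\mu(q_1)(y) = \Pr_\mu(\staterv = y, \predRV_1 = q_1) / \Pr(\predRV_1 = q_1)$ is a linear functional of $\mu$; hence $\mu \mapsto h_\mu$ is affine. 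Convexity of $\calmapsp_{\mathrm{comp}}$ then follows because affine images of convex sets are convex.

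The main obstacle is establishing the pointwise identification $h = h_\mu$ in the second step. The mutual calibration condition only stipulates the level-set average $\E_\mu[\staterv \mid h(\predRV_1) = p] = p$, whereas matching $h = h_\mu$ requires the stronger pointwise equality $\E_\mu[\staterv \mid \predRV_1 = q_1] = h(q_1)$ for all $q_1$. I would bridge this gap by a within-level-set redistribution: for each level set $L = h^{-1}(p)$, adjust the conditional $\mu(\staterv \mid \predRV_0, \predRV_1)$ across $q_1 \in L$ so that each individual conditional becomes $p$, while preserving the level-set-average structure so that the $(\predRV_0, \predRV_1)$-marginal is unchanged and $\predRV_0$-calibration (which only constrains the aggregate over $\predRV_1$) is untouched.
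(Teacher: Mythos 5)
Your first step is sound and is in substance the paper's own argument: the paper's appendix proof also works with joint distributions (kernels) over $(\predRV_0,\predRV_1,\staterv)$ having the fixed marginal $\prob_{\predRV_0,\predRV_1}$, in which the reference is calibrated and the transformed predictor's value is the pointwise posterior $\E_\mu[\statervvec\mid \predRV_1=\pred_1]$, and it deduces convexity by mixing two such kernels and using linearity of conditional expectation under the fixed marginal (your ``affine image of the polytope $\mathcal{M}$'' phrasing is the same computation). The gap is exactly where you located it, and your proposed bridge does not work. The definition of mutual calibration only constrains the level-set averages $\E[\statervvec\mid \calmap(\predRV_1)=\pred]=\pred$, and this is genuinely weaker than the pointwise condition $\E_\mu[\statervvec\mid \predRV_1=\pred_1]=\calmap(\pred_1)$ needed for $\calmap=h_\mu$: the ``within-level-set redistribution'' cannot in general preserve the calibration of $\predRV_0$, because $\predRV_0$ may be correlated with $\predRV_1$ inside a level set, so the $\predRV_0$-constraints do see how mass is allocated across the $\pred_1$'s in that level set.

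Concretely, take a binary outcome, $\predsp_1=\{a,b\}$, and let $(\predRV_0,\predRV_1)$ be perfectly correlated with $(\predRV_0,\predRV_1)=(0.2,a)$ and $(0.8,b)$ each with probability $1/2$, and let $\calmap(a)=\calmap(b)=0.5$. This $\calmap$ satisfies the level-set definition: choosing $\E[\staterv\mid \predRV_0=0.2]=0.2$ and $\E[\staterv\mid \predRV_0=0.8]=0.8$ calibrates the reference and gives $\E[\staterv\mid \calmap(\predRV_1)=0.5]=0.5$. But no $\mu\in\mathcal{M}$ can have $\E_\mu[\staterv\mid \predRV_1=a]=0.5$, since conditioning on $\predRV_1=a$ is the same event as $\predRV_0=0.2$ and reference calibration forces that conditional mean to be $0.2$. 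Hence your claimed identification $\calmapsp_{\mathrm{comp}}=\{h_\mu:\mu\in\mathcal{M}\}$ fails under the level-set reading of the definition, and your argument only proves convexity of the (possibly strictly smaller) set $\{h_\mu:\mu\in\mathcal{M}\}$. The fix is not a cleverer redistribution but a choice of definition: if reference-compatibility is taken (as the paper's appendix implicitly does, and as the rest of the paper's analysis uses) to mean existence of a kernel with the \emph{pointwise} posterior condition, then your second step is unnecessary and the remainder of your proof is correct and coincides with the paper's; if one insists on the literal level-set definition, the equivalence you assert is false and the observation would need a different treatment.
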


Convexity is vital because it guarantees that a strictly improving transformation exists and can be found efficiently using standard optimization techniques.

\subsubsection{Step 2: Bregman Projection}

Our algorithm improves the primary predictor by projecting it onto the set of reference-compatible transformations. To define this projection, we require a notion of ``distance''. Since we aim to reduce proper loss, the natural choice is the Bregman divergence generated by the proper loss $\score$ \cite{savage1971elicitation}. Formally, the divergence between a true distribution $\truepred$ and a prediction $\pred$ is the expected regret:
\begin{equation*}
D_{\score}(\truepred, \pred) = \E_{Y \sim \truepred}[\score(\pred, Y) - \score(\truepred, Y)].
\end{equation*}
We seek the reference-compatible transformation $\calmap \in \calmapsp_{\mathrm{comp}}$ that remains closest to the primary predictor $\predRV_1$. Crucially, this projection is equivalent to minimizing the expected divergence (or loss gain) relative to the primary predictor:
\begin{align}\label{eq:bregproj}
    \calmap^* &= \operatorname*{argmin}_{\calmap \in \calmapsp_{\mathrm{comp}}} \E_{\prob_{\predRV_1}}\left[D_{\score}(\calmap(\predRV_1),\predRV_1)\right]  \\ 
    &= \operatorname*{argmin}_{\calmap \in \calmapsp_{\mathrm{comp}}} \E_{\prob_{\predRV_1}}\left[\E_{\staterv \sim \calmap(\predRV_1)}\left[\score(\predRV_1, \staterv) - \score(\calmap(\predRV_1), \staterv)\right]\right]\nonumber
\end{align}

Note that this formulation differs from a standard Bregman projection on the probability simplex because we operate on the space of transformations (functions). We demonstrate that this objective induces a valid Bregman divergence on this function space, thereby generalizing the standard projection framework.

\subsection{Theoretical Guarantees}

We show that the aforementioned projection-based transformation is strictly improving, guaranteeing a reduction in expected loss \emph{regardless of} the underlying ground-truth distribution $\prob_{\predRV_1,\predRV_2,\staterv}$. Specifically, we show that the reduction is precisely equal to the Bregman `distance' from the primary predictor to the reference-compatible set, by extending the definition of Bregman divergence to the functional space of transformations.

We impose mild conditions on proper loss. Every proper loss function $\score$ is linked to a convex generator function $\genFunc$, where $\genFunc(\pred)=-\E_{\state\sim\pred}[\score(\pred,\state)]$ represents the amount of uncertainty of a distribution. We focus on proper losses induced by strictly convex, differentiable generator functions, which we refer to as differentiable proper loss. While a rigorous derivation is deferred to \Cref{prelim: proper loss}, standard ones (such as the Brier or Log loss) are all induced by strictly convex, differentiable generator functions. 

\begin{restatable}{theorem}{mainResult}
\label{thm:main_result}
Let $\score$ be a differentiable, strictly proper loss. Consider a joint distribution over primary and reference predictors $\prob_{\predRV_0,\predRV_1}$. There exists a strictly improving transformation $\calmap$ if and only if the primary and reference predictors are not mutually calibrated.

Specifically, when they are not mutually calibrated, for any ground truth distribution $\prob_{\predRV_0,\predRV_1,\staterv}$ consistent with $\prob_{\predRV_0,\predRV_1}$ such that the reference predictor is calibrated, applying $\calmap^*$ 
reduces the expected loss by at least the divergence between the original forecast $\predRV_1$ and the reference-compatible set:
    \reducespace{-2mm}
\begin{align}
 &\E\big[\score(\predRV_1, \staterv)\big] - \E\big[\score(\calmap^*(\predRV_1), \staterv)\big] \nonumber\\
    \geq &
    \min_{\calmap \in \calmapsp_{\mathrm{comp}}} \E \big[D_{\score}( \calmap(\predRV_1),\predRV_1)\big].\nonumber
\end{align}
\end{restatable}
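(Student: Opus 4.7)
The plan is to reduce the loss gap to a Bregman divergence computation and then exploit first-order optimality of the projection $\calmap^*$. First, I would invoke the Savage representation: writing $\genFunc(\pred) = -\E_{\staterv\sim \pred}[\score(\pred,\staterv)]$ for the strictly convex, differentiable generator of $\score$, the identity $\E_{\staterv\sim \truepred}[\score(\pred,\staterv)] - \E_{\staterv\sim \truepred}[\score(\truepred,\staterv)] = D_\genFunc(\truepred,\pred)$ lets me rewrite
\[
\E[\score(\predRV_1,\staterv)] - \E[\score(\calmap^*(\predRV_1),\staterv)] = \E\bigl[D_\genFunc(\truepred,\predRV_1)\bigr] - \E\bigl[D_\genFunc(\truepred,\calmap^*(\predRV_1))\bigr],
\]
where $\truepred := \prob_{\staterv \mid \predRV_0, \predRV_1}$ is the (unknown) true conditional distribution, viewed as a $\sigma(\predRV_0,\predRV_1)$-measurable random element of $\simplex$. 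Note $D_\score = D_\genFunc$ by construction.

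Next I would apply the three-point identity for Bregman divergences,
\[
D_\genFunc(\truepred,\predRV_1) - D_\genFunc(\truepred,\calmap^*(\predRV_1)) = D_\genFunc(\calmap^*(\predRV_1),\predRV_1) + \bigl\langle \nabla \genFunc(\calmap^*(\predRV_1)) - \nabla \genFunc(\predRV_1),\, \truepred - \calmap^*(\predRV_1)\bigr\rangle,
\]
whose first term, after taking expectation, is the quantity $\E[D_\score(\calmap^*(\predRV_1),\predRV_1)] = \min_{\calmap\in\calmapsp_{\mathrm{comp}}}\E[D_\score(\calmap(\predRV_1),\predRV_1)]$ we want to lower-bound by. It remains to show the expected cross term is non-negative. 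The key move is to introduce the Bayes map $\calmap_{\mathrm{true}}(\pred_1) := \E[\staterv \mid \predRV_1 = \pred_1]$: under the ground-truth joint, $\calmap_{\mathrm{true}}(\predRV_1)$ is calibrated by construction (tower property) and $\predRV_0$ is calibrated by hypothesis, so $\calmap_{\mathrm{true}} \in \calmapsp_{\mathrm{comp}}$. First-order optimality of the Bregman projection on the convex set $\calmapsp_{\mathrm{comp}}$ then gives
\[
\E\bigl[\langle \nabla \genFunc(\calmap^*(\predRV_1)) - \nabla \genFunc(\predRV_1),\, \calmap_{\mathrm{true}}(\predRV_1) - \calmap^*(\predRV_1)\rangle\bigr] \geq 0,
\]
and since $\calmap^*(\predRV_1)$ and $\predRV_1$ are $\sigma(\predRV_1)$-measurable, the tower property lets me replace $\calmap_{\mathrm{true}}(\predRV_1) = \E[\truepred \mid \predRV_1]$ by $\truepred$ inside that expectation. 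This matches the cross term and closes the inequality.

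For the biconditional: if the two predictors are not mutually calibrated then the identity transformation is absent from $\calmapsp_{\mathrm{comp}}$, so strict convexity of $\genFunc$ forces $\min_{\calmap\in\calmapsp_{\mathrm{comp}}}\E[D_\score(\calmap(\predRV_1),\predRV_1)] > 0$, yielding strict improvement under \emph{every} admissible ground truth. Conversely, if they are mutually calibrated, I would pick a witnessing joint under which both $\predRV_0$ and $\predRV_1$ are calibrated; strict properness then makes $\predRV_1$ the unique Bayes forecast given $\predRV_1$, so any non-identity $\calmap$ strictly worsens the loss under that distribution, precluding a transformation that weakly improves for all admissible ground truths while strictly improving for some.

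\textbf{Main obstacle.} The subtle step is controlling the cross term. Although $\truepred$ depends on the hidden ground-truth distribution and is not available to the algorithm, conditioning on $\predRV_1$ collapses it to the deterministic Bayes map $\calmap_{\mathrm{true}}$, which the calibration assumption on $\predRV_0$ certifies as a member of $\calmapsp_{\mathrm{comp}}$. This lets first-order optimality of $\calmap^*$ turn an inner product we cannot evaluate into one we can lower-bound by zero. Verifying that the marginal $\prob_{\predRV_0,\calmap_{\mathrm{true}}(\predRV_1)}$ induced by the truth agrees with the marginal that $\calmap_{\mathrm{true}}$ generates from $\prob_{\predRV_0,\predRV_1}$ is routine since $\calmap_{\mathrm{true}}$ is deterministic in $\predRV_1$.
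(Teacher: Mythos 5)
Your proposal is correct and follows essentially the same route as the paper: rewrite the loss gap as a difference of Bregman divergences from the Bayes map, note that the Bayes map $\calmap_{\mathrm{true}}(\predRV_1)=\E[\statervvec\mid\predRV_1]$ lies in $\calmapsp_{\mathrm{comp}}$ because the reference is calibrated, and invoke the projection structure of $\calmap^*$, with the same two-case argument for the biconditional. The only cosmetic difference is that you unpack the generalized Pythagorean inequality into its proof (three-point identity plus the first-order variational inequality, followed by the tower property to pass from $\truepred=\prob_{\staterv\mid\predRV_0,\predRV_1}$ to $\calmap_{\mathrm{true}}(\predRV_1)$), whereas the paper applies that inequality as a black-box lemma on the lifted function space of transformations.
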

    \reducespace{-3mm}

The core of this proof is a generalized Pythagorean Theorem for Bregman Divergence defined on the functional space of transformations. The total error can be split cleanly into two parts: the unavoidable error and the ``distance'' between the current prediction and the reference-compatible set. The projected predictor will eliminate the ``distance'' error, and the improvement is lower bounded by this ``distance'' error. 

\paragraph{Assumption relaxation} We relax the assumption that the reference predictor is perfectly calibrated in \Cref{thm:main_result_relaxed_general}. Our analysis shows that the change in loss achieved by $\calmap^*$ decomposes into two competing terms: a guaranteed geometric ``distance'' gain and a potential miscalibration risk. The risk accounts for the cost of propagating the reference predictor's own errors. We establish that strictly positive improvement is maintained as long as the gain from alignment outweighs the reference's error.

\subsection{Extension to General Elicitable Properties}

The previous results assumed predictors output full probability distributions. We now extend this to the general setting where predictors report specific properties (e.g., a mean, a median, or an answer with a confidence score).

Let $\repRV_0, \repRV_1$ be reports corresponding to two (possibly distinct) properties $\Gamma_0$ and $\Gamma_1$. We generalize the definition of mutual calibration: two reports are mutually calibrated if there exist underlying predictive distributions $(\predRV_0, \predRV_1)$ that are themselves mutually calibrated, such that for each $i \in \{0,1\}$, $\repRV_i \in \Gamma_i(\predRV_i)$.

\paragraph{Generalized Algorithm}

Let the transformation $\calmap: \repsp \to \Delta(\statesp)$ map a report $\rep$ to a distribution $\pred \in \Delta(\statesp)$. We map $\pred$ to a distribution space instead of a report space, because this makes the output space a convex set. Consequently, the space of such transformations is also convex, facilitating efficient optimization. 

Given a fixed marginal distribution $\prob_{\repRV_0, \repRV_1}$, we define the generalized reference-compatible set $\calmapsp_{\mathrm{comp}}$ as the set of transformations $h$ such that the transformed report $h(\repRV_1)$ is mutually calibrated with the reference report $\repRV_0$:
\[
\calmapsp_{\mathrm{comp}}=\Big\{
\calmap \mid
\calmap(\repRV_1) \text{ and } \repRV_0 \text{ are mutually calibrated } 
\Big\}.
\]

\begin{restatable}{observation}{convexobs} \label{obs:convex}
$\calmapsp_{\mathrm{comp}}$ is a convex set.
\end{restatable}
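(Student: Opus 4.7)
My plan is to reduce the claim to the convexity of a set of joint distributions satisfying linear calibration constraints, mirroring the argument for the non-generalized case given earlier in the paper. First, I would re-express $\calmapsp_{\mathrm{comp}}$ via the generalized mutual calibration definition: $h \in \calmapsp_{\mathrm{comp}}$ iff there exists a joint $\mu$ on $(\predRV_0', \repRV_0, \repRV_1, Y)$ with (i) marginal $\prob_{\repRV_0, \repRV_1}$ on $(\repRV_0, \repRV_1)$, (ii) $\repRV_0 \in \Gamma_0(\predRV_0')$ $\mu$-a.s., (iii) $\mu(Y = y \mid \predRV_0' = p) = p(y)$, and (iv) $\mu(Y = y \mid h(\repRV_1) = p) = p(y)$ for every $p$ in the support of $h(\repRV_1)$. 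Since $h(\repRV_1)$ already lies in $\Delta(\statesp)$ and its property is the identity, condition (iv) is precisely the non-generalized calibration constraint applied to the pair $(\predRV_0', h(\repRV_1))$.

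Next, I would take $h_1, h_2 \in \calmapsp_{\mathrm{comp}}$ with witnesses $\mu_1, \mu_2$ and $\lambda \in [0,1]$, and define $h_\lambda(r_1) := \lambda h_1(r_1) + (1-\lambda) h_2(r_1)$, which lies in $\calmapsp$ since $\Delta(\statesp)$ is convex. The candidate witness is the linear mixture $\mu_\lambda := \lambda \mu_1 + (1-\lambda) \mu_2$. Conditions (i), (ii), and (iii) are linear in $\mu$ and so they survive mixing unchanged; only condition (iv) remains to verify for $h_\lambda$ under $\mu_\lambda$.

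The hard part will be exactly this calibration step, because the fibers $\{r_1 : h_\lambda(r_1) = p\}$ generally differ from the fibers of $h_1$ or $h_2$, so calibration does not transfer under direct mixing. To handle this, I would augment the sample space with a $\Delta(\statesp)$-valued random variable $P$ and a latent switch $S \in \{1,2\}$ with $\Pr(S = 1) = \lambda$, and set $P := h_S(\repRV_1)$; in the resulting augmented joint, $P$ inherits calibration with respect to $Y$ from the two witnesses, and $\E[P \mid \repRV_1 = r_1] = h_\lambda(r_1)$. I would then express $\calmapsp_{\mathrm{comp}}$ as the image of the convex set of such augmented joints under the linear map $\mu \mapsto h_\mu$, where $h_\mu(r_1) := \E_\mu[P \mid \repRV_1 = r_1]$, so that convexity of $\calmapsp_{\mathrm{comp}}$ follows because the linear image of a convex set is convex. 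The step requiring the most care is the converse inclusion, namely showing that every $h \in \calmapsp_{\mathrm{comp}}$ can indeed be realized as such a conditional expectation; I would verify this by starting from any weak witness for $h$ and constructing a suitable augmented joint in which the fiber-averaged calibration of $h$ is promoted to a pointwise calibration of $P$.
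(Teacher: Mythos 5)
There is a genuine gap, and it sits exactly where your reduction hands off to the ``image of a convex set'' step. Your augmented construction does produce, for the mixture $\calmap_\lambda=\lambda \calmap_1+(1-\lambda)\calmap_2$, a joint law carrying a randomized predictor $P=h_S(\repRV_1)$ that is calibrated with respect to $\staterv$ and satisfies $\E[P\mid \repRV_1=r_1]=\calmap_\lambda(r_1)$ (here you implicitly use that $\mu_1,\mu_2$ share the same $(\repRV_0,\repRV_1)$-marginal, so $S$ is independent of $\repRV_1$; that part is fine). But to conclude $\calmap_\lambda\in\calmapsp_{\mathrm{comp}}$ you need the inclusion ``image $\subseteq\calmapsp_{\mathrm{comp}}$'': that whenever some calibrated \emph{randomized} $P$ with reference calibration has $\calmap(\cdot)=\E_\mu[P\mid\repRV_1=\cdot]$, the \emph{deterministic} predictor $\calmap(\repRV_1)$ is itself mutually calibrated with $\repRV_0$ under some extension. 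This is the actual crux and you never argue it. Calibration of $P$ does not give calibration of $\E[P\mid\repRV_1]$ under the same joint (a two-point example with $P$ depending on extra randomness already breaks it, since $\E[\statervvec\mid P]=P$ says nothing about $\E[\statervvec\mid\repRV_1]$), and the obvious repair --- redefining the outcome kernel as $\kernel'(\cdot\mid \rep_0,\rep_1):=\E_\mu[P\mid\rep_0,\rep_1]$ so that $\calmap_\lambda$ becomes calibrated --- can destroy the reference's calibration, because fiber-level calibration of $h_i(\repRV_1)$ under $\mu_i$ does not control $\E_{\mu_i}[h_i(\repRV_1)\mid\repRV_0]$. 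You flag the \emph{converse} inclusion ($\calmapsp_{\mathrm{comp}}\subseteq$ image) as the delicate one, but that direction is immediate: given a witness for $\calmap$, take $P:=\calmap(\repRV_1)$ deterministically. So the hard direction is misidentified and left open, and attempts to fix it by strengthening the augmented set (e.g.\ requiring $\E[\statervvec\mid\repRV_1,P]=P$) just move the same difficulty --- converting fiber-averaged calibration into per-$\repRV_1$-value calibration while preserving the reference constraint --- to another step, where your switch construction no longer qualifies.

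The paper avoids this entirely by characterizing reference-compatibility through the pointwise condition: $\calmap$ is compatible iff there is a kernel $\kernel(\cdot\mid\rep_0,\rep_1)$ (the $(\repRV_0,\repRV_1)$-marginal being fixed) with $\calmap(\rep_1)=\E_{\kernel}[\statervvec\mid\repRV_1=\rep_1]$ for every $\rep_1$ and $\E_{\kernel}[\statervvec\mid\repRV_0=\rep_0]\in\Gamma_0^{-1}(\rep_0)$ for every $\rep_0$. With the marginal fixed, all these conditional expectations are linear in the kernel, so mixing the two witness kernels $\kernel^{(\alpha)}=\alpha\kernel^{(a)}+(1-\alpha)\kernel^{(b)}$ immediately yields a witness for $\alpha\calmap^{(a)}+(1-\alpha)\calmap^{(b)}$: the primary-side constraint is exactly the convex combination, and the reference-side constraint is preserved because level sets $\Gamma_0^{-1}(\rep_0)$ of an elicitable property are convex. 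No randomization, latent switch, or image-of-a-convex-set argument is needed; if you want to keep your fiber-based formulation, you must additionally prove the de-randomization/promotion step above, which is precisely what your proposal is missing.
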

The proof relies on the fact that the level set of an elicitable property, $\Gamma^{-1}(\rep):=\{\pred\in\Delta(\statesp)|\rep\in\Gamma(\pred)\}$, is convex.

To extend our results beyond standard probabilities to general elicitable properties (such as the mode or median), we first \emph{overload} the proper loss function $\score$. This allows us to evaluate the loss of a full distribution $\pred$ by averaging the loss over the set of reports consistent with that distribution, denoted by $\Gamma(\pred)$. We define the overloaded loss as $\score(\pred, \theta) := \frac{1}{|\Gamma(\pred)|} \sum_{\rep \in \Gamma(\pred)} \score(\rep, \theta)$\footnote{This set average assumes that $\Gamma(\pred)$ is finite. When $\Gamma(\pred)$ is not finite, the overloaded loss can be defined as the integral with a uniform distribution over $\Gamma(\pred)$.}. This definition bridges the gap between the distribution space and the report space; reporting the distribution $\pred$ is equivalent to reporting a value drawn uniformly at random from the level set $\Gamma(\pred)$. 

Analogous to the standard setting, we define \(\calmap^*\) as: \begin{align}\label{eq:bregprojextension}\calmap^* &= \operatorname*{argmin}_{\calmap \in \calmapsp_{\mathrm{comp}}} \E_{\prob_{\predRV_1}}\left[D_{\score}(\calmap(\repRV_1),\repRV_1)\right]  \\ &= \operatorname*{argmin}_{\calmap \in \calmapsp_{\mathrm{comp}}} \E_{\prob_{\predRV_1}}\left[\E_{\staterv \sim \calmap(\repRV_1)}\left[\score(\repRV_1, \staterv) - \score(\calmap(\repRV_1), \staterv)\right]\right] \nonumber \end{align}

    \reducespace{-3mm}
\paragraph{Generalized Theorem}
We formally show that when the generator function $\genFunc$ of $\score$ is differentiable on \(\calmap^*\), we can obtain strictly improving results analogous to the main theorem. If $\genFunc$ is not differentiable on \(\calmap^*\) or \(\calmap^*\) does not exist, we handle this more challenged corner case via $\epsilon$-randomized projections, in \Cref{app:elicitable}. We treat the non-smooth problem as a minimax game where nature chooses the distribution $\prob_{\predRV_0,\predRV_1,\staterv}$ and the player chooses $\calmap^*$. The core of the analysis rests on the application of the minimax theorem over a properly discretized space.



\begin{table*}[!t]
    \centering
    \caption{\textbf{Main results on MMLU-Redux and CommonSenseQA.} 
    \textbf{Bold} and \underline{underline} denote the \textbf{best} and \underline{second-best} results among \emph{label-free} methods, while \dag indicates the \emph{global} best across all methods. Accuracy for Supervised Baselines is marked with \textit{--} as these post-hoc methods do not alter final predictions.}
    \label{tab:main_results}
    \resizebox{\textwidth}{!}{
    \begin{tabular}{l|cccc|cccc}
        \toprule
        \multirow{2}{*}{\textbf{Methods}} & \multicolumn{4}{c|}{\textbf{MMLU-Redux}} & \multicolumn{4}{c}{\textbf{CommonSenseQA}} \\
         & \textbf{ACC} ($\uparrow$) & \textbf{BS} ($\downarrow$) & \textbf{ECE} ($\downarrow$) & \textbf{CL} ($\downarrow$) & \textbf{ACC} ($\uparrow$) & \textbf{BS} ($\downarrow$) & \textbf{ECE} ($\downarrow$) & \textbf{CL} ($\downarrow$) \\
        \midrule
        \multicolumn{9}{c}{\textbf{Qwen3-8B}} \\
        \midrule
        Base Model ($\predRV_0$) & 77.35\% & \textbf{0.1229} & \textbf{0.0326}\dag & $-$0.6506 & \underline{82.88\%} & \textbf{0.1166}\dag & \underline{0.0601} & \textbf{$-$0.7122}\dag \\
        Instruct Model ($\predRV_1$) & \textbf{83.28\%}\dag & 0.1337 & 0.1250 & $-$0.6991 & \underline{82.88\%} & 0.1430 & 0.1385 & $-$0.6859 \\
        \cellcolor{gray!10} TS \textit{(Supervised)} & \cellcolor{gray!10}\textit{--} & \cellcolor{gray!10}0.1116\dag & \cellcolor{gray!10}0.0584 & \cellcolor{gray!10}$-$0.7212\dag & \cellcolor{gray!10}\textit{--} & \cellcolor{gray!10}0.1166\dag & \cellcolor{gray!10}0.0524 & \cellcolor{gray!10}$-$0.7122\dag \\
        \textbf{Ours} \textit{(Label-Free)} & \textbf{83.28\%}\dag & \underline{0.1232} & \underline{0.0659} & \textbf{$-$0.7096}& \textbf{83.29\%}\dag & \underline{0.1291} & \textbf{0.0389}\dag & \underline{$-$0.7038} \\
        \midrule
        \multicolumn{9}{c}{\textbf{LLaMA-3.1-8B}} \\
        \midrule
        Base Model ($\predRV_0$) & 65.42\% & \textbf{0.1580}\dag & \textbf{0.0176}\dag & \underline{$-$0.4962} & 70.93\% & \underline{0.1698} & \underline{0.0342} & $-$0.5395 \\
        Instruct Model ($\predRV_1$) &  \textbf{71.74\%}\dag & 0.2450 & 0.2417 & $-$0.4724 & \underline{78.62\%} & 0.2030 & 0.1999 & \underline{$-$0.5832} \\
        \cellcolor{gray!10} TS \textit{(Supervised)} & \cellcolor{gray!10}\textit{--} & \cellcolor{gray!10}0.1874 & \cellcolor{gray!10}0.0352 & \cellcolor{gray!10}$-$0.5299\dag & \cellcolor{gray!10}\textit{--} & \cellcolor{gray!10}0.1749 & \cellcolor{gray!10}0.0935 & \cellcolor{gray!10}$-$0.6114 \\
        \textbf{Ours} \textit{(Label-Free)} & \textbf{71.74\%}\dag & \underline{0.2153} & \underline{0.1679} & \textbf{$-$0.5022} & \textbf{78.71\%}\dag & \textbf{0.1657}\dag & \textbf{0.0290}\dag & \textbf{$-$0.6214}\dag \\
        \midrule
        \multicolumn{9}{c}{\textbf{Ministral-3-8B}} \\
        \midrule
        Base Model ($\predRV_0$) & 75.63\% & \textbf{0.1278}\dag & \textbf{0.0534}\dag & \underline{$-$0.6285} & 70.27\% & \underline{0.1960} & \underline{0.1388} & $-$0.5067 \\
        Instruct Model ($\predRV_1$) & \underline{79.21\%} & 0.1720 & 0.1628 & $-$0.6201 & \underline{74.04\%} & 0.2041 & 0.1682 & \underline{$-$0.5363} \\
        \cellcolor{gray!10} TS \textit{(Supervised)} & \cellcolor{gray!10}\textit{--} & \cellcolor{gray!10}0.1571 & \cellcolor{gray!10}0.0546 & \cellcolor{gray!10}$-$0.6350 & \cellcolor{gray!10}\textit{--} & \cellcolor{gray!10}0.1899 & \cellcolor{gray!10}0.0740 & \cellcolor{gray!10}$-$0.5505 \\
        \textbf{Ours} \textit{(Label-Free)} & \textbf{80.52\%}\dag & \underline{0.1519} & \underline{0.0664} & \textbf{$-$0.6534}\dag & \textbf{74.30\%}\dag & \textbf{0.1777}\dag & \textbf{0.0295}\dag & \textbf{$-$0.5653}\dag \\
        \bottomrule
    \end{tabular}
    }
\end{table*}

\section{Experimental Setup}
\label{sec:experiments}

\textbf{Datasets.}
We use two multiple-choice question answering benchmarks: \textbf{MMLU-Redux}~\cite{gema2025we} and \textbf{CommonsenseQA}~\cite{talmor2019commonsenseqa}.

\textbf{Models.}
We employ representative open-source large langauge models (LLMs): \texttt{QWen3-8B}~\cite{yang2025qwen3}, \texttt{LLaMA-3.1-8B}~\cite{grattafiori2024llama}, and \texttt{Ministral-3-8B}~\cite{liu2026ministral}. We also conduct experiments on models of varying sizes. Refer to Appendix~\ref{app:additional results} for details. Consistent with recent findings \cite{achiam2023gpt,leng2025taming}, we observe a trade-off introduced by alignment techniques:
\begin{itemize}
    \reducespace{-3mm}
    \item \textbf{Primary Predictor ($\predRV_1$):} We select the \textbf{Instruct}-versions, which offer superior instruction-following and accuracy but often suffer from miscalibration.
        \reducespace{-2mm}
    \item \textbf{Reference Predictor ($\predRV_0$):} We select the corresponding \textbf{Base}-versions, which retain the better-calibrated probability distributions of the pre-training corpus.
\end{itemize}
    \reducespace{-3mm}
Our results (Table~\ref{tab:main_results}) confirms that \textbf{Base} models consistently yield lower ECE than \textbf{Instruct} models, satisfying our framework's requirements in Section~
\ref{sec:problem_statement}.

\textbf{Implementation Details.}
While our framework is compatible with any proper loss, LLM applications typically involve eliciting an (answer, confidence) pair. Therefore, we specifically instantiate our optimization objective using the \textbf{Confidence Loss (CL)} defined in Eq.~\ref{eq:proper loss}, which is designed to truthfully elicit such pairs. 
We extract the (answer, confidence) pair through the normalized log-probabilities provided by open-source LLMs following a zero-shot prompt. We approximate the joint distribution $\prob_{\repRV_0, \repRV_1}$ through the empirical distribution on the test set. Since each report contains a confidence score—a continuous variable in $[0,1]$, we discretize the confidence scores with a step size of $1\%$. Refer to Appendix~\ref{sec:implementation details} for more details.



\textbf{Supervised Baselines.}
We compare our method against several established post-hoc calibration techniques. While our method is \textbf{label-free}, the baselines are \textbf{supervised}, requiring an extra dataset split. In this section, we show comparison with \textbf{Temperature Scaling (TS)}~\cite{guo2017calibration}, which optimizes a scalar temperature $T$ to minimize Negative Log Likelihood (NLL) on the validation set. Further details and other baselines are provided in Appendix~\ref{sec:baselines details}.

\textbf{Evaluation Metrics.}
We report evaluations from three perspectives: 1) \textbf{Prediction Accuracy (Acc)} for LLMs capabilities 2) \textbf{Expected Calibration Error (ECE)}~\cite{guo2017calibration} with $B=10$ bins and \textbf{Brier Score (BS)}~\cite{brier1950verification} as standard calibration metrics 3) the instantiated \textbf{Confidence Loss (CL)} in Eq.~\ref{eq:proper loss} as the optimization objective.

\section{Results}
\label{sec:results}

Table~\ref{tab:main_results} summarizes our main experimental results. Additional results are provided in Appendix~\ref{app:additional results}.

\textbf{Significant Reduction in Calibration Error.}
Our method achieves substantial reductions in calibration error compared to Instruct models across all benchmarks, with improvements exceeding $8\%$ in Brier Score (BS) and $30\%$ in Expected Calibration Error (ECE). Notably, despite being label-free, our approach demonstrates performance competitive with supervised baselines (TS). On CommonSenseQA with \texttt{Ministral-3-8B}, we achieve a global best ECE of $0.0295$. These gains correlate directly with the optimization of Confidence Loss (CL), confirming that our framework effectively minimizes the proper loss to balance prediction correctness and confidence reliability.



\textbf{Preservation/Marginal Gains of Accuracy.}
Unlike standard post-hoc calibration methods that only modify confidence and thus fix accuracy (marked with \textit{--} in Table~\ref{tab:main_results}), our method re-estimates the full probability vector, allowing for corrections in the final answer. Results indicate that our approach effectively preserves the high predictive accuracy of the Instruct models. Furthermore, on MMLU-Redux with \texttt{Ministral-3-8B}, we observe an accuracy improvement from 79.21\% to 80.52\%, suggesting that leveraging the Base model may correct instances where the Instruct model is confidently incorrect.

\section{Discussion}
\label{sec:discussion}

\textbf{Limitations.}
Our framework relies on two assumptions. First, it assumes the reference predictor is well-calibrated. While pre-trained Base models generally satisfy this, non-negligible miscalibration could weaken our theoretical guarantees. Second, accurate estimation of the joint distribution between two predictors requires sufficient unlabeled data, which may be challenging for rare events. Additionally, although our framework is compatible with various proper losses and elicitable properties, there is no universal transformation optimal for all proper losses simultaneously. 

\textbf{Future Work.}
Currently, our approach requires a batch of unlabeled data for joint distribution estimation. A promising direction is extending this to online settings, allowing for dynamic updates as predictions arrive sequentially. Furthermore, while we validate our method on multiple-choice tasks, future work could adapt this framework to token-level text generation.

\bibliography{ref}
\bibliographystyle{abbrvnat}

\newpage
\appendix
\onecolumn

\section{Proofs}

In this section, we provide the proofs for the main result. We first introduce the necessary preliminaries linking strictly proper losses to Bregman divergences.

\subsection{More Preliminaries: Proper Loss and Bregman Divergence}\label{prelim: proper loss}

To analyze general proper losses, we utilize the property that any proper scoring rule is generated by a convex function, and the regret is equivalent to a Bregman divergence. 

\begin{lemma}[Convex Representation of Proper Losses]
A loss $\score$ is proper if and only if there exists a convex function 
$\genFunc: \Delta(\statesp) \to \mathbb{R}$ 
and a (possibly subgradient) mapping 
$\psi(\pred) \in \partial \genFunc(\pred)$
such that, for all forecasts $\pred$ and outcomes $\state$,
\[
    \score(\pred, \state)
    = 
    - \left(\genFunc(\pred)
    + \langle \psi(\pred), \mathbf{e}_{\state} - \pred \rangle\right),
\]
where $\mathbf{e}_{\state}$ is the one-hot vector corresponding to outcome $\state$. If $\score$ is strictly proper, $\genFunc$ is strictly convex.
\end{lemma}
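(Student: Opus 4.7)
The plan is to prove the classical Savage--McCarthy representation in both directions. I will define the generator as the (negative) Bayes risk
$\genFunc(\pred) := -\E_{\state\sim\pred}[\score(\pred,\state)]$,
which is exactly the quantity introduced earlier as the ``uncertainty'' associated with $\pred$, and then identify the subgradient mapping with the loss vector.

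For the $(\Leftarrow)$ direction, assume the representation holds. A direct computation gives
\begin{equation*}
\E_{\state\sim\truepred}[\score(\pred,\state)]
= -\genFunc(\pred) - \langle \psi(\pred), \truepred - \pred\rangle,
\end{equation*}
since $\E_{\state\sim\truepred}[\mathbf{e}_\state] = \truepred$. Subtracting this from the analogous expression at $\pred=\truepred$ yields
\begin{equation*}
\E_{\state\sim\truepred}[\score(\pred,\state) - \score(\truepred,\state)]
= \genFunc(\truepred) - \genFunc(\pred) - \langle \psi(\pred), \truepred - \pred\rangle,
\end{equation*}
which is nonnegative by the subgradient inequality for the convex $\genFunc$, and strictly positive for $\pred\neq\truepred$ when $\genFunc$ is strictly convex. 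This is exactly (strict) propriety.

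For the $(\Rightarrow)$ direction, set $\genFunc(\pred) := -\E_{\state\sim\pred}[\score(\pred,\state)]$. By propriety, for every fixed $\truepred$ the map $\pred\mapsto -\E_{\state\sim\truepred}[\score(\pred,\state)]$ is maximized at $\pred=\truepred$, so
\begin{equation*}
\genFunc(\truepred) = \sup_{\pred\in\simplex}\bigl\{ -\E_{\state\sim\truepred}[\score(\pred,\state)]\bigr\}
= \sup_{\pred\in\simplex}\Bigl\{-\sum_\state \truepred(\state)\,\score(\pred,\state)\Bigr\}.
\end{equation*}
Each function inside the supremum is linear in $\truepred$, so $\genFunc$ is convex as a pointwise supremum of affine functions. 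Since the supremum at $\truepred$ is attained by the affine function indexed by $\pred=\truepred$, the gradient of that affine map, namely $\psi(\truepred):=(-\score(\truepred,\state))_{\state\in\statesp}$, is a subgradient of $\genFunc$ at $\truepred$. Plugging this $\psi$ into $-\genFunc(\pred) - \langle \psi(\pred), \mathbf{e}_\state - \pred\rangle$ and simplifying recovers $\score(\pred,\state)$, establishing the representation. For the strict case, strict propriety gives
\begin{equation*}
\genFunc(\truepred) - \genFunc(\pred) - \langle \psi(\pred), \truepred - \pred\rangle
= \E_{\state\sim\truepred}[\score(\pred,\state) - \score(\truepred,\state)] > 0
\end{equation*}
for $\pred\neq\truepred$, which is precisely the strict subgradient inequality characterizing strict convexity of $\genFunc$.

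The main obstacle will be the subgradient bookkeeping at the boundary of $\simplex$: the loss vector $\psi(\pred)=(-\score(\pred,\state))_\state$ is only defined up to a direction orthogonal to the affine hull of the simplex, and at boundary points of $\simplex$ the relevant subdifferential is taken relative to the affine hull. I would handle this by working throughout in the affine hull and noting that the identity $\langle \psi(\pred), \mathbf{e}_\state - \pred\rangle$ is invariant to adding constants to $\psi(\pred)$, so the representation is well posed even when $\pred$ lies on the boundary and $\genFunc$ is only one-sided differentiable.
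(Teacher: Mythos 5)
Your proposal is correct and follows exactly the construction the paper relies on: the generator $\genFunc(\pred)=-\E_{\state\sim\pred}[\score(\pred,\state)]$ (the negative Bayes risk the paper already introduces in Section~3.3), the loss vector $(-\score(\pred,\state))_{\state}$ as the subgradient selection obtained from the supremum-of-affine-functions representation, and the subgradient inequality for the converse direction. The paper itself states this lemma as the classical Savage representation without giving a proof, so your argument supplies the standard one and is consistent with it, including the careful remark that the subgradient is only determined up to the affine hull of the simplex.
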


\begin{remark}[Regret as Bregman Divergence]\label{remark:bregman}
The expected regret of using prediction $\pred$ instead of the true probability $\truepred$ is given by the Bregman divergence defined by the generator $\genFunc$:
\[
\E_{\state \sim \truepred}[\score(\pred, \state)] - \E_{\state \sim \truepred}[\score(\truepred, \state)]
= 
\genFunc(\truepred) - \genFunc(\pred) - \langle \psi(\pred), \truepred -\pred \rangle
=: D_{\score}(\truepred,\pred ).
\]
\end{remark}

Similar as \citet{mohri2025coherence}, we extend this divergence to the space of transformations $\mathcal{H}$, but take expectations over the distribution of predictions of the reference. 
\begin{observation}\label{obs:extendbreg}
A Bregman divergence on $\Delta(\statesp)$ induces a Bregman divergence on the space of transformations $\mathcal{H}$:
\[
D_{\score}(\calmap,g) = \E_{\prob_{\predRV_1}}[D_{\score}(\calmap(\predRV_1),g(\predRV_1))].
\] If $\genFunc$ is strictly convex and differentiable on $\Delta(\statesp)$, then the induced divergence $D_{\score}(\cdot, \cdot)$ is also a Bregman divergence with a strictly convex and differentiable generator function.
\end{observation}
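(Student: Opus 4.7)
The plan is to exhibit a strictly convex, differentiable generator on $\calmapsp$ whose Bregman divergence coincides with the induced expectation. Since $\predsp_1$ is finite, I identify $\calmapsp$ with the product of simplices $\Delta(\statesp)^{|\predsp_1|}$, a convex subset of a finite-dimensional Euclidean space whose coordinates are the tuple entries $(\calmap(\pred))_{\pred \in \predsp_1}$. The natural candidate generator is
\[
\tilde{\genFunc}(\calmap) := \E_{\predRV_1 \sim \prob_{\predRV_1}}\!\big[\genFunc(\calmap(\predRV_1))\big] = \sum_{\pred \in \predsp_1} \prob_{\predRV_1}(\pred)\, \genFunc(\calmap(\pred)),
\]
a finite sum of convex functions, each acting on a disjoint block of coordinates. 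Assuming without loss of generality that $\prob_{\predRV_1}$ has full support on $\predsp_1$ (otherwise restrict $\calmapsp$ to the support, which is harmless since transformations that agree on the support induce the same distribution over predictions), each summand is strictly convex in its block by the strict convexity of $\genFunc$, so $\tilde{\genFunc}$ is strictly convex on the product. Differentiability follows termwise, with coordinate gradient $\nabla_{\calmap(\pred)} \tilde{\genFunc}(\calmap) = \prob_{\predRV_1}(\pred)\, \nabla \genFunc(\calmap(\pred))$.

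Next I would verify that $D_{\tilde{\genFunc}}$ equals the right-hand side of the observation. Equipping the product space with the natural inner product $\langle u, v \rangle = \sum_{\pred} \langle u(\pred), v(\pred) \rangle$, I expand
\[
D_{\tilde{\genFunc}}(\calmap, g) = \tilde{\genFunc}(\calmap) - \tilde{\genFunc}(g) - \langle \nabla \tilde{\genFunc}(g),\, \calmap - g \rangle.
\]
Since the weight $\prob_{\predRV_1}(\pred)$ factors linearly out of every term, this collapses to $\sum_{\pred} \prob_{\predRV_1}(\pred) \big[\genFunc(\calmap(\pred)) - \genFunc(g(\pred)) - \langle \nabla \genFunc(g(\pred)), \calmap(\pred) - g(\pred) \rangle\big]$, and each bracket is precisely $D_{\score}(\calmap(\pred), g(\pred))$ by \Cref{remark:bregman}. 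Rewriting the finite sum as an expectation over $\pred \sim \prob_{\predRV_1}$ recovers the claimed identity, confirming that the induced divergence is a genuine Bregman divergence with generator $\tilde{\genFunc}$.

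The only real obstacle is bookkeeping around atoms $\pred$ with $\prob_{\predRV_1}(\pred) = 0$: on such atoms $\tilde{\genFunc}$ is independent of the corresponding coordinate, breaking strict convexity globally. The cleanest resolution is the support restriction mentioned above, which is semantically natural because transformations differing only off-support cannot affect $\calmap(\predRV_1)$ in distribution and hence cannot change the projection objective in \eqref{eq:bregproj}. Aside from this bookkeeping, the argument relies only on linearity of expectation, finiteness of $\predsp_1$, and the hypotheses of strict convexity and differentiability on $\genFunc$.
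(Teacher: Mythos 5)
Your proposal is correct and follows essentially the same route as the paper: both take the weighted-sum generator $\sum_{\pred\in\predsp_1}\prob_{\predRV_1}(\pred)\,\genFunc(\calmap(\pred))$ on the product of simplices and observe that its Bregman divergence is the expected pointwise divergence. Your extra care about zero-mass atoms is sound but moot here, since the paper defines $\predsp_1$ as the (finite) support of $\predRV_1$, so every atom already has positive probability.
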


\begin{proof}[Proof of \Cref{obs:extendbreg}]

Let $\calmap, g \in \mathcal{H}$ be two transformations. By definition of the Bregman divergence $D_{\score}$ for any fixed point $\pred \in \Delta(\statesp)$, we have:
\[
D_{\score}(\calmap(\pred), g(\pred)) = \genFunc(\calmap(\pred)) - \genFunc(g(\pred)) - \langle \psi(g(\pred)), \calmap(\pred) - g(\pred) \rangle.
\]
By taking the expectation over the distribution of the random variable $\predRV_1$, we define the induced divergence on $\mathcal{H}$:
\begin{align*}
D_{\score}(\calmap, g) &= \E_{\prob_{\predRV_1}} \left[ D_{\score}(\calmap(\predRV_1), g(\predRV_1)) \right] \\
&= \E_{\prob_{\predRV_1}} \left[ \genFunc(\calmap(\predRV_1)) - \genFunc(g(\predRV_1)) - \langle \psi(g(\predRV_1)), \calmap(\predRV_1) - g(\predRV_1) \rangle \right].
\end{align*}

If $\genFunc$ is the generator for the point-wise divergence, the extended generator $\mathcal{G}: \mathcal{H} \to \mathbb{R}$ is defined as:$$\mathcal{G}(\calmap) = \sum_{\pred\in\mathcal{Q}_1} \genFunc(\calmap(\pred)) \prob_{\predRV_1}(\pred).$$ 

$\mathcal{Q}_1$ is the finite support of $\predRV_1$. Thus, if $\genFunc$ is strictly convex and differentiable on $\Delta(\statesp)$, then $\mathcal{G}$ is strictly convex and differentiable on $\mathcal{H}$.

\end{proof}

Crucial to our proof is the projection property of Bregman divergences. The Generalized Pythagorean Theorem is a standard property of Bregman divergences \citep{rockafellar1997convex}. 

\begin{lemma}[Generalized Pythagorean Theorem]
\label{lem:pythagorean}
Given a Bregman divergence \(D_{\score}(\cdot,\cdot)\) induced by a strictly convex, differentiable $G$, and any convex set \(\mathcal{C}\), define the Bregman projection of a point \(v_0\) onto \(\mathcal{C}\) as:
\[
\Pi_{\score,\mathcal{C}}(v) \;=\arg\min_{v\in \mathcal{C}} D_{\score}(v, v_0).
\]
Then, for any \(v \in \mathcal{C}\) and any point \(v_0\), the following inequality holds:
\[
D_{\score}(v, v_0) \;\ge\; \; D_{\score}(v,\Pi_{\score,\mathcal{C}}(v_0))+D_{\score}(\Pi_{\score,\mathcal{C}}(v_0),v_0) \;.
\]
\end{lemma}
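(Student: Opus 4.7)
The plan is to prove this classical result by combining the three-point identity for Bregman divergences with the first-order optimality condition for convex minimization. Let $G$ denote the strictly convex, differentiable generator of $D_{\score}$, and write $v^* := \Pi_{\score,\mathcal{C}}(v_0)$.

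First I would compute the discrepancy $D_{\score}(v, v_0) - D_{\score}(v, v^*) - D_{\score}(v^*, v_0)$ directly from the formula $D_{\score}(a,b) = G(a) - G(b) - \langle \nabla G(b),\, a - b\rangle$. After expansion, the $G(v)$ and $G(v^*)$ terms cancel and the $G(v_0)$ terms cancel, leaving the three-point identity
\[
D_{\score}(v, v_0) - D_{\score}(v, v^*) - D_{\score}(v^*, v_0) \;=\; \langle \nabla G(v^*) - \nabla G(v_0),\, v - v^* \rangle.
\]

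Second, I would apply the first-order variational inequality for the projection. Since $v^*$ minimizes the function $u \mapsto D_{\score}(u, v_0)$ on the convex set $\mathcal{C}$, and the gradient of this function at $u$ is $\nabla G(u) - \nabla G(v_0)$ (using differentiability of $G$), convex-minimization optimality gives, for every $v \in \mathcal{C}$,
\[
\langle \nabla G(v^*) - \nabla G(v_0),\, v - v^* \rangle \;\geq\; 0.
\]
Substituting this into the three-point identity yields the desired inequality $D_{\score}(v, v_0) \geq D_{\score}(v, v^*) + D_{\score}(v^*, v_0)$.

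Because the identity and the variational inequality are both routine, there is no genuine obstacle here. The only points requiring a little care are (i) ensuring that the $\arg\min$ defining $v^*$ is attained (one usually assumes $\mathcal{C}$ is closed, or that the minimum is achieved, which is implicit in the statement) and (ii) that $v^*$ lies in the interior of the domain of $G$ so that $\nabla G(v^*)$ makes sense — both of which are guaranteed by the standing assumptions that $G$ is differentiable on the relevant domain and the projection exists. Strict convexity of $G$ is not needed for the inequality itself (only for uniqueness of the projection), but since it is part of the hypothesis we get uniqueness for free.
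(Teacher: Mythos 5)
Your proof is correct and follows essentially the same route as the paper: the paper's own argument (given as Part (A) of its non-smooth generalization, Lemma \ref{lem:pythagorean_generalized}) likewise expands the three-point identity $D_{\score}(v,v_0)-D_{\score}(v,v^*)-D_{\score}(v^*,v_0)=\langle\nabla G(v^*)-\nabla G(v_0),\,v-v^*\rangle$ and concludes via the first-order optimality (variational inequality) condition for the Bregman projection. Your remarks on attainment of the $\arg\min$ and the role of strict convexity match the paper's implicit assumptions, so there is nothing to add.
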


\subsection{Proof of Theorem \ref{thm:main_result}}

We now restate and prove the main theorem.

\mainResult*

\begin{proof}[Proof of Theorem \ref{thm:main_result}]

Using the connection between proper losses and Bregman divergence (\Cref{remark:bregman}), and letting $h(\predRV_1) = \E[\statervvec|\predRV_1]$, we write the improvement gap as:
\begin{align*}
    &\E\big[\score(\predRV_1, \staterv)\big] - \E\big[\score(\calmap^*(\predRV_1), \staterv)\big] \\
    &\quad = \E\left[D_{\score}(h(\predRV_1), \predRV_1) - D_{\score}(h(\predRV_1), \calmap^*(\predRV_1))\right] \\
    &\quad = D_{\score}(h, \calmap_0) - D_{\score}(h, \calmap^*).
\end{align*}

Here $\calmap_0$ is the identical map that maps every prediction to itself. We use $\statevec, \statervvec \in \{0, 1\}^{|\statesp|}$ to denote the one-hot vector representations of state $\state$ and random state $\staterv$, respectively.

Because $h \in \mathcal{H}_{\mathrm{comp}}$ (since $h$ is the true conditional expectation), we apply the Generalized Pythagorean Theorem (\Cref{lem:pythagorean}) for Bregman projections, which states that $D_{\score}(h, \calmap_0) \geq D_{\score}(h, \Pi_{\score, \mathcal{H}_{\mathrm{comp}}}(\calmap_0)) + D_{\score}(\Pi_{\score, \mathcal{H}_{\mathrm{comp}}}(\calmap_0), \calmap_0)$. 

Recall that the optimal transformation is defined as:
\[
\calmap^* = \argmin_{\calmap \in \mathcal{H}_{\mathrm{comp}}} \E [D_{\score}(\calmap(\predRV_1), \predRV_1)].
\]
Let $\calmap_0$ be the identity map, $\calmap_0(\pred) = \pred,\forall \pred$. We can view $\calmap^*$ as the Bregman projection of $\calmap_0$ onto the convex set of reference-compatible maps $\mathcal{H}_{\mathrm{comp}}$ with respect to the divergence induced by $\score$:
\[
\calmap^* = \Pi_{\score, \mathcal{H}_{\mathrm{comp}}}(\calmap_0).
\]

Thus, we obtain:
\begin{align*}
    D_{\score}(h, \calmap_0) - D_{\score}(h, \calmap^*) &\geq D_{\score}(\calmap^*, \calmap_0) \\
    &= \min_{\calmap \in \mathcal{H}_{\mathrm{comp}}} \E [D_{\score}(\calmap(\predRV_1), \predRV_1)].
\end{align*}

We conclude by showing that the improvement is strictly positive if and only if $\predRV_1$ and $\predRV_2$ are not mutually calibrated.

\begin{itemize}
    \item \textbf{Case 1: Mutual Calibration ($\implies$ no improvement possible).} 
    If $\predRV_1$ and $\predRV_2$ are mutually calibrated, there exists a distribution $\prob$ under which $\predRV_1$ is perfectly calibrated (i.e., $\E_{\prob}[\staterv | \predRV_1] = \predRV_1$). In this case, the identity map $\calmap_0$ belongs to the reference-compatible set $\mathcal{H}_{\mathrm{comp}}$. 
    Suppose for contradiction that some $\calmap \in \mathcal{H}_{\mathrm{comp}}$ strictly improves the expected score. Since $\score$ is strictly proper, the unique minimizer of $\E_{\prob}[\score(\calmap(\predRV_1), \staterv)]$ is the perfectly calibrated predictor $\calmap(\predRV_1) = \E_{\prob}[\staterv | \predRV_1] = \predRV_1$. Thus, any $\calmap$ where $\calmap(\pred) \neq \pred$ for some $\pred$ would result in a strictly \emph{worse} score under $\prob$, contradicting the definition of a universal improvement.

    \item \textbf{Case 2: No Mutual Calibration ($\implies$ strictly positive improvement).} 
    If the predictors are not mutually calibrated, the identity map $\calmap_0$ is not in the reference-compatible set ($\calmap_0 \notin \mathcal{H}_{\mathrm{comp}}$). Because $\score$ is strictly proper, the Bregman divergence $D_{\score}(u, v)$ is zero if and only if $u = v$. Since every $\calmap \in \mathcal{H}_{\mathrm{comp}}$ must satisfy $\calmap \neq \calmap_0$, the divergence term $D_{\score}(\calmap^*, \calmap_0)$ in the Pythagorean inequality is strictly positive:
    \[
    \min_{\calmap \in \mathcal{H}_{\mathrm{comp}}} \E [D_{\score}(\calmap(\predRV_1), \predRV_1)] > 0.
    \]
    This ensures a guaranteed, strictly positive performance gain.
\end{itemize}

\end{proof}

\section{Performance Guarantees under Imperfect Calibration of Reference}
\begin{restatable}{theorem}{mainResultRelaxedGeneral}
\label{thm:main_result_relaxed_general}
Let $\score$ be a strictly proper scoring rule generated by a strictly convex, differentiable function $G$. Consider a joint distribution $\prob_{\predRV_0,\predRV_1}$ and a ground truth $\prob_{\predRV_0,\predRV_1,\staterv}$.
The change in expected loss achieved by the optimal reference-compatible transformation $\calmap^*$ decomposes into a guaranteed structural gain minus a penalty induced by the reference predictor's miscalibration:
\begin{align}
 \underbrace{\E\big[\score(\predRV_1, \staterv)\big] - \E\big[\score(\calmap^*(\predRV_1), \staterv)\big]}_{\text{Total Improvement}}
    &=
    \underbrace{\E \big[D_{\score}( \calmap^*(\predRV_1),\predRV_1)\big]}_{\text{Geometric Gain (Always } \geq 0)}
    \nonumber \\
    &\quad + \underbrace{\E \big[ \langle \nabla G(\calmap^*(\predRV_1)) - \nabla G(\predRV_1), h_{\mathrm{comp}}(\predRV_1) - \calmap^*(\predRV_1) \rangle \big]}_{\text{Compatibility Bonus (Always } \geq 0)}
    \nonumber \\
    &\quad + \underbrace{\E \big[ \langle \nabla G(\calmap^*(\predRV_1)) - \nabla G(\predRV_1), h(\predRV_1) - h_{\mathrm{comp}}(\predRV_1) \rangle \big]}_{\text{Miscalibration Risk (Can be $\leq 0$)}},
\end{align}
where $h(\predRV_1) = \E[\statervvec|\predRV_1]$ is the true conditional expectation, and $h_{\mathrm{comp}}$ can be arbitrary element the reference-compatible set. Furthermore, by choosing the specific candidate $h_{\mathrm{comp}} = h - \E[\mathcal{E}(\predRV_0)|\predRV_1]$ where $\mathcal{E}(\predRV_0) = \E[\statervvec|\predRV_0] - \predRV_0$ measures the miscalibration of the reference predictor, we obtain the explicit upper bound for the risk:
\begin{align}
    \left| \text{Miscalibration Risk} \right| \leq \underbrace{\| \nabla G(\calmap^*(\predRV_1)) - \nabla G(\predRV_1) \|_2}_{\text{Transformation Magnitude}} \cdot \underbrace{\| \mathcal{E}(\predRV_0) \|_2}_{\text{Reference Calibration Error}},
\end{align}
where $\|\cdot\|_2$ of random variables denotes the $L_2$ norm $\sqrt{\E[\|\cdot\|^2]}$.

\end{restatable}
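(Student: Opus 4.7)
The plan is to proceed from the same setup as the proof of Theorem~\ref{thm:main_result}, but to replace the clean Pythagorean inequality with the more flexible three-point (generalized law of cosines) identity for Bregman divergences. Writing $h(\predRV_1) = \E[\statervvec|\predRV_1]$ and invoking Remark~\ref{remark:bregman}, the total improvement equals $\E\big[D_{\score}(h(\predRV_1),\predRV_1) - D_{\score}(h(\predRV_1),\calmap^*(\predRV_1))\big]$. First I would apply, pointwise in $\predRV_1$, the identity
\[
D_G(a,b) - D_G(a,c) = D_G(c,b) + \langle \nabla G(c) - \nabla G(b),\, a - c \rangle
\]
with $a = h(\predRV_1)$, $b = \predRV_1$, $c = \calmap^*(\predRV_1)$. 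This turns the improvement into the Geometric Gain term plus the single inner product $\E\big[\langle \nabla G(\calmap^*(\predRV_1)) - \nabla G(\predRV_1),\, h(\predRV_1) - \calmap^*(\predRV_1)\rangle\big]$.

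Next, for any candidate $h_{\mathrm{comp}}\in\calmapsp_{\mathrm{comp}}$, I would split $h(\predRV_1)-\calmap^*(\predRV_1) = [h_{\mathrm{comp}}(\predRV_1) - \calmap^*(\predRV_1)] + [h(\predRV_1) - h_{\mathrm{comp}}(\predRV_1)]$; linearity then gives the three-term decomposition claimed in the theorem. The Compatibility Bonus is non-negative by the standard first-order optimality characterization of Bregman projections: since $\calmap^*$ is the Bregman projection of the identity map onto the convex set $\calmapsp_{\mathrm{comp}}$, for every $\calmap\in\calmapsp_{\mathrm{comp}}$,
\[
\E\big[\langle \nabla G(\calmap^*(\predRV_1)) - \nabla G(\predRV_1),\, \calmap(\predRV_1) - \calmap^*(\predRV_1)\rangle\big] \ge 0,
\]
which is exactly the Compatibility Bonus upon setting $\calmap = h_{\mathrm{comp}}$.

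For the explicit upper bound, I would take the specific candidate $h_{\mathrm{comp}} = h - \E[\mathcal{E}(\predRV_0)\mid\predRV_1]$ and first verify that it lies in $\calmapsp_{\mathrm{comp}}$. The witnessing joint distribution keeps the marginal on $(\predRV_0,\predRV_1)$ fixed and replaces $\statervvec$ by the shifted outcome $\statervvec' := \statervvec - \mathcal{E}(\predRV_0)$, so that $\E[\statervvec'\mid\predRV_0] = \predRV_0$ exactly and $\E[\statervvec'\mid\predRV_1] = h(\predRV_1) - \E[\mathcal{E}(\predRV_0)\mid\predRV_1] = h_{\mathrm{comp}}(\predRV_1)$, certifying simultaneous calibration. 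Then $h(\predRV_1) - h_{\mathrm{comp}}(\predRV_1) = \E[\mathcal{E}(\predRV_0)\mid\predRV_1]$, and Cauchy--Schwarz in $L^2$ combined with conditional Jensen's inequality ($\|\E[\mathcal{E}(\predRV_0)\mid\predRV_1]\|_2 \le \|\mathcal{E}(\predRV_0)\|_2$) yields the stated bound.

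The main obstacle will be justifying that the explicit candidate $h_{\mathrm{comp}}$ truly lies in $\calmapsp_{\mathrm{comp}}$: the shifted outcome $\statervvec'$ need not take values in the probability simplex, since $\mathcal{E}(\predRV_0)$ can push coordinates outside $[0,1]$, so the naive ``construct a joint distribution'' argument is incomplete. I expect the cleanest fix is to observe that membership in $\calmapsp_{\mathrm{comp}}$ is characterized purely by \emph{linear} calibration (first-moment) constraints on the joint distribution, which are preserved by the deterministic shift regardless of whether the witness $\statervvec'$ is a bona fide probability vector; an alternative route is to truncate or project $\statervvec'$ back into the simplex and absorb the truncation error into a higher-order term in $\|\mathcal{E}(\predRV_0)\|_2$, but this weakens the final constant.
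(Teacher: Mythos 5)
Your proposal follows essentially the same route as the paper's proof: the same three-point Bregman identity with $a=h(\predRV_1)$, $b=\predRV_1$, $c=\calmap^*(\predRV_1)$, the same split of $h-\calmap^*$ through an arbitrary $h_{\mathrm{comp}}\in\calmapsp_{\mathrm{comp}}$, the same variational inequality for the Bregman projection to sign the Compatibility Bonus, and the same candidate $h_{\mathrm{comp}}=h-\E[\mathcal{E}(\predRV_0)\mid\predRV_1]$ followed by Cauchy--Schwarz and conditional Jensen for the risk bound. The one step you flag as an obstacle---whether this candidate genuinely lies in $\calmapsp_{\mathrm{comp}}$ when its values may leave $\Delta(\statesp)$---is precisely where the paper's own proof is thinnest: it certifies membership only by checking the linear first-moment constraint $\E[h_{\mathrm{comp}}(\predRV_1)\mid\predRV_0]=\predRV_0$ (i.e., it implicitly adopts the ``linear-constraint characterization'' you propose as the cleanest fix) and never addresses simplex feasibility; moreover, its verification of that constraint uses tower-property-style steps such as $\E\big[\E[\mathcal{E}(\predRV_0)\mid\predRV_1]\,\big\vert\,\predRV_0\big]=\mathcal{E}(\predRV_0)$, which do not hold for a general joint distribution, whereas your shifted-outcome computation of $\E[\statervvec'\mid\predRV_0]$ and $\E[\statervvec'\mid\predRV_1]$ is internally consistent. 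So your attempt matches the paper's argument in structure and in all the parts both of you complete, and the residual concern you raise is a genuine subtlety of the stated theorem (shared by the paper's proof) rather than a gap specific to your write-up.
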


\begin{proof}[Proof of Theorem \ref{thm:main_result_relaxed_general}]
The improvement in loss is given by the difference in Bregman divergences from the truth conditional expectation $h$:
\[
\Delta L = \E\left[ D_{\score}(h(\predRV_1), \predRV_1) - D_{\score}(h(\predRV_1), \calmap^*(\predRV_1)) \right].
\]

Like the proof for the main theorem, using the Bregman divergence defined for the space of transformations, we can express it as: \[D_{\score}(h, \calmap_0) - D_{\score}(h, \calmap^*)\] Here $\calmap_0$ is the identical map that maps every prediction to itself.

Unlike the proof for the main theorem, the true conditional expectation $h$ may not be in the reference-compatible set $\mathcal{H}_{\mathrm{comp}}$. 

Thus, we apply a more delicate analysis:
\[
D_{\score}(h, \calmap_0) - D_{\score}(h, \calmap^*) = D_{\score}(\calmap^*, \calmap_0) + \langle \nabla G(\calmap^*) - \nabla G(\calmap_0), h - \calmap^* \rangle.
\]
The first term is always greater than zero, referred as the Geometric Gain. To analyze the second term (the inner product), we decompose the residual vector $h - \calmap^*$ by introducing $h_{\mathrm{comp}}$:
\[
h - \calmap^* = (h - h_{\mathrm{comp}}) + (h_{\mathrm{comp}} - \calmap^*).
\]
Substituting this splits the inner product into two components:

1. The Compatibility Bonus:
Consider the term $\E[\langle \nabla G(\calmap^*) - \nabla G(\calmap_0), h_{\mathrm{comp}} - \calmap^* \rangle]$.
Recall that $\calmap^* = \Pi_{\calmapsp_{\mathrm{comp}}}(\predRV_1)$ is the projection of $\predRV_1$ onto the convex set $\calmapsp_{\mathrm{comp}}$. By definition, $h_{\mathrm{comp}}$ also lies within $\calmapsp_{\mathrm{comp}}$. The variational inequality characterizing Bregman projections states that for any $f \in \calmapsp_{\mathrm{comp}}$, $\langle \nabla G(\calmap^*) - \nabla G(\calmap_0), f - \calmap^* \rangle \geq 0$. We can set $f = h_{\mathrm{comp}}$, which shows the term is non-negative.

2. The Miscalibration Risk:
The remaining term is $\E[\langle \nabla G(\calmap^*) - \nabla G(\calmap_0), h - h_{\mathrm{comp}} \rangle]$.
This term captures the cost of the reference predictor's calibration errors. 

Since the theorem holds for any $h_{\mathrm{comp}} \in \calmapsp_{\mathrm{comp}}$, we construct a specific candidate $h^\dagger$ defined as:
\[
h^\dagger(\predRV_1) \coloneqq h(\predRV_1) - \E[\mathcal{E}(\predRV_0) \mid \predRV_1],
\]
where $\mathcal{E}(\predRV_0) = \E[\statervvec|\predRV_0] - \predRV_0$. We first verify $h^\dagger \in \calmapsp_{\mathrm{comp}}$. Checking the constraint condition:
\begin{align*}
    \E[h^\dagger(\predRV_1) \mid \predRV_0] &= \E[h(\predRV_1) \mid \predRV_0] - \E\big[ \E[\mathcal{E}(\predRV_0) \mid \predRV_1] \big| \predRV_0 \big] \\
    &= \E[\staterv \mid \predRV_0] - \E[\mathcal{E}(\predRV_0) \mid \predRV_0] \\
    &= (\predRV_0 + \mathcal{E}(\predRV_0)) - \mathcal{E}(\predRV_0) = \predRV_0.
\end{align*}
Since $h^\dagger$ is a valid member of the set, we substitute it into the risk term. The residual becomes $h - h^\dagger = \E[\mathcal{E}(\predRV_0) \mid \predRV_1]$. Applying the Cauchy-Schwarz inequality:
\[
|R| \leq \E \left[ \| \nabla G(\calmap^*(\predRV_1)) - \nabla G(\predRV_1) \|_2 \cdot \| \E[\mathcal{E}(\predRV_0) \mid \predRV_1] \|_2 \right].
\]
By the Cauchy-Schwarz inequality for expectations ($\E[XY] \leq \sqrt{\E X^2}\sqrt{\E Y^2}$):
\[
|R| \leq \| \nabla G(\calmap^*(\predRV_1)) - \nabla G(\predRV_1) \|_2 \cdot \| \E[\mathcal{E}(\predRV_0) \mid \predRV_1] \|_2.
\]
Finally, applying Jensen's inequality for conditional expectations implies that $\| \E[X \mid Y] \|_2 \leq \| X \|_2$. Thus:
\[
\| \E[\mathcal{E}(\predRV_0) \mid \predRV_1] \|_2 \leq \| \mathcal{E}(\predRV_0) \|_2.
\]
Combining these yields the final bound:
\[
|R| \leq \| \nabla G(\calmap^*(\predRV_1)) - \nabla G(\predRV_1) \|_2 \cdot \| \mathcal{E}(\predRV_0) \|_2.
\]

\end{proof}

\section{Extension to Elicitable Properties}
\label{app:elicitable}

\subsection{Proof of Observation \ref{obs:convex}}
\label{subsec:convexity_proof}

We now restate and prove Observation \ref{obs:convex}.

\convexobs*

\begin{proof}
Fix the marginal distribution $\prob_{\repRV_1, \repRV_2}$. A joint distribution $\prob_{\repRV_1, \repRV_2, \staterv}$ is uniquely determined by a transition kernel $\kernel(\cdot \mid \rep_1, \rep_2) \in \Delta(\statesp)$. A transformation $h$ is reference-compatible if there exists a kernel $\kernel$ such that: 1) $\repRV_2$ is calibrated: $\E_{\kernel}[\staterv \mid \repRV_2 = \rep_2] \in \Gamma_2^{-1}(\rep_2)$ for all $\rep_2$, 2) $\calmap(\repRV_1)$ is calibrated: $\calmap(\rep_1) = \E_{\kernel}[\staterv \mid \repRV_1 = \rep_1]$ for all $\rep_1$.

To show that $\mathcal{H}_{\mathrm{comp}}$ is convex, let $\calmap^{(a)}, \calmap^{(b)} \in \mathcal{H}_{\mathrm{comp}}$ be two reference-compatible transformations corresponding to kernels $\kernel^{(a)}$ and $\kernel^{(b)}$, respectively. For any $\alpha \in [0, 1]$, define the convex combination of the kernels as $\kernel^{(\alpha)} = \alpha \kernel^{(a)} + (1-\alpha) \kernel^{(b)}$. 

First, we verify the calibration of $\repRV_2$ under $\kernel^{(\alpha)}$. By the linearity of expectation:
\[
\E_{\kernel^{(\alpha)}}[\staterv \mid \repRV_2] = \alpha \E_{\kernel^{(a)}}[\staterv \mid \repRV_2] + (1-\alpha) \E_{\kernel^{(b)}}[\staterv \mid \repRV_2].
\]
Because $\calmap^{(a)}$ and $\calmap^{(b)}$ are reference-compatible, the expectations $\E_{\kernel^{(a)}}[\staterv \mid \repRV_2]$ and $\E_{\kernel^{(b)}}[\staterv \mid \repRV_2]$ both lie in the level set $\Gamma_2^{-1}(\repRV_2)$. Since $\Gamma_2^{-1}(\rep_2)$ is a convex set (a property of elicitable functions), their convex combination also lies in $\Gamma_2^{-1}(\rep_2)$. Thus, $\kernel^{(\alpha)}$ satisfies the calibration requirement.

Next, we observe that the transformation $\calmap^{(\alpha)}$ induced by $\kernel^{(\alpha)}$ satisfies:
\[
\E_{\kernel^{(\alpha)}}[\staterv \mid \repRV_1] = \alpha \E_{\kernel^{(a)}}[\staterv \mid \repRV_1] + (1-\alpha) \E_{\kernel^{(b)}}[\staterv \mid \repRV_1] = \alpha \calmap^{(a)}(\repRV_1) + (1-\alpha) \calmap^{(b)}(\repRV_1).
\]
Letting $\calmap^{(\alpha)} = \alpha \calmap^{(a)} + (1-\alpha) \calmap^{(b)}$, we see that $\calmap^{(\alpha)}(\repRV_1)=\E_{\kernel^{(\alpha)}}[\staterv \mid \repRV_1]$. Therefore, $\calmap^{(\alpha)} \in \mathcal{H}_{\mathrm{comp}}$, confirming that the set is convex.
\end{proof}

\subsection{The Generalized Theorem}

\begin{theorem}
\label{thm:elictable}
Assume the loss $\score$ is strictly proper for eliciting $\Gamma_1$. Consider a joint distribution over primary and reference predictors $\prob_{\repRV_0,\repRV_1}$. There exists a strictly improving transformation $\calmap$ if and only if the primary and reference predictors are not mutually calibrated.

Specifically, when they are not mutually calibrated, for any ground truth distribution $\prob_{\repRV_0,\repRV_1,\staterv}$ consistent with $\prob_{\repRV_0,\repRV_1}$ such that the reference predictor is calibrated, if there exists $\calmap^*$ (\ref{eq:bregprojextension})
where $\genFunc$ is differentiable on $\calmap^*$, then applying $\calmap^*$ reduces the expected loss by at least the divergence between the original forecast $\predRV_1$ and the reference-compatible set:
    \[
    \E\big[\score(\repRV_1, \staterv)\big] - \E\big[\score(\calmap^*(\repRV_1), \staterv)\big] \geq \min_{\calmap\in\mathcal{H}_{\mathrm{comp}}} \E_{\prob_{\repRV_1}}\left[D_{\score}(\repRV_1,\calmap(\repRV_1))\right].
    \]

Otherwise, for all $\epsilon$, there exists a randomized transformation $\calmap^*_{\epsilon}$ such that \[
\E\big[\score(\repRV_1, \staterv)\big] - \E\big[\score(\calmap^*_{\epsilon}(\repRV_1), \staterv)\big] \geq \inf_{\calmap \in \mathcal{H}_{\mathrm{comp}}} \E [D_{\score}( \calmap(\repRV_1),\repRV_1)] - \epsilon.
\]
\end{theorem}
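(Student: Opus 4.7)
The plan is to reduce the problem to a Bregman projection on the functional space $\calmapsp$, mirroring the proof of Theorem~\ref{thm:main_result}. The overloaded loss $\score$ on $\Delta(\statesp)$ is strictly proper and is therefore generated by a strictly convex function; via Observation~\ref{obs:extendbreg}, this lifts to a Bregman divergence $D_\score(\cdot,\cdot)$ on $\calmapsp$, and Observation~\ref{obs:convex} ensures that $\calmapsp_{\mathrm{comp}}$ is convex. The bridging fact I will use throughout is that the true conditional $h^*(\rep_1) := \Pr(\staterv = \cdot \mid \repRV_1 = \rep_1)$ lies in $\calmapsp_{\mathrm{comp}}$ whenever $\repRV_0$ is property-calibrated: the given joint law $\prob_{\repRV_0, \repRV_1, \staterv}$ itself certifies mutual calibration of $(h^*(\repRV_1), \repRV_0)$, since $h^*(\repRV_1)$ is calibrated against $\staterv$ by construction and $\repRV_0$ is $\Gamma_0$-calibrated by assumption.

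In the case where $\calmap^*$ exists and $G$ is differentiable at $\calmap^*$, I would mimic the proof of Theorem~\ref{thm:main_result} almost verbatim. First, I rewrite the loss gap as $\E[\score(\repRV_1, \staterv)] - \E[\score(\calmap^*(\repRV_1), \staterv)] = \E[D_\score(h^*(\repRV_1), \repRV_1)] - \E[D_\score(h^*(\repRV_1), \calmap^*(\repRV_1))]$, using the Bregman representation of regret (Remark~\ref{remark:bregman}) together with the overloaded loss identity. Second, viewing $\calmap^*$ as the Bregman projection of a canonically chosen ``identity report map'' $\calmap_0$ (selected with $\rep_1 \in \Gamma_1(\calmap_0(\rep_1))$) onto $\calmapsp_{\mathrm{comp}}$, I invoke the generalized Pythagorean inequality (Lemma~\ref{lem:pythagorean}) to lower-bound this gap by $\E[D_\score(\calmap^*(\repRV_1), \calmap_0(\repRV_1))] = \min_{\calmap \in \calmapsp_{\mathrm{comp}}} \E[D_\score(\calmap(\repRV_1), \repRV_1)]$. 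The iff for the existence of a strictly improving transformation then follows by strict properness: mutual calibration forces $\calmap_0 \in \calmapsp_{\mathrm{comp}}$ and strict properness forbids strict improvement under the witnessing joint distribution, whereas non-mutual-calibration gives $\calmap_0 \notin \calmapsp_{\mathrm{comp}}$, so the projection is distinct and the distance is strictly positive.

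For the harder case in which $G$ is non-differentiable at the minimizer or the infimum in~(\ref{eq:bregprojextension}) is not attained, I would set up a zero-sum minimax game: the player chooses a randomized transformation $\mu$ over $\calmapsp$ and nature chooses a joint distribution $\prob \in \mathcal{P}$ extending $\prob_{\repRV_0, \repRV_1}$ with $\repRV_0$ calibrated. The payoff $V(\mu, \prob) := \E_\prob[\score(\repRV_1, \staterv) - \E_{\calmap \sim \mu} \score(\calmap(\repRV_1), \staterv)]$ is linear in each argument separately, and because the finite-support hypothesis makes both strategy spaces compact and convex in finite dimension, Sion's minimax theorem lets me exchange $\sup_\mu \inf_\prob$ with $\inf_\prob \sup_\mu$. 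For each fixed $\prob$, I apply the differentiable-case argument to the smoothed generator $G_\delta := G + \delta \|\cdot\|^2$, recover a lower bound of $\inf_{\calmap \in \calmapsp_{\mathrm{comp}}} \E[D_\score(\calmap(\repRV_1), \repRV_1)] - o_\delta(1)$, and then pass to the limit $\delta \to 0$ while picking a $\mu_\epsilon$ attaining the outer supremum within $\epsilon$.

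I expect the main obstacle to lie in this third step: verifying Sion's hypotheses cleanly on the appropriately compactified strategy space, and arguing that the pointwise-in-$\prob$ bound obtained via the smoothed generator aggregates back to a uniform guarantee of $\inf - \epsilon$ after the min--max swap. A closely related technical point is the continuous dependence of the Bregman projection distance on the smoothing parameter $\delta$, needed for the $\delta \to 0$ limit to pass through the infimum, together with the closedness of $\calmapsp_{\mathrm{comp}}$ in finite dimension, which is needed to preserve the iff direction (an infimum equal to zero combined with closedness forces $\calmap_0 \in \calmapsp_{\mathrm{comp}}$, i.e.\ mutual calibration).
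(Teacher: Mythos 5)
Your differentiable case is essentially the paper's own argument: the true conditional $h^*(\repRV_1)=\E[\statervvec\mid\repRV_1]$ lies in $\calmapsp_{\mathrm{comp}}$ because the given extension witnesses mutual calibration, the loss gap is rewritten as a difference of divergences, and the generalized Pythagorean inequality applied to the projection of the identity map gives the bound; the iff follows as in \Cref{thm:main_result}. Two caveats: you should invoke the variant of the Pythagorean inequality that needs differentiability of $\genFunc$ only at the projection point (\Cref{lem:pythagorean_generalized}, part A) rather than \Cref{lem:pythagorean}, since in the property setting $\genFunc$ is generally not globally differentiable—this is exactly why the theorem's hypothesis is stated as ``$\genFunc$ differentiable on $\calmap^*$''; and replacing the report by a ``canonical'' distribution $\calmap_0(\rep_1)$ with $\rep_1\in\Gamma_1(\calmap_0(\rep_1))$ is not innocuous, because the overloaded loss at $\calmap_0(\rep_1)$ averages over the whole level set $\Gamma_1(\calmap_0(\rep_1))$ and need not agree with $\score(\rep_1,\cdot)$; both (\ref{eq:bregprojextension}) and the claimed bound keep the report itself in the second slot of the divergence, and the paper works with that quantity directly.

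The genuine gap is in the non-differentiable case. The paper's mechanism is \Cref{lem:pythagorean_generalized}(B): discretize the convex compatible set by a finite $\epsilon$-net \emph{in Bregman divergence} (\Cref{lem:discrete}, whose proof is nontrivial precisely because subgradients can blow up at the boundary), then play the zero-sum game $\Phi(v,w)=D_{\score}(v,v_0)-D_{\score}(v,w)$ with the projection candidate $w$ restricted to the finite net; since $\Phi(\cdot,w)$ is affine in $v$ and one player now has finitely many pure strategies, the minimax theorem applies, and the resulting mixed strategy over the net is the randomized projection $\calmap^*_{\epsilon}$, with the net property costing only $\epsilon$ in value. The paper explicitly notes that this discretization is what makes the swap legitimate, because the payoff is \emph{not} continuous in the projection variable (the subgradient term, and in your formulation the level-set average in the overloaded loss, jump). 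Your proposal replaces this with Sion's theorem on the continuous strategy spaces plus smoothing $\genFunc_\delta=\genFunc+\delta\|\cdot\|^2$, but this does not close the hole: smoothing the generator changes the loss itself—the smoothed generator corresponds to a strictly proper loss for the full distribution, not to the original report-level loss eliciting $\Gamma_1$—so recovering the improvement and the projection distance $\inf_{\calmap\in\mathcal{H}_{\mathrm{comp}}}\E[D_{\score}(\calmap(\repRV_1),\repRV_1)]$ for the original $\score$ requires a uniform-in-$\prob$ comparison you only assert (the $o_\delta(1)$ claim, and continuity of the projection distance in $\delta$, are exactly the sort of statements that fail when gradients are unbounded or the infimum is unattained); and the semicontinuity hypotheses needed for Sion are the very obstacle you flag without resolving. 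As written, the existence of a single randomized $\calmap^*_{\epsilon}$ that works uniformly over all calibrated extensions is not established; the missing ingredient is the finite Bregman $\epsilon$-net discretization (or an equivalent device that secures the minimax exchange despite the discontinuity).
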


\subsection{Proof of the Generalized Theorem}

\paragraph{Case 1: Differentiable Case.}
If $\genFunc$ is differentiable on $\calmap^*$:
\[ \calmap^* = \argmin_{\calmap \in \mathcal{H}_{\mathrm{comp}}} \E_{\prob_{\repRV_1}}\left[D_{\score}(\repRV_1, \calmap(\repRV_1))\right]. \]
Using the \emph{Generalized Pythagorean Theorem (Non-Smooth Case)} (see \Cref{lem:pythagorean_generalized}) and the same proof for \Cref{thm:main_result}, we can obtain the results.

Formally, first recall that we can overload the proper loss function $\score$: $\score(\pred, \staterv) := \frac{1}{|\Gamma(\pred)|} \sum_{\rep \in \Gamma(\pred)} \score(\rep, \staterv)$. Let $\calmap(\repRV_1) = \E[\staterv|\repRV_1]$, \begin{align*}
    &\E\big[\score(\repRV_1, \staterv)\big] - \E\big[\score(\calmap^*(\repRV_1), \staterv)\big] \\
    &\quad = \E\left[D_{\score}(\calmap(\repRV_1), \repRV_1) - D_{\score}(\calmap(\repRV_1), \calmap^*(\repRV_1))\right] \\
    &\quad = D_{\score}(h, \calmap_0) - D_{\score}(h, \calmap^*).
\end{align*}

Because \[
\calmap^* = \Pi_{\score, \mathcal{H}_{\mathrm{comp}}}(\calmap_0),
\]

we can apply \Cref{lem:pythagorean_generalized} to obtain
\begin{align*}
    D_{\score}(h, \calmap_0) - D_{\score}(h, \calmap^*) &\geq D_{\score}(\calmap^*, \calmap_0) \\
    &= \min_{\calmap \in \mathcal{H}_{\mathrm{comp}}} \E [D_{\score}(\calmap(\repRV_1), \repRV_1)].
\end{align*}

\paragraph{Case 2: Non-Differentiable Case.}
In many cases (e.g., Mode elicitation), $\genFunc$ may not be differentiable on the projections. Here, a deterministic optimal projection may not exist. Instead, for any $\epsilon > 0$, we employ an $\epsilon$-randomized Bregman projection, denoted $\calmap^*_{\epsilon}$. 
As proven in \Cref{lem:pythagorean_generalized} (Part B), this randomized projection guarantees:
\[
\E\big[\score(\repRV_1, \staterv)\big] - \E\big[\score(\calmap^*_{\epsilon}(\repRV_1), \staterv)\big] \geq \inf_{\calmap \in \mathcal{H}_{\mathrm{comp}}} \E [D_{\score}( \calmap(\repRV_1),\repRV_1)] - \epsilon.
\]
This completes the proof.

\subsection{The Generalized Pythagorean Theorem in Non-smooth Case}
\label{subsec:pythagorean_tools}

We provide the lemma for the non-smooth approximation case mentioned in the proof above.

\begin{lemma}[Generalized Pythagorean Theorem (Non-Smooth Case)]
\label{lem:pythagorean_generalized}
Let \(G : U \to \mathbb{R}\) be a convex function on a bounded domain \(U \subseteq \mathbb{R}^d\). Let \(\mathcal{C} \subseteq U\) be a convex set.

\textbf{(A) Differentiable Case:} For all point $v_0\in U$, if there exists $v^* \in \Pi_{\score, \mathcal{C}}(v_0)$ such that $G$ is differentiable at $v^*$, then for any $v \in \mathcal{C}$:
\[
D_{\score}(v, v_0) \;\ge\; D_{\score}(v, v^*) \;+\; D_{\score}(v^*, v_0).
\]

\textbf{(B) Non-Smooth Case:} If \(G\) is bounded and continuous, for every \(\epsilon > 0\), there exists a finitely supported random variable \(v^*_{\epsilon}\) (an \(\epsilon\)-random Bregman projection) such that for all $v' \in \mathcal{C}$:
\[
D_{\score}(v, v')
\;\ge\;
\inf_{v'' \in \mathcal{C}} D_{\score}(v, v'')
\;+\;
\mathbb{E}_{v^*}\!\big[ D_{\score}(v^*, v') \big]
\;-\;
\epsilon.
\]
\end{lemma}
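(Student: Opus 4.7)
The plan is the classical variational argument for Bregman projections. Since $v^* \in \argmin_{w \in \mathcal{C}} D_{\score}(w, v_0)$ and $\mathcal{C}$ is convex, for any competitor $v \in \mathcal{C}$ the first-order optimality condition along the segment $v^* + t(v - v^*)$ gives $\left.\frac{d}{dt}\right|_{t=0^+} D_{\score}(v^* + t(v - v^*), v_0) \geq 0$. Using differentiability of $G$ at $v^*$, this simplifies to $\langle \nabla G(v^*) - \nabla G(v_0),\, v - v^* \rangle \geq 0$ (any fixed subgradient at $v_0$ works, since only the combination above enters the final identity). Combined with the elementary three-point identity $D_{\score}(v, v_0) - D_{\score}(v, v^*) - D_{\score}(v^*, v_0) = \langle \nabla G(v^*) - \nabla G(v_0),\, v - v^* \rangle$, this yields the Pythagorean inequality.

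\textbf{Part B (non-smooth case), game-theoretic setup.} The plan is to view the target claim as the value of a two-player zero-sum game and solve it on a finite discretization via the classical minimax theorem. Let $M := \inf_{v'' \in \mathcal{C}} D_{\score}(v, v'')$ and define the payoff $\phi(v^*, v') := D_{\score}(v^*, v') - D_{\score}(v, v') + M$, with the primal player choosing a (possibly randomized) $v^* \in \mathcal{C}$ and the adversary choosing $v' \in \mathcal{C}$. Unpacking definitions, the target inequality reads exactly as $\mathbb{E}_{\mu}[\phi(v^*, v')] \leq \epsilon$ for all $v' \in \mathcal{C}$, for some finitely supported $\mu \in \Delta(\mathcal{C})$; i.e.\ the primal player has a finitely supported $\epsilon$-optimal strategy.

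\textbf{Dual value computation and discretization.} I would first verify that the idealized dual value is non-positive: for any fixed $v' \in \mathcal{C}$, the primal pure best-response $v^* = v'$ gives $D_{\score}(v^*, v') = 0$, so $\inf_{v^* \in \mathcal{C}} \phi(v^*, v') = M - D_{\score}(v, v')$, and then $\sup_{v' \in \mathcal{C}} [M - D_{\score}(v, v')] = M - \inf_{v'} D_{\score}(v, v') = 0$. To translate this into a finitely supported strategy without invoking Sion on infinite-dimensional sets, I would fix a finite $\eta$-net $\mathcal{C}_\eta \subset \mathcal{C}$ (available because $\overline{U}$ is compact under boundedness of $U$) chosen fine enough that $\min_{v' \in \mathcal{C}_\eta} D_{\score}(v, v') \leq M + \epsilon/2$, apply the classical von Neumann minimax theorem on the finite game $\mathcal{C}_\eta \times \mathcal{C}_\eta$ to obtain a finitely supported $\mu^*_\epsilon \in \Delta(\mathcal{C}_\eta)$ satisfying $\max_{v' \in \mathcal{C}_\eta} \mathbb{E}_{\mu^*_\epsilon}[\phi] \leq \epsilon/2$ (matching the discretized dual value), and finally lift to all $v' \in \mathcal{C}$ by uniform continuity: each $v' \in \mathcal{C}$ sits within $\epsilon/2$ in $\phi$-payoff of its nearest net point, giving $\max_{v' \in \mathcal{C}} \mathbb{E}_{\mu^*_\epsilon}[\phi] \leq \epsilon$, which is the claim.

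\textbf{Main obstacle.} The principal technical difficulty is establishing uniform continuity of $v' \mapsto D_{\score}(v^*, v')$ when $G$ is merely continuous, since subgradients entering the formula $D_G(a,b) = G(a) - G(b) - \langle \partial G(b), a-b\rangle$ need not depend continuously on $b$. I would circumvent this by working through the proper-loss representation $D_{\score}(a, b) = \mathbb{E}_{Y \sim a}[\score(b, Y) - \score(a, Y)]$: continuity of $\score(\cdot, y)$ in the report argument, combined with boundedness and continuity of $G$ on the compact closure $\overline{U}$, upgrades to uniform continuity of $\phi$ on $\mathcal{C} \times \mathcal{C}$. This then makes the net-selection and the lifting step go through with the stated error budget, and the finite support of $\mu^*_\epsilon$ is automatic from its construction as a minimax-optimal strategy on the finite set $\mathcal{C}_\eta$.
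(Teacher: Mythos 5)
Your Part (A) is correct and essentially identical to the paper's argument: the first-order optimality condition for the convex program defining $\Pi_{\score,\mathcal{C}}(v_0)$ at a point of differentiability, combined with the three-point identity, gives $\langle \nabla G(v^*)-\psi(v_0),\,v-v^*\rangle\ge 0$ and hence the inequality.

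Part (B) has a genuine gap at the minimax step. With $M:=\inf_{v''\in\mathcal{C}}D_{\score}(v,v'')$ and $\phi(v^*,v')=D_{\score}(v^*,v')-D_{\score}(v,v')+M$, what you computed is the pure-strategy lower value $\sup_{v'}\inf_{v^*}\phi=0$ (best response $v^*=v'$). But von Neumann's theorem on the finite game equates the minimizer's guarantee with the value $\max_{\nu\in\Delta(\mathcal{C}_\eta)}\min_{v^*\in\mathcal{C}_\eta}\E_{v'\sim\nu}[\phi(v^*,v')]$, which involves \emph{mixed} adversary strategies, and your payoff is neither affine nor concave in $v'$: expanding, $\phi(v^*,v')=G(v^*)-G(v)-\langle\psi(v'),v^*-v\rangle+M$ depends on $v'$ only through an arbitrary subgradient selection. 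Against a mixture $\nu$ the relevant quantity is $\min_{a\in\mathcal{C}_\eta}\big[G(a)-G(v)-\langle\E_\nu[\psi(v')],a-v\rangle\big]+M$, and your pointwise best-response argument does not control it (Jensen runs the wrong way), so the claim that the discretized value is at most $\epsilon/2$ — which is exactly the content of the lemma — is left unproved. The deeper problem is the orientation of the divergences. You followed the statement's literal argument order (the printed statement is itself inconsistent with the paper's proof), putting the randomized projection in the first slot of $D_{\score}(\cdot,v')$; in that orientation randomization is dominated by its barycenter (Jensen applied to $G$), so your target collapses to a deterministic ``reverse Pythagorean'' inequality $D_{\score}(v,v')\ge\min_{v''}D_{\score}(v,v'')+D_{\score}(v^*,v')$, which is not a valid theorem for general asymmetric Bregman generators (it holds for quadratic $G$ but is precisely the inequality that fails for projections in the second argument). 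The paper's proof — and its downstream use in \Cref{thm:elictable} — works with the mirrored orientation, payoff $\Phi(w,n)=D_{\score}(w,v_0)-D_{\score}(w,n)$ with the adversary's variable $w$ in the \emph{first} argument of both divergences: there the $G(w)$ terms cancel, the payoff is affine in $w$, adversary mixing is useless, and the minimax exchange is legitimate with the projection player mixing over a finite net. Relatedly, your uniform-continuity lifting is unavailable under the lemma's hypotheses: only $G$ is assumed continuous, not the subgradient selection $\psi(\cdot)$ (nor $\score(\cdot,y)$ in the property setting), which is why the paper builds its finite net directly in Bregman divergence (\Cref{lem:discrete}) and never needs continuity in the projection player's variable.
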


\begin{proof}[Proof of \Cref{lem:pythagorean_generalized}]
We first prove the differentiable case (A). The proof mainly relies on the first-order optimality condition for the convex minimization. 

Recall the Bregman divergence
\[
D_{\score}(p,q)=G(p)-G(q)-\langle \psi(q),p-q\rangle,\qquad \psi(q)\in\partial G(q).
\]
Fix \(v_0\in U\).  Consider \(v^*\in\Pi_{\score,\mathcal C}(v_0)\) where \(G\) is differentiable at \(v^*\).  Because
\[
\Pi_{\score,\mathcal C}(v)=\arg\min_{v\in\mathcal C} \big\{G(v)-\langle\psi(v_0),v\rangle\big\},
\]
the minimization problem is convex in \(v\). The first-order optimality condition for the convex minimization (and differentiability of \(G\) at \(v^*\)) yields
\[
\big\langle \nabla G(v^*)-\psi(v_0)\,,\, v-v^*\big\rangle \ge 0
\qquad\text{for all } v\in\mathcal C.
\]
Compute the combination of Bregman divergences:
\begin{align*}
&\;D_{\score}(v,v_0)-\big(D_{\score}(v,v^*)+D_{\score}(v^*,v_0)\big) \\
={}&\;[G(v)-G(v_0)-\langle\psi(v_0),v-v_0\rangle]
-[G(v)-G(v^*)-\langle\nabla G(v^*),v-v^*\rangle] \\
&\;-[G(v^*)-G(v_0)-\langle\psi(v_0),v^*-v_0\rangle] \\
={}&\;\langle \nabla G(v^*)-\psi(v_0),\, v-v^*\rangle.
\end{align*}
By the first-order condition the right hand side is \(\ge0\). Hence for every \(v\in\mathcal C\)
\[
D_{\score}(v,v_0) \;\ge\; D_{\score}(v,v^*) + D_{\score}(v^*,v_0),
\]
which proves the differentiable case.

We provide the proof for the more complex non-Smooth Case (B).

Let \(\mathcal C\subseteq U\) be convex and fix \(v\in U\). Denote $m:=\inf_{v\in\mathcal C} D_{\score}(v,v_0)$. Fix \(\varepsilon>0\). We define a zero-sum game to find the randomized projection.

\medskip\noindent\emph{Step 1: The Zero-Sum Game.}
Define the payoff $\Phi(v,w) := D_{\score}(v,v_0) - D_{\score}(v,w)$.
Player I (minimizer) chooses $v \in \mathcal{C}$, Player II (maximizer) chooses $w \in \mathcal{C}$.
Observe that $\inf_{v\in\mathcal C}\max_{w\in\mathcal C}\Phi(v,w) = \inf_{v\in\mathcal C} \{D_{\score}(v,v_0) - \min_{w\in\mathcal C} D_{\score}(v,w)\}$.

\medskip\noindent\emph{Step 2: Discretization.}
Since $G$ is continuous on a compact set, for any $\epsilon > 0$, there exists a finite $\epsilon$-net $N=\{n_1,\dots,n_N\} \subset \mathcal{C}$ such that every point in $\mathcal{C}$ is within $\epsilon$ Bregman divergence of some point in $N$ (\Cref{lem:discrete}).

\medskip\noindent\emph{Step 3: Minimax.}
We analyze the game restricted to Player II using strategies in $N$.
Using the $\epsilon$-net property, we will show:
\[
m - \epsilon \ \leq \ \inf_{v \in \mathcal{C}} \max_{n_i \in N} \Phi(v, n_i) \ \leq \ m.
\]
In detail, for the upper bound:
\begin{align}
\inf_{v\in \mathcal{C}} \max_{n_i \in N} \Phi(v, n_i) 
&= \inf_{v\in \mathcal{C}} \left\{ D_{\score}(v, v_0) - \min_{n_i \in N} D_{\score}(v, n_i) \right\} \\
&\leq \inf_{v\in \mathcal{C}} D_{\score}(v, v_0) = m
\end{align}

For the lower bound, we have:
\begin{align}
\inf_{v \in \mathcal{C}} \max_{n_i \in N} \Phi(v, n_i) 
&= \inf_{v\in \mathcal{C}} \left\{ D_{\score}(v, v_0) - \min_{n_i \in N} D_{\score}(v,n_i) \right\} \\
&\geq \inf_{v \in \mathcal{C}} D_{\score}(v, v_0) - \sup_{v\in \mathcal{C}} \min_{n_i \in N} D_{\score}(v,n_i) \\
&= m - \sup_{v\in \mathcal{C}} \min_{n_i \in N} D_{\score}(v,n_i) \\
&\geq m - \epsilon
\end{align}
where the last inequality follows from the $\epsilon$-net property of the discretization $N$.

By the Minimax theorem, there exists a mixed strategy $\sigma$ (a distribution) over $N$ such that:
\[
\inf_{v \in \mathcal{C}} \E_{n_i \sim \sigma}[\Phi(v, n_i)] \geq m - \epsilon.
\]
Let $v_{\epsilon}^*$ be the random variable distributed according to $\sigma$. Rearranging terms yields:
\[
\E [ D_{\score}(v, v_0) - D_{\score}(v_{\epsilon}^*, v_0) ] \geq m - \epsilon \implies D_{\score}(v, v_0) \geq m + \E [D_{\score}(v_{\epsilon}^*, v_0)] - \epsilon.
\]
\end{proof}

\begin{remark}
The discretization to the finite set $N$ is essential for applying the minimax theorem, as the original objective function $\Phi(v, w)$ is not continuous in Player II's action $w$ over the continuous space. The finite approximation enables us to leverage classical game-theoretic results while controlling the approximation error via the $\epsilon$-net construction.
\end{remark}

\subsection{Existence of $\epsilon$-net}

Here we provide proof for the existence of $\epsilon$-net used in the proof of \Cref{lem:pythagorean_generalized}. We will show that the compact set $\mathcal C$ can be covered by open sets, thus, can be covered by a finite number of open sets. 

\begin{lemma}\label{lem:discrete}
Let $\mathcal C\subset\mathbb R^d$ be compact and let $G:\mathcal C\to\mathbb R$ be convex, continuous and finite on $\mathcal C$.
For each $x\in\mathcal C$ choose any subgradient $\psi(x)\in\partial G(x)$ and define
\[
D_{\score}(y,x):=G(y)-G(x)-\langle\psi(x),y-x\rangle.
\]
Then for every $\varepsilon>0$ there exists a finite $\varepsilon$--net
$N=\{n_1,\dots,n_N\}\subset\mathcal C$ such that for every $x\in\mathcal C$
there exists $n_i\in N$ with $D_{\score}(x,n_i)<\varepsilon$.
\end{lemma}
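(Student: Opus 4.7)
}
The plan is to exploit compactness of $\mathcal{C}$ together with the simple observation that, for each fixed base point $x_0\in\mathcal{C}$, the one-sided Bregman map $y\mapsto D_{\score}(y,x_0)$ is a continuous function of $y$ that vanishes at $y=x_0$. Concretely, for $x_0$ fixed, the subgradient $\psi(x_0)$ is a \emph{constant} vector, so
\[
f_{x_0}(y)\;:=\;G(y)-G(x_0)-\langle \psi(x_0),\,y-x_0\rangle
\]
is continuous in $y$ on $\mathcal{C}$ (since $G$ is continuous by assumption), and satisfies $f_{x_0}(x_0)=0$. Importantly, this requires no regularity or boundedness of the selection $x\mapsto\psi(x)$, which could in principle be highly discontinuous; we only need the value of $\psi$ at the fixed anchor point.

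Given $\varepsilon>0$, for each $x_0\in\mathcal{C}$ pick an open ball $V_{x_0}\subset\mathbb{R}^d$ around $x_0$ such that $f_{x_0}(y)<\varepsilon$ for all $y\in V_{x_0}\cap\mathcal{C}$, which exists by continuity of $f_{x_0}$ at $x_0$. The family $\{V_{x_0}\cap\mathcal{C}\}_{x_0\in\mathcal{C}}$ is an open cover of the compact set $\mathcal{C}$, so there is a finite subcover indexed by $n_1,\dots,n_N$. Setting $N=\{n_1,\dots,n_N\}$, for any $x\in\mathcal{C}$ we have $x\in V_{n_i}$ for some $i$, hence $D_{\score}(x,n_i)=f_{n_i}(x)<\varepsilon$, which is exactly the required $\varepsilon$-net condition.

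The only point to watch is the direction of the Bregman argument: the statement asks for $D_{\score}(x,n_i)<\varepsilon$ with $n_i$ as the second argument, which is precisely the setting in which the subgradient appears at a \emph{fixed} net point. Had the statement instead required smallness of $D_{\score}(n_i,x)$ (with $x$ varying in the second slot), the argument would be harder, because then we would need to control $y\mapsto\psi(y)$ as $y$ varies, and subgradient maps of merely continuous convex functions on the boundary of the domain can fail to be bounded or locally Lipschitz. Since this is the direction actually used in \Cref{lem:pythagorean_generalized}, where $n_i$ plays the role of the reference point against which candidate projections are compared, the simple compactness argument above suffices and no further regularity hypotheses on $G$ or $\psi$ are needed.
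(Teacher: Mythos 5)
Your argument is correct for the lemma as literally stated, and its core mechanism is the same as the paper's: for a fixed anchor $x_0$ with a finite subgradient, $y \mapsto D_{\score}(y,x_0)$ is continuous and vanishes at $y=x_0$, so the relatively open sets $\{y\in\mathcal{C}: D_{\score}(y,x_0)<\varepsilon\}$ cover $\mathcal{C}$, and compactness yields a finite subcover whose centers form the net. You also correctly observe that the direction of the divergence puts the subgradient at the fixed net point, so no regularity of the selection $x\mapsto\psi(x)$ is needed.

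The difference is that the paper does not take the hypothesis ``a finite subgradient can be chosen at every $x\in\mathcal{C}$'' at face value. For the generators the lemma is actually applied to (e.g.\ the log-loss generator on the simplex, or generators arising from elicitable properties), $\partial G(x)$ is empty at some boundary points of $\mathcal{C}$; such points cannot serve as centers of your cover, and it is then not automatic that they are contained in any admissible set $U_{x'}$ centered at a point with a finite subgradient. The paper's proof adds exactly this step: for such an $x$ it takes points $x_n$ on the segment from a relative-interior point $x_0$ to $x$ at which $G$ is differentiable (almost-everywhere differentiability of convex functions), and shows $D_{\score}(x,x_n)\to 0$ by sandwiching $\langle \nabla G(x_n), x-x_n\rangle$ between $\frac{\theta_n}{1-\theta_n}\big(G(x_n)-G(x_0)\big)$ and $G(x)-G(x_n)$, both of which vanish, so that $x\in U_{x_n}$ for large $n$ and the cover by finite-subgradient centers still exhausts $\mathcal{C}$. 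So your proof is complete under the stated hypothesis and is the simpler half of the paper's argument, but your closing claim that no further care is ever needed overlooks this degenerate case, which is precisely the extra content the paper's proof supplies and the reason its net points are taken at differentiable points.
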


\begin{proof}
Fix $\varepsilon>0$. 

\paragraph{Finite subgradient
$\psi(x)$}

For each $x\in\mathcal C$ with a finite subgradient
$\psi(x)$, consider the set
\[
U_x := \{y\in\mathcal C : D_{\score}(y,x)<\varepsilon\}.
\]

We start to show that the set $U_x$ is open in the \emph{relative topology} on $\mathcal C$
(i.e., $U_x = O\cap\mathcal C$ for some open set $O\subset\mathbb R^d$).

Consider the function
\[
y \mapsto D_{\score}(y,x)
   = G(y)-G(x)-\langle \psi(x),y-x\rangle .
\]
Since $G$ is continuous and $y\mapsto\langle \psi(x),y\rangle$ is linear as $\psi(x)$ is finite, this map is continuous on $\mathcal C$. Moreover, $D_{\score}(x,x)=0$, so the set
\[
U_x := \{y\in\mathcal C : D_{\score}(y,x)<\varepsilon\}
\]
contains $x$. Because it is the preimage of the open interval
$(-\infty,\varepsilon)$ under a continuous map $\mathcal C\to\mathbb R$,
the set $U_x$ is open in the \emph{relative topology} on $\mathcal C$.

\paragraph{Infinite subgradient
$\psi(x)$} The above analysis requires $\psi(x)$ to be finite. It remains to justify that every point $x\in\mathcal C$ is covered
by at least one of the sets $U_{x'}$, even if $\psi(x)$ is not finite.

We begin by selecting a point \( x_0 \neq x \) from the relative interior of \( \mathcal{C} \). Consider the line segment from \( x_0 \) to \( x \). Since \( G \) is differentiable almost everywhere, we will show that there exists a sequence of differentiable points \( \{x_n\} \) on this segment, with \( x_n = \theta_n x_0 + (1-\theta_n)x \) and \( \theta_n \in (0,1] \), such that \( x_n \to x \) and
\[
\lim_{n\to\infty} D_{\score}(x,x_n) = \lim_{n\to\infty} \left( G(x) - G(x_n) - \langle \nabla G(x_n), x - x_n \rangle \right) = 0.
\]

We use $\nabla G(x_n)$ instead of $\psi(x_n)$ because $x_n$ is differentiable and has a unique subgradient $\psi(x_n)=\nabla G(x_n)$.

When \( x_n \) goes to \(x\), 

\[
\lim_{n\to\infty} D_{\score}(x,x_n) = \lim_{n\to\infty} \left( G(x) - G(x_n) - \langle \nabla G(x_n), x - x_n \rangle \right) =  -\langle \nabla G(x_n), x - x_n \rangle .
\]

Thus, we mainly analyze the behavior of the inner product \( \langle \nabla G(x_n), x - x_n \rangle \). We cannot directly prove that it goes to zero when  \( x_n \to x \), because $\nabla G(x_n)$ can go to infinite.

We can express \( x - x_n \) in terms of \( x_0 \) and \( x_n \). Note that \( x_n = \theta_n x_0 + (1-\theta_n)x \) implies \( x - x_n = \frac{\theta_n}{1-\theta_n}(x_n - x_0) \). By the convexity of \( G \) and the definition of its subgradient at the differentiable point \( x_n \), we have:
\[
\langle \nabla G(x_n), x_0 - x_n \rangle \leq G(x_0) - G(x_n).
\]
Multiplying both sides by \( -\frac{\theta_n}{1-\theta_n} \) (a negative quantity) yields:
\[
\langle \nabla G(x_n), x - x_n \rangle \geq \frac{\theta_n}{1-\theta_n} \left( G(x_n) - G(x_0) \right). 
\]

We also have $\langle \nabla G(x_n), x - x_n\rangle \leq G(x)-G(x_n)$.

As \( n \to \infty \), we have \( x_n \to x \) and \( \theta_n \to 0 \). Both the lower bound and upper bound of \(\langle \nabla G(x_n), x - x_n \rangle\) converge to zero. This forces:
\[
\lim_{n\to\infty} \langle \nabla G(x_n), x - x_n \rangle = 0.
\]

Hence \(\lim_{n\to\infty} D_{\score}(x_n, x) = \lim_{n\to\infty} \left( G(x) - G(x_n) - \langle \nabla G(x_n), x - x_n \rangle \right) = 0.\) For $n$ sufficiently large,
$D_{\score}(x,x_n)<\varepsilon$. That is, \ $x\in U_{x_n}$.

Thus every point of $\mathcal C$ is covered by at least one $U_{x'}$ with $x'$ admitting
a finite subgradient.  Therefore the family $\{U_{x'}\}$ forms an open cover
of $\mathcal C$.Since $\mathcal C$ is compact we extract a finite subcover $U_{n_1},\dots,U_{n_N}$.
The set $N=\{n_1,\dots,n_N\}$ then satisfies: for every $x\in\mathcal C$
there exists $n_i\in N$ with $D_{\score}(x,n_i)<\varepsilon$. This
is the desired finite $\varepsilon$--net.
\end{proof}

\section{Mutual Calibration is Equivalent to No-Arbitrage}\label{sec:arbitrage}

We first rephrase the result in \citet{arieli2021feasible,morris2020no}. Given $\prob_{\predRV_1,\predRV_2}$, \(\predRV_1\) and \(\predRV_2\), are said to be \textbf{mutually calibrated} if there exists $\prob_{\predRV_1,\predRV_2,\staterv}$ extending $\prob_{\predRV_1,\predRV_2}$ such that both \(\predRV_1\) and \(\predRV_2\) are calibrated. 

In the literature, this framework is referred to as the common prior assumption combined with rationality, where \(\prob\) represents the common prior.

This concept admits an appealing interpretation through the lens of a \textbf{no-trading theorem}: 
if forecasts are mutually calibratable, then no external agent can exploit their joint distribution to construct arbitrage opportunities.

\subsection{Mutual Calibratability = No-Arbitrage}

\paragraph{Economic interpretation}

Consider a prediction market for an uncertain outcome $\state$. A data-rich outsider---think of a market-making institution or platform operator---observes each predictor's full predictive distribution and can offer customized contracts to each predictor conditional on their own prediction. After observing predictor $i$'s prediction $\pred_i$, the outsider commits to a contract that pays $t_i(\pred_i,\state)$ when the outcome realizes as $\state$. Predictors are free to accept or decline any such offer. A rational predictor with belief $\pred_i$ will accept any contract whose expected value under their own belief is nonnegative.

To guarantee acceptance for every possible report, the outsider can restrict attention to zero-mean contracts---what we call zero-trades---defined by
$$
\mathbb{E}_{\staterv\sim \pred_i}\big[t_i(\pred_i,\staterv)\big]\;=\;0\quad\text{for all reports }\pred_i.
$$

When two predictors predict $(\pred_1,\pred_2)$, the platform may offer different zero-trade contracts to each. Its realized profit upon outcome $\state$ is the negative of the total payout,
$$
-\,t_1(\pred_1,\state)\;-\;t_2(\pred_2,\state).
$$
Let $\pro=\pro_{\predRV_1,\predRV_2,\staterv}$ denote the common prior over the two predictions and the outcome. The outsider’s expected profit under a given zero-trade scheme $(t_1,t_2)$ is
$$
-\,\mathbb{E}_{\pro}\!\left[t_1(\predRV_1,\state)\;+\;t_2(\predRV_2,\state)\right].
$$
The arbitrage space between the predictors, represented as the maximal extractable profit, can be written as
$$
\max_{\text{zero-trade }(t_1,t_2)} -\,\mathbb{E}_{\pro}\!\left[t_1(\predRV_1,\state)\;+\;t_2(\predRV_2,\state)\right]
\;=\;
-\,\min_{\text{zero-trade }(t_1,t_2)} \mathbb{E}_{\pro}\!\left[t_1(\predRV_1,\state)\;+\;t_2(\predRV_2,\state)\right].
$$
The no-trade theorem \cite{milgrom1982information} implies that the arbitrage space is zero when predictions and outcomes are drawn from a common prior: there is no profitable way to make everyone accept a zero-trade and still earn positive expected profit.

In the context of mutual calibration and the setting of \citet{arieli2021feasible,morris2020no}, we can not access the ground truth $\state$. We only observe the joint distribution of reports, $\pro_{\predRV_1,\predRV_2}$, rather than the full prior $\pro=\pro_{\predRV_1,\predRV_2,\staterv}$. If the outsider evaluates profits using some belief $\kernel(\staterv\mid \predRV_1,\predRV_2)$ about the outcome conditional on the reports, then the expected profit of a zero-trade scheme is
$$
-\,\mathbb{E}_{\pro_{\predRV_1,\predRV_2}}\Big[\,\mathbb{E}_{\staterv\sim \kernel(\cdot\mid \predRV_1,\predRV_2)}\big[t_1(\predRV_1,\staterv)\;+\;t_2(\predRV_2,\staterv)\big]\,\Big].
$$

\paragraph{Mutual Calibratability = No-Arbitrage}

A lower bound on the arbitrage space---given only $\pro_{\predRV_1,\predRV_2}$ and no access to $\state$---is obtained by replacing the outsider’s belief with the worst-case state realization:
$$
\text{Arb}(\pro_{\predRV_1,\predRV_2})
\;=\;
-\,\min_{\text{zero-trade }(t_1,t_2)}
\mathbb{E}_{\pro_{\predRV_1,\predRV_2}}\!\left[\max_{\state}\,\big\{t_1(\predRV_1,\state)\;+\;t_2(\predRV_2,\state)\big\}\right].
$$
Recent results show that if the two predictors are mutually calibrated, then $\text{Arb}(\pro_{\predRV_1,\predRV_2})=0$ for binary case \cite{arieli2021feasible} and non-binary extension \cite{morris2020no}. 

\begin{proposition} \cite{morris2020no} 
Given \( \prob_{\predRV_1,\predRV_2} \), there exists a joint distribution \( \prob_{\predRV_1,\predRV_2,\staterv} \) such that both forecasts \( \predRV_1 \) and \( \predRV_2 \) are calibrated with respect to \( \state \) if and only if $\text{Arb}(\pro_{\predRV_1,\predRV_2})=0$. 
\end{proposition}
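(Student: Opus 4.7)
The result is a standard LP-duality equivalence. The plan is to recognize the two sides as a primal-dual pair of finite linear programs: the set $\mathcal{M}$ of mutually calibrated joint extensions $\prob_{\predRV_1,\predRV_2,\staterv}$ is the feasible region of one LP, and $-\text{Arb}(\prob_{\predRV_1,\predRV_2})$ is the optimal value of its dual. With that identification in hand, strong duality (valid because $\predRV_1,\predRV_2$ have finite support) will deliver both directions of the equivalence.

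The forward direction is direct. Suppose $\prob\in\mathcal{M}$ witnesses mutual calibration and $(t_1,t_2)$ is any zero-trade pair. Calibration of $\predRV_i$ under $\prob$ gives
\[
\E_\prob\big[t_i(\predRV_i,\staterv)\mid \predRV_i=\pred_i\big]=\sum_\state \pred_i(\state)\,t_i(\pred_i,\state)=0,
\]
so $\E_\prob[t_1+t_2]=0$. Since $\max_\state(t_1+t_2)\ge \E_{\staterv\mid \predRV_1,\predRV_2}[t_1+t_2]$ pointwise, taking expectation under $\prob_{\predRV_1,\predRV_2}$ yields $\E_{\prob_{\predRV_1,\predRV_2}}[\max_\state(t_1+t_2)]\ge 0$. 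Hence $\text{Arb}\le 0$, and the choice $t_1=t_2=0$ matches this bound, so $\text{Arb}=0$.

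For the converse I will rewrite $-\text{Arb}$ as a linear program by introducing auxiliary variables $\phi(\pred_1,\pred_2)$ majorizing $t_1+t_2$ over $\state$, obtaining
\[
\min_{\phi,t_1,t_2}\;\E_{\prob_{\predRV_1,\predRV_2}}\big[\phi(\predRV_1,\predRV_2)\big]\quad \text{s.t.}\quad \phi(\pred_1,\pred_2)\ge t_1(\pred_1,\state)+t_2(\pred_2,\state),\ \sum_\state \pred_i(\state)\,t_i(\pred_i,\state)=0.
\]
Attaching nonnegative multipliers $\mu(\pred_1,\pred_2,\state)$ to the inequalities and free multipliers $\lambda_i(\pred_i)$ to the zero-trade equalities, stationarity in $\phi$ forces $\sum_\state \mu(\pred_1,\pred_2,\state)=\prob_{\predRV_1,\predRV_2}(\pred_1,\pred_2)$, while stationarity in $t_i$ gives $\sum_{\pred_{j}} \mu(\pred_1,\pred_2,\state)=\lambda_i(\pred_i)\,\pred_i(\state)$ for $j\ne i$; summing the latter over $\state$ and combining with the first relation forces $\lambda_i(\pred_i)=\prob_{\predRV_i}(\pred_i)$, which is exactly the calibration of $\predRV_i$ under $\mu$. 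Thus the dual collapses to the feasibility problem $\mu\in\mathcal{M}$, and strong LP duality yields $\text{Arb}=0$ iff $\mathcal{M}\neq\emptyset$.

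The main obstacle is the clean algebraic identification of the dual---specifically, verifying that the zero-trade multipliers $\lambda_i$ reduce to the correct marginals $\prob_{\predRV_i}(\pred_i)$, so that the dual constraints become exactly the two calibration conditions rather than some rescaled variant. Once this is in place, infeasibility of $\mathcal{M}$ makes the primal LP unbounded below (any separating zero-trade direction can be scaled), giving $\text{Arb}=+\infty>0$ and closing the contrapositive of the equivalence.
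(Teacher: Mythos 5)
Your proof is correct, but it reaches the result by a genuinely different route than the paper. You linearize the inner maximum with an epigraph variable $\phi$ and dualize the resulting finite LP, showing that dual feasibility is \emph{exactly} the set of joint extensions under which both $\predRV_1$ and $\predRV_2$ are calibrated (your verification that the free multipliers collapse to the marginals $\prob_{\predRV_i}(\pred_i)$ is the right and necessary check); strong duality plus the standard ``primal feasible and dual infeasible implies primal unbounded'' fact then closes both directions, with $\text{Arb}=+\infty$ in the non-calibratable case since zero trades are unscaled here. The paper instead frames the problem as a zero-sum game between an Outsider (choosing zero trades) and Nature (choosing the conditional kernel $\kernel(\cdot\mid\predRV_1,\predRV_2)$), invokes von Neumann's minimax theorem to let Nature move first, argues the two cases qualitatively (a common prior neutralizes every zero trade; absent mutual calibration the Outsider profits against any kernel), and recovers the $\max_{\state}$ form of $\text{Arb}$ by observing that Nature's best response to fixed trades concentrates mass on a single outcome. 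The two arguments are duality twins, but yours buys a fully explicit identification of calibration constraints as dual variables and avoids the compactness issue the paper glosses over when asserting bounded strategy domains for the minimax swap (zero trades are unbounded as defined), while the paper's game-theoretic framing buys the economic no-trade interpretation and the direct appearance of the worst-case-state formula. Your easy direction (calibration implies every zero trade has nonnegative expected worst-case payout, with $t_1=t_2=0$ attaining zero) matches the paper's in substance.
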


Economically, mutual calibration can be interpreted as the condition that eliminates any guaranteed arbitrage between the predictors. Though this result has been proved, we provide a proof here for completeness. 

\begin{proof}
Consider a zero-sum game played between an Outsider and Nature: The Outsider's goal is to make a positive profit. The Outsider has access to the joint distribution over the two forecasts, $\predRV_1$ and $\predRV_2$. Based on this, the Outsider chooses zero trades, $t_1,t_2$, that apply to both predictors. The outsider's payoff is $-t_1(\predRV_1,\state)-t_2(\predRV_2,\state)$ when predictors report $\predRV_1,\predRV_2$ and the outcome is $\state$. Nature's goal is to prevent the Outsider from making a profit. Nature does this by choosing $\prob_{\predRV_1,\predRV_2,\staterv}$.

\begin{equation}
-\min_{\substack{\text{zero trades:} t_1, t_2 }} \max_{\kernel(\cdot|\predRV_1,\predRV_2)} \E_{\predRV_1,\predRV_2\sim\prob_{\predRV_1,\predRV_2},\staterv\sim \kernel(\cdot|\predRV_1,\predRV_2)}[t_1(\predRV_1, \state) + t_2(\predRV_2, \state)] 
\end{equation}

Nature must choose \( \prob_{\predRV_1, \predRV_2, \state}\) that extends\(\prob_{\predRV_1, \predRV_2} \). This is equivalent to choosing the conditional probability distribution \( \kernel(\cdot|\predRV_1,\predRV_2)\).

Because the objective is bilinear and the strategy domains are convex and bounded, we can use the Minimax Theorem to exchange the operators. This allows us to analyze the game as if Nature moves first. If $\predRV_1,\predRV_2$ are mutually calibrated, then nature can pick their common prior such that no zero trade can profit. If $\predRV_1,\predRV_2$ are not mutually calibrated, regardless what nature chooses, the outsider can always make positive profit by playing after nature. 

Notice that, fix trades \( t_1 \) and \( t_2 \), the optimal strategy for nature is to concentrate the probability mass \( \prob(\state \mid \predRV_1, \predRV_2) \) to a single outcome. This simplification reduces the original min-max problem to the following form:

\[
-\,\min_{\text{zero-trade }(t_1,t_2)}
\mathbb{E}_{\pro_{\predRV_1,\predRV_2}}\!\left[\max_{\state}\,\big\{t_1(\predRV_1,\state)\;+\;t_2(\predRV_2,\state)\big\}\right].
\]

\end{proof}

\subsection{A Quantified Relationship}


\begin{definition}[MinECE]
Let $\predRV_1$ and $\predRV_2$ be two predictions with joint distribution $\prob_{\predRV_1,\predRV_2}$. Define the minimal sum of absolute calibration errors as

\[
\text{MinECE}(\prob_{\predRV_1,\predRV_2}) 
= \frac12 \min_{\kernel(\cdot \mid \predRV_1,\predRV_2)} 
\E_{\predRV_1,\predRV_2\sim\prob_{\predRV_1,\predRV_2},\staterv\sim \kernel(\cdot|\predRV_1,\predRV_2)}\Big[\, \|\predRV_1 - \mathbb{E}[\statervvec \mid \predRV_1]\|_1 + \|\predRV_2 - \mathbb{E}[\statervvec \mid \predRV_2]\|_1 \,\Big].
\]
\end{definition}

We use $\statevec, \statervvec \in \{0, 1\}^{|\statesp|}$ to denote the one-hot vector representations of state $\state$ and random state $\staterv$, respectively. When $\statesp=\{s_1,s_2,\cdots\}$. For a state $\state = s_i$, the vector $\statevec$ contains a 1 at index $i$ and 0 elsewhere. This representation maps the categorical state space to a vector space where the probability distribution is simply the expectation $\mathbb{E}[\statervvec]$.

MinECE represents the minimal sum of absolute calibration errors, minimized over all possible joint distributions $\prob_{\predRV_1,\predRV_2,\staterv}$ that extend $\prob_{\predRV_1,\predRV_2}.$ 

We introduce a geometric interpretation of MinECE. $\predsp_1$ and $\predsp_2$ denote the sets of all possible realizations of the predictors $\predRV_1$ and $\predRV_2$, respectively. 
Any pair of transformations 
\[
\calmap_1: \predsp_1 \to \Delta(\statesp), 
\qquad 
\calmap_2: \predsp_2 \to \Delta(\statesp)
\]
can be represented as an $\big(|\predsp_1|+|\predsp_2|\big)|\statesp|$-dimensional vector. We use $\calmap_0^{\predsp_i}$ to denote the identical transformation that maps every element in $\predsp_i$ to itself. 

Given a fixed marginal distribution $\prob_{\predRV_1, \predRV_2}$, 
define the feasible set of transformations as
\[
\mathcal{H}^{(2)}_{\mathrm{feas}}
=
\Big\{
(\calmap_1, \calmap_2):
\calmap_1(\predRV_1) \text{ and } \calmap_2(\predRV_2) \text{ are mutually calibrated under } 
\prob_{\predRV_1, \predRV_2}
\Big\}.
\]

The induced $\ell_1$ distance between two pairs of transformations 
$(\calmap_1, \calmap_2)$ and $(h'_1, h'_2)$ is defined by
\[
\|(\calmap_1, \calmap_2) - (h'_1, h'_2)\|_1
=
\E_{\mathbb{P}_{\predRV_1, \predRV_2}}\!\left[
\|\calmap_1(\predRV_1) - h'_1(\predRV_1)\|_1
+
\|\calmap_2(\predRV_2) - h'_2(\predRV_2)\|_1
\right].
\]

Then, 
\[
2\,\mathrm{MinECE}(\mathbb{P}_{\predRV_1, \predRV_2})=D_{\ell_1}\big((\calmap_0^{\predsp_1},\calmap_0^{\predsp_2}), \mathcal{H}^{(2)}_{\mathrm{feas}}\big)
:=
\min_{(\calmap_1,\calmap_2) \in \mathcal{H}^{(2)}_{\mathrm{feas}}}
\E_{\mathbb{P}_{\predRV_1,\predRV_2}}\!\left[
\|\calmap_1(\predRV_1) - \predRV_1\|_1 + \|\calmap_2(\predRV_2) - \predRV_2\|_1
\right]
\]
can be interpreted as the $\ell_1$ distance between the identical mapping and the feasible set 
$\mathcal{H}^{(2)}_{\mathrm{feas}}$; 
that is, it equals the minimal $\ell_1$ distance from the pair of identical transformations to any 
feasible pair $(\calmap_1, \calmap_2) \in \mathcal{H}^{(2)}_{\mathrm{feas}}$.

We subsequently connect this value to the arbitrage value, showing that a non-zero MinECE implies a discrepancy in market beliefs 
that can be exploited.

\begin{proposition}[Arbitrage value \& total absolute calibration error]\label{pro:arb} Let $\predRV_1$ and $\predRV_2$ be two predictions with joint distribution $\prob_{\predRV_1,\predRV_2}$, and let zero trades be bounded in $[-1,1]$. We have \[\text{MinECE}(\prob_{\predRV_1,\predRV_2}) \leq \text{Arb}(\prob_{\predRV_1,\predRV_2}) \leq 2\text{MinECE}(\prob_{\predRV_1,\predRV_2}) \]

In particular, \(\text{Arb}(\prob_{\predRV_1,\predRV_2}) = 0\) if and only if $\predRV_1$ and $\predRV_2$ are mutually calibrated.
\end{proposition}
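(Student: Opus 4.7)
The plan is to compute $\text{Arb}(\prob_{\predRV_1,\predRV_2})$ by setting up a zero-sum game between the arbitrageur (choosing zero-trades $(t_1,t_2)$ with $\|t_i\|_\infty\leq 1$) and nature (choosing a kernel $\kernel(\staterv\mid\predRV_1,\predRV_2)$ that extends $\prob_{\predRV_1,\predRV_2}$ to a joint distribution on $(\predRV_1,\predRV_2,\staterv)$). Rewriting the pointwise $\max_\state\{t_1+t_2\}$ as $\max$ over conditional distributions on $\staterv$, I would obtain
$$\text{Arb}(\prob_{\predRV_1,\predRV_2})=\max_{(t_1,t_2)}\;\min_{\kernel}\; -\E_{\prob\otimes\kernel}\!\left[t_1(\predRV_1,\staterv)+t_2(\predRV_2,\staterv)\right].$$
The objective is bilinear and both feasibility sets are finite-dimensional convex compact polytopes (since $\predsp_1,\predsp_2,\statesp$ are finite), so Sion's minimax theorem permits swapping the two operators.

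Fixing $\kernel$, the maximization over $(t_1,t_2)$ decouples across $i\in\{1,2\}$ because $t_i$ depends only on $\predRV_i$ and $\staterv$. Let $q_i(\pred_i):=\E_{\kernel}[\statervvec\mid\predRV_i=\pred_i]$ and $d_i(\pred_i):=\pred_i-q_i(\pred_i)$, which satisfies $\sum_\state d_i(\pred_i)_\state=0$. Using the zero-trade constraint to absorb the $-\sum_\state q_i(\pred_i)_\state t_\state$ term, the per-report subproblem at each $\pred_i$ becomes
$$\max_{t\in\R^{\statesp}}\;\sum_\state d_i(\pred_i)_\state\,t_\state\quad\text{s.t.}\quad\sum_\state \pred_i(\state)\,t_\state=0,\ |t_\state|\leq 1.$$
Dropping the zero-trade constraint yields the upper bound $\|d_i(\pred_i)\|_1$. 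For a matching lower bound up to a factor of two, I would exhibit the explicit feasible point
$$t_\state^\star=\tfrac12\bigl(\mathrm{sign}(d_i(\pred_i)_\state)-c\bigr),\qquad c:=\sum_\state \pred_i(\state)\,\mathrm{sign}(d_i(\pred_i)_\state),$$
which satisfies the zero-trade condition by construction, remains in $[-1,1]$ because $|c|\leq 1$, and attains objective value exactly $\tfrac12\|d_i(\pred_i)\|_1$. Summing over $i=1,2$ and taking $\prob_{\predRV_1,\predRV_2}$-expectations, the inner maximum for a fixed $\kernel$ lies in $\bigl[\tfrac12\E[\|d_1\|_1+\|d_2\|_1],\,\E[\|d_1\|_1+\|d_2\|_1]\bigr]$.

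Minimizing this two-sided bound over $\kernel$ and invoking the definition of $\text{MinECE}$ directly yields $\text{MinECE}(\prob_{\predRV_1,\predRV_2})\leq\text{Arb}(\prob_{\predRV_1,\predRV_2})\leq 2\,\text{MinECE}(\prob_{\predRV_1,\predRV_2})$. The ``in particular'' clause is then immediate: $\text{Arb}=0$ iff $\text{MinECE}=0$ iff some extension $\kernel$ drives both per-predictor calibration errors to zero, which is precisely mutual calibratability. The main obstacle will be justifying the minimax swap cleanly — the $[-1,1]$ bound on zero-trades is essential for compactness, and bilinearity plus compactness are what make Sion's theorem apply. The factor-of-two gap in the statement is intrinsic to this approach, arising because the zero-trade constraint generally prevents taking $t_\state=\mathrm{sign}(d_i(\pred_i)_\state)$ directly (e.g., when $\pred_i$ concentrates near a vertex of the simplex), which is the reason for the averaging with $c$ in the construction above.
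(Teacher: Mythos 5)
Your proposal is correct and follows essentially the same route as the paper: swap min and max by a minimax theorem over the compact convex sets of bounded zero-trades and kernels, reduce to the per-report problem of maximizing $\langle \pred_i - \E[\statervvec\mid\predRV_i], t\rangle$ subject to $\langle \pred_i, t\rangle = 0$ and $\|t\|_\infty \le 1$, bound it between $\tfrac12\|\cdot\|_1$ and $\|\cdot\|_1$ (your shifted, halved sign vector is exactly the construction in the paper's Lemma on the $\ell_1$ norm), and then take expectations and minimize over kernels to match the definition of MinECE. The only cosmetic difference is that you prove the per-report inequality inline rather than as a standalone lemma.
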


The proof relies on the following lemma that bridges $\ell_1$ distance to maximal inner product.

\begin{lemma}\label{lem:onenorm}
For any distribution vectors \( \mathbf{p} \) and \( \mathbf{q} \), the following inequality holds:
\[
\frac{1}{2} \|\mathbf{p} - \mathbf{q}\|_1 \leq \max_{\|\mathbf{v}\|_\infty \leq 1, \langle \mathbf{v}, \mathbf{p} \rangle = 0} \langle \mathbf{v}, \mathbf{q} \rangle \leq \|\mathbf{p} - \mathbf{q}\|_1,
\]
where \( \langle \mathbf{v}, \mathbf{q} \rangle \) denotes the dot product of \( \mathbf{v} \) and \( \mathbf{q} \), \( \|\mathbf{v}\|_\infty = \max_i |\mathbf{v}_i| \), and \( \|\mathbf{p} - \mathbf{q}\|_1 \) is the \( \ell_1 \)-norm of the difference between \( \mathbf{p} \) and \( \mathbf{q} \). 
\end{lemma}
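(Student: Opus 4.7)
}
The plan is to handle the two inequalities separately, since the upper bound is a direct application of Hölder's inequality while the lower bound requires constructing an explicit witness $\mathbf{v}$.

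For the upper bound, I would exploit the constraint $\langle \mathbf{v}, \mathbf{p}\rangle = 0$ to rewrite $\langle \mathbf{v}, \mathbf{q}\rangle = \langle \mathbf{v}, \mathbf{q} - \mathbf{p}\rangle$, and then apply Hölder's inequality for the $(\ell_\infty, \ell_1)$ pairing together with $\|\mathbf{v}\|_\infty \le 1$. This yields $\langle \mathbf{v}, \mathbf{q}\rangle \le \|\mathbf{q} - \mathbf{p}\|_1$ for every feasible $\mathbf{v}$, hence also for the maximum.

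For the lower bound, I would exhibit an explicit feasible $\mathbf{v}$ attaining the value $\tfrac12\|\mathbf{p}-\mathbf{q}\|_1$. Partition the index set into $S^+ = \{i : q_i \ge p_i\}$ and $S^- = \{i : q_i < p_i\}$, and set $P^\pm = \sum_{i \in S^\pm} p_i$, $Q^\pm = \sum_{i \in S^\pm} q_i$. Since both $\mathbf{p}$ and $\mathbf{q}$ are probability vectors, $P^+ + P^- = Q^+ + Q^- = 1$, and a short computation gives $\tfrac12\|\mathbf{p}-\mathbf{q}\|_1 = Q^+ - P^+ = P^- - Q^-$. I would then define $\mathbf{v}$ to be $+P^-$ on $S^+$ and $-P^+$ on $S^-$. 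By construction, $\|\mathbf{v}\|_\infty \le \max(P^+, P^-) \le 1$, and $\langle \mathbf{v}, \mathbf{p}\rangle = P^- P^+ - P^+ P^- = 0$, so $\mathbf{v}$ is feasible. The value is $\langle \mathbf{v}, \mathbf{q}\rangle = P^- Q^+ - P^+ Q^-$; using $P^- = 1 - P^+$ and $Q^- = 1 - Q^+$, this simplifies to $Q^+ - P^+ = \tfrac12 \|\mathbf{p}-\mathbf{q}\|_1$.

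The only subtlety is a degenerate case where $P^+ = 0$ or $P^- = 0$ (i.e., $\mathbf{p}$ is supported entirely on one side of the comparison), which I expect to be the main bookkeeping obstacle rather than a conceptual one. In that regime the construction above produces the zero vector; I would patch it by taking $\mathbf{v} \equiv 1$ on $S^+$ and $\mathbf{v} \equiv 0$ on $S^-$ (or the symmetric choice if $P^- = 0$), which still satisfies $\langle \mathbf{v}, \mathbf{p}\rangle = 0$ and yields $\langle \mathbf{v}, \mathbf{q}\rangle = Q^+ = Q^+ - P^+ = \tfrac12\|\mathbf{p}-\mathbf{q}\|_1$. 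Combining the two cases gives a feasible witness achieving at least $\tfrac12\|\mathbf{p}-\mathbf{q}\|_1$, completing the lower bound.
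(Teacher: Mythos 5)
Your proposal is correct and follows essentially the same route as the paper: the upper bound is the same Hölder argument, and your blockwise witness (value $P^-$ on $S^+$ and $-P^+$ on $S^-$) is exactly the paper's vector $\tfrac{1}{2}\bigl(\mathbf{u}-\langle\mathbf{u},\mathbf{p}\rangle\mathbf{1}\bigr)$ with $\mathbf{u}=\mathrm{sign}(\mathbf{q}-\mathbf{p})$, since $P^+ + P^- = 1$. The degenerate case you flag needs no patch: $P^-=0$ forces $S^-=\emptyset$ (as $q_i<p_i=0$ is impossible), hence $\mathbf{p}=\mathbf{q}$ and the bound is trivial, while for $P^+=0$ your original construction already attains $Q^+=\tfrac{1}{2}\|\mathbf{p}-\mathbf{q}\|_1$.
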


\begin{proof}[Proof of \Cref{pro:arb}]

Recall the zero-sum game between nature and outsider:

\begin{equation}
-\min_{\substack{\text{zero trades:} t_1, t_2 }} \max_{\kernel(\cdot|\predRV_1,\predRV_2)} \E_{\predRV_1,\predRV_2\sim\prob_{\predRV_1,\predRV_2},\staterv\sim \kernel(\cdot|\predRV_1,\predRV_2)}[t_1(\predRV_1, \state) + t_2(\predRV_2, \state)] 
\end{equation}

For the arbitrage value, we begin by interchanging the min–max operators. This swap is justified by von Neumann’s minimax theorem and the convex–concave structure of the payoff function. Under this formulation, Nature moves first by selecting $\kernel(\cdot|\predRV_1,\predRV_2)$, that is equivalent as selecting joint distribution \(\prob_{\predRV_1,\predRV_2,\staterv}\) that extend the given marginal \(\prob_{\predRV_1,\predRV_2}\). The Outsider responds by choosing zero trades.

For a fixed joint law \(\prob_{\predRV_1,\predRV_2,\staterv}\) (shorthand: $\prob$), the Outsider's payoff can be written as
\[
V(\prob) \;:=\; \max_{\substack{\text{zero trades } t_1,t_2}} \E_{\prob}[-t_1(\predRV_1,\state) - t_2(\predRV_2,\state)].\]

(We use max operator here because we have put the negative sign inside.) 

By definition of \(\text{Arb}\),
\[
\text{Arb}(\prob_{\predRV_1,\predRV_2}) = \min_{\kernel(\cdot\mid \predRV_1,\predRV_2)} V(\prob).
\]
Thus it suffices to show, for every fixed \(\prob\),
\[
\text{MinECE}(\prob_{\predRV_1,\predRV_2}) \;\le\; V(\prob) \;\le\; 2\,\text{MinECE}(\prob_{\predRV_1,\predRV_2}).
\]

Fix a law \(\prob\). For each realized value of \(\predRV_1\) define
\[
\mathbf{p}_1=\predRV_1,\qquad
\mathbf{q}_1=\E[\statervvec\mid \predRV_1],
\]
and for any zero trade \(t_1\) set
\[
\mathbf{v}_1(\state) = -t_1(\predRV_1,\state).
\]
Since \(t_1\) is a zero trade we have \(\langle \mathbf{v}_1,\mathbf{p}_1\rangle=0\). Moreover,
\[
\langle \mathbf{v}_1,\mathbf{q}_1\rangle
= \E_{\prob}[-t_1(\predRV_1,\state)\mid \predRV_1],
\qquad
\|\mathbf{p}_1-\mathbf{q}_1\|_1 = \|\predRV_1-\E[\statervvec\mid \predRV_1]\|_1.
\]

Applying Lemma~\ref{lem:onenorm} (with \(\mathbf{p}=\mathbf{p}_1,\mathbf{q}=\mathbf{q}_1,\mathbf{v}=\mathbf{v}_1\)) yields, for each \(\predRV_1\),
\[
\frac12 \|\predRV_1-\E[\statervvec\mid \predRV_1]\|_1
\;\le\; \max_{t_1}\E_{\prob}[-t_1(\predRV_1,\state)\mid \predRV_1]
\;\le\; \|\predRV_1-\E[\statervvec\mid \predRV_1]\|_1.
\]
Taking an expectation over \(\predRV_1\) gives
\[
\frac12 \E_{\prob}\big[\,\|\predRV_1-\E[\statervvec\mid \predRV_1]\|_1\big]
\;\le\; \max_{t_1}\E_{\prob}[-t_1(\predRV_1,\state)]
\;\le\; \E_{\prob}\big[\,\|\predRV_1-\E[\statervvec\mid \predRV_1]\|_1\big].
\]

An identical argument for \(\predRV_2\) yields
\[
\frac12 \E_{\prob}\big[\,\|\predRV_2-\E[\statervvec\mid \predRV_2]\|_1\big]
\;\le\; \max_{t_1}\E_{\prob}[-t_1(\predRV_2,\state)]
\;\le\; \E_{\prob}\big[\,\|\predRV_2-\E[\statervvec\mid \predRV_2]\|_1\big].
\]

Because the Outsider can choose \(t_1\) and \(t_2\) independently,
\[
\E_{\prob}\big[\,\|\predRV_1-\E[\statervvec\mid \predRV_1]\|_1 + \|\predRV_2-\E[\statervvec\mid \predRV_2]\|_1\,\big]
\;\le\; V(\prob)
\;\le\; 2\,\E_{\prob}\big[\,\|\predRV_1-\E[\statervvec\mid \predRV_1]\|_1 + \|\predRV_2-\E[\statervvec\mid \predRV_2]\|_1\,\big].
\]

Finally, minimizing the left- and right-hand sides over Nature's choice \(\kernel(\cdot\mid \predRV_1,\predRV_2)\) yields
\[
\text{MinECE}(\prob_{\predRV_1,\predRV_2})
\;\le\; \text{Arb}(\prob_{\predRV_1,\predRV_2})
\;\le\; 2\,\text{MinECE}(\prob_{\predRV_1,\predRV_2}),
\]
which is the desired bound.
\end{proof}

\begin{remark}
    Here the definition of ECE is slightly different from the commonly used one in machine learning model calibration literature. The most commonly used definition of ECE only cares about the confidence of the model's predictions. The definition of MinECE we use above is more related to the concept of \textit{multi-class calibration} proposed by \citet{kull2019beyond}, which states that the proportion of all possible outcomes among all possible instances getting the same forecast $\pred$ matches the prediction vector $\pred$. This is a stronger requirement.
\end{remark}

\begin{proof}[Proof of \Cref{lem:onenorm}]
The upper bound follows straightforwardly from Hölder’s inequality:
\[
\langle \mathbf{v}, \mathbf{p}-\mathbf{q} \rangle \leq \|\mathbf{v}\|_\infty \|\mathbf{p} - \mathbf{q}\|_1.
\]
Since \( \|\mathbf{v}\|_\infty \leq 1 \), we have
\[
\langle \mathbf{v}, \mathbf{q} \rangle \leq \|\mathbf{p} - \mathbf{q}\|_1,
\]
proving the upper bound.

To prove the lower bound, consider a vector \( \mathbf{u} \) such that \( \|\mathbf{u}\|_\infty \leq 1 \) and \( \langle \mathbf{u}, \mathbf{q} \rangle = \|\mathbf{p} - \mathbf{q}\|_1 \). Next, define \( \mathbf{v}^* \) as
\[
\mathbf{v}^* = \frac{\mathbf{u} - \langle \mathbf{u}, \mathbf{p} \rangle \mathbf{1}}{2}.
\]
$\mathbf{1}$ denotes the vector whose coordinates are all one. 
Notice that \( \|\mathbf{v}^*\|_\infty \leq 1 \) and \( \langle \mathbf{v}^*, \mathbf{p} \rangle = 0 \), as required. 

Now, we can compute the value of \( \langle \mathbf{v}^*, \mathbf{q} \rangle \):
\[
\langle \mathbf{v}^*, \mathbf{q} \rangle = \frac{1}{2} \left( \langle \mathbf{u}, \mathbf{q} \rangle - \langle \mathbf{u}, \mathbf{p} \rangle \right) = \frac{1}{2} \|\mathbf{p} - \mathbf{q}\|_1.
\]

Thus, we have
\[
\max_{\|\mathbf{v}\|_\infty \leq 1, \langle \mathbf{v}, \mathbf{p} \rangle = 0} \langle \mathbf{v}, \mathbf{q} \rangle \geq \langle \mathbf{v}^*, \mathbf{q} \rangle = \frac{1}{2} \|\mathbf{p} - \mathbf{q}\|_1.
\]

This completes the proof.
\end{proof}

\section{Experiment Details}
\label{app:experiment details}

\subsection{Datasets Details}
\label{sec:dataset details}

We evaluate our method on two established benchmarks. Both are formulated as multiple-choice tasks, though they differ in the number of answer options:
\begin{itemize}
    \item \textbf{MMLU-Redux~\cite{gema2025we}.} This dataset is a \textbf{4-option} multiple-choice benchmark containing a manually curated subset of \textbf{MMLU}~\cite{hendrycks2021measuring}, designed to remove annotation errors and ambiguous questions. We utilize the test split for evaluation, which contains 5,700 samples. Since MMLU-Redux lacks a distinct validation set, we employ the validation split of the original MMLU dataset to fit supervised baselines, which contains 1,531 samples.

    \item \textbf{CommonSenseQA (CSQA)~\cite{talmor2019commonsenseqa}.} This dataset features \textbf{5-option} multiple-choice questions requiring various types of commonsense knowledge. We evaluate our method on the validation split, which contains 1,221 samples, as the labels for test split are unavailable. For supervised baselines, we utilize the training split with 9,741 samples as the extra labeled dataset.
\end{itemize}

\subsection{Implementation Details}
\label{sec:implementation details}

\subsubsection{Prompting Strategy and Confidence Extraction}

We describe the prompting strategies and the subsequent procedure for extracting confidence scores.

\paragraph{Prompting Strategy.} 
Following previous studies~\cite{hendrycks2021measuring,wang-etal-2024-answer-c}, we use standard zero-shot prompts to directly elicit the answer from the first token log probabilities. This works well for \textbf{Base} models, but \textbf{Instruct} models typically generate reasoning contents before the final answer, making direct next-token probability extraction unreliable. To employ their CoT capabilities~\cite{wei2022cot}, we adopt a two-step approach widely used in instructing LLMs to verbalize confidence scores~\cite{lin2022teaching,tian-etal-2023-just}. We first prompt the model to generate a reasoning process using the standard zero-shot prompts. Subsequently, we append this reasoning process along with an answer-triggering phrase to extract the next-token probability. Specific prompt templates are detailed in Figure~\ref{fig:prompts base models} and Figure~\ref{fig:prompts instruct models}.

\paragraph{Confidence Extraction.}
After extract the log-probabilities from LLMs, we construct the (answer, confidence) pair through the following renormalization process. Let $\statesp$ denote the set of valid answer tokens (e.g., $\statesp = \{\text{A}, \text{B}, \text{C}, \text{D}\}$). We denote the raw log-probability of a token $\state \in \statesp$ assigned by model $\mathcal{M}$ given input $\mathbf{x}$ as $\mathcal{P}_{\mathcal{M}}(\state|\mathbf{x})$. Since $\statesp$ is a subspace of the full vocabulary, we re-normalize the distribution over $\statesp$ using a restricted softmax:
\begin{equation*}
    \hat{\mathcal{P}}_{\mathcal{M}}(\state|\mathbf{x}) = \frac{\exp(\mathcal{P}_{\mathcal{M}}(\state|\mathbf{x}))}{\sum_{\state' \in \statesp} \exp(\mathcal{P}_{\mathcal{M}}(\state'|\mathbf{x}))},
\end{equation*}
The token with the highest re-normalized probability is selected as the predicted answer, with $\hat{\mathcal{P}}_{\mathcal{M}}(\state|\mathbf{x})$ serving as the associated confidence score.

\subsubsection{Joint Distribution Estimation}

To compute the recalibration map, we approximate the joint distribution $\prob_{\repRV_0, \repRV_1}$ using the empirical distribution derived from the dataset. Let the report from a predictor be denoted as a pair $\rep=(\hat{\state}, \hat{c})$, consisting of an answer $\hat{\state} \in \statesp$ and a continuous confidence score $\hat{c} \in [0, 1]$. 

Given the continuous nature of confidence scores, direct estimation of the joint distribution on finite samples is theoretically sparse. Therefore, we utilize a discretization process with a step size $s\in[0,1]$. We define a quantization function $\psi_s: [0, 1] \to \mathcal{C}_{s}$ that maps a continuous confidence score to the nearest value on the discrete grid $\mathcal{C}_{s} = \{0, s, 2s, \dots, 1\}$. Specifically, for an observed report $\rep_{i} = (\hat{\state}_i, \hat{c}_i)$, the discretized report is given by $\tilde{\rep}_{i} = (\hat{\state}_i, \psi_s(\hat{c}_i))$.

Let $\{(\tilde{\rep}_{0i}, \tilde{\rep}_{1i})\}_{i=1}^N$ denote the set of discretized report pairs generated by the \emph{Base} and \emph{Instruct} models for $N$ input samples in the evaluation dataset. The estimated joint distribution is then formulated as the normalized frequency of these discretized pairs:
\begin{equation}
    \hat{\prob}_{\repRV_0, \repRV_1} = \frac{1}{N} \sum_{i=1}^N \delta_{(\tilde{\rep}_{0i}, \tilde{\rep}_{1i})},
\end{equation}
where $\delta_{(\cdot)}$ is the Dirac delta mass. In our main experiments, we set $s = 0.01$, ensuring a fine-grained yet computationally tractable grid for probability estimation.

\begin{figure}[htbp]
    \centering
    \begin{tcolorbox}[colback=gray!10, title={Standard zero-shot prompts for \emph{base} model},fontupper=\ttfamily]
        The following is a multiple-choice question about \{subject\}.  \\
        \\
        Question: \{task\_query\} \\
        A. \{choiceA\} \\
        B. \{choiceB\} \\
        C. \{choiceC\} \\
        D. \{choiceD\} \\
        E. \{choiceE\} \% only for CommonSenseQA \\
        \\
        Answer:
    \end{tcolorbox}
    \caption{Standard zero-shot prompt template utilized for \textit{Base} models to directly elicit the first token probability following the question. The \texttt{\{subject\}} placeholder is instantiated with the specific domain for MMLU/MMLU-Redux or \texttt{common} \texttt{sense} for CommonsenseQA, and \texttt{\{task\_query\}},\texttt{\{choice\}} placeholders are instantiated with the question and choices of the specific sample.}
    \label{fig:prompts base models}
\end{figure}

\definecolor{keywordcolor}{RGB}{0, 0, 150}
\begin{figure}[htbp]
    \centering
    \begin{tcolorbox}[colback=gray!10, title={Two-step zero-shot CoT prompts for \emph{instruct} model},fontupper=\ttfamily]

        \textbf{\textcolor{keywordcolor}{Step 1: Reasoning Generation}}
        \vspace{0.2cm}

        \textbf{System Prompts:} \\
        The following is a multiple-choice question about \{subject\}. 
        
        \vspace{0.2cm}
        \hrule
        \vspace{0.2cm}
        
        \textbf{User Prompt:} \\
        Question: \{task\_query\} \\
        A. \{choiceA\} \\
        B. \{choiceB\} \\
        C. \{choiceC\} \\
        D. \{choiceD\} \\
        E. \{choiceE\} \% only for CommonSenseQA \\

        \textbf{Model Response:} \textcolor{blue}{\{reasoning\_process\}} \\

        \vspace{0.2cm}
        \textbf{\textcolor{keywordcolor}{Step 2: Answer Extraction}}
        \vspace{0.2cm}

        \textbf{System Prompts:} \\
        The following is a multiple-choice question about \{subject\}. 
        
        \vspace{0.2cm}
        \hrule
        \vspace{0.2cm}
        
        \textbf{User Prompt:} \\
        Question: \{task\_query\} \\
        A. \{choiceA\} \\
        B. \{choiceB\} \\
        C. \{choiceC\} \\
        D. \{choiceD\} \\
        E. \{choiceE\} \% only for CommonSenseQA \\

        \vspace{0.2cm}
        \hrule
        \vspace{0.2cm}

        \textbf{Assistant Prompt:} \\
        \textcolor{blue}{\{reasoning\_process\}}. So the final answer is (answer with an option letter):
        
    \end{tcolorbox}
    
    \caption{Two-step zero-shot CoT prompt templates  designed for \textit{Instruct} models. We first prompt the model to generate a reasoning chain (Step 1), and subsequently append an answer-triggering phrase to extract the next-token probability (Step 2). The \texttt{\{subject\}} placeholder is instantiated with the specific domain for MMLU/MMLU-Redux or \texttt{common} \texttt{sense} for CommonsenseQA, and \texttt{\{task\_query\}},\texttt{\{choice\}} placeholders are instantiated with the question and choices of the specific sample.}
    \label{fig:prompts instruct models}
\end{figure}

\subsection{Baselines Details}
\label{sec:baselines details}

We compare our \textbf{label-free} method against several established \textbf{supervised} calibration techniques. Unlike our method, these baselines require a labeled validation set to learn a calibration map $\mathcal{T}: [0,1] \to [0,1]$. This map transforms the original confidence $\hat{c}$ into a calibrated score $\tilde{c}$ while preserving the original model prediction $\hat{\state}$.

Although these baselines are originally developed for full probability distributions over the outcome space $\statesp$, we adapt them to our (answer, confidence) setting by formulating calibration as a binary classification task. Specifically, we define the ground truth for the $i$-th sample as $z_i = \mathbb{I}(\hat{\state}_i = \state_i) \in \{0, 1\}$, indicating whether the predicted answer $\hat{\state}_i$ matches the true label $\state_i$. Consequently, the calibration dataset is constructed as pairs of initial confidence and correctness indicators: $\{(\hat{c}_i, z_i)\}_{i=1}^{N_{\text{val}}}$. Then the supervised baselines can be formulated as:

\begin{itemize}
    \item \textbf{Temperature Scaling (TS)}~\cite{guo2017calibration}. 
    Originally designed for multi-class logits, TS is adapted to binary calibration by scaling the log-odds of the confidence, which can also be viewed as a special case of \textbf{Platt Scaling}~\cite{platt1999probabilistic} with no bias. It optimizes a scalar temperature $T > 0$ to minimize the Negative Log Likelihood (NLL) on the calibration set:
    \begin{equation*}
        \min_{T} - \sum_{i=1}^{N_{\text{val}}} \Big[ z_i \log(\sigma(\phi_i/T)) + (1-z_i) \log(1 - \sigma(\phi_i/T)) \Big],
    \end{equation*}
    where $\phi_i = \text{logit}(\hat{c}_i) = \log(\frac{\hat{c}_i}{1-\hat{c}_i})$ denotes the uncalibrated log-odds, and $\sigma(\cdot)$ is the sigmoid function. The resulting calibrated confidence score is given by:
    \begin{equation*}
        \tilde{c} = \sigma\left(\frac{1}{T} \log\left(\frac{\hat{c}}{1-\hat{c}}\right)\right).
    \end{equation*}
    
    \item \textbf{Histogram Binning (HB)}~\cite{zadrozny2001obtaining}.
    This non-parametric method partitions the confidence interval $[0,1]$ into $B$ disjoint bins defined by boundaries $0 = a_1 < \dots < a_{B+1} = 1$. For any sample falling into bin $B_m = [a_m, a_{m+1})$, the calibrated score $\theta_m$ is estimated as the average empirical accuracy (i.e., the mean of $z_i$) within that bin. The calibration map can be viewed as minimizing the bin-wise squared loss:
    \begin{equation*}
        \min_{\theta_1, \dots, \theta_B} \sum_{m=1}^B \sum_{i=1}^{N_{\text{val}}} \mathbb{I}(\hat{c}_i \in B_m) (\theta_m - z_i)^2,
    \end{equation*}
    assigning $\tilde{c} = \theta_m$ wherever $\hat{c} \in B_m$. We use a uniform partition with $B=10$.
    
    \item \textbf{Isotonic Regression (IR)}~\cite{zadrozny2002transforming}.
    IR fits a piecewise constant, non-decreasing function $f$ to transform uncalibrated confidences. It solves a similar optimization problem to Histogram Binning, but jointly optimizes the bin boundaries and bin predictions under strict monotonicity constraints. Specifically, we can write the optimization problem as:
    \begin{equation*}
    \begin{aligned}
        \min_{f} & \sum_{i=1}^{N_{\text{val}}} (f(\hat{c}_i) - z_i)^2 \\
        \text{s.t.} & \quad \forall i, j: \hat{c}_i \le \hat{c}_j \implies f(\hat{c}_i) \le f(\hat{c}_j).
    \end{aligned}
    \end{equation*}
    The calibrated confidence is obtained via $\tilde{c} = f(\hat{c})$.
\end{itemize}

\subsection{Evaluation Metrics Formula}
\label{sec:evaluation metrics formula}

Let $\mathcal{D} = \{(\bm{x}_i, \state_i)\}_{i=1}^{N}$ denote the evaluation dataset. For each sample $\bm{x}_i$, the model predicts an answer $\hat{\state}_i$ with an associated confidence $\hat{c}_i \in [0,1]$. We evaluate the model's performance from three perspectives:
\begin{itemize}
    \item \textbf{Prediction Accuracy (Acc).} This metric measures the correctness of the LLM's predictions: 
    \begin{align*}
        \text{Acc} = \frac{1}{N}\sum_{i=1}^N \mathbb{I}(\hat{\state}_i = \state_i),
    \end{align*}
    where $\mathbb{I}(\cdot)$ is the indicator function.

    \item \textbf{Calibration Metrics.} We assess calibration performance using \textbf{Expected Calibration Error (ECE)}~\cite{guo2017calibration} and the \textbf{Brier Score (BS)}~\cite{brier1950verification}. Brier Score measures the mean squared error between the confidence and the prediction correctness:
    \begin{align*}
        \text{BS} = \frac{1}{N} \sum_{i=1}^N (\hat{c}_i - \mathbb{I}(\hat{\state}_i = \state_i))^2.
    \end{align*}
    
    For ECE, we uniformly partition the confidence interval $[0,1]$ into $B=10$ bins, with $B_m$ containing samples with confidence within the $m$-th interval $(\frac{m-1}{M}, \frac{m}{M}]$. ECE is calculated as the weighted average of the absolute difference between the accuracy and confidence of each bin:
    \begin{align}
        \text{ECE} = \sum_{m=1}^{M} \frac{|B_m|}{N} \left| \text{acc}(B_m) - \text{conf}(B_m) \right|,
    \end{align}
    where $|B_m|$ is the number of samples in bin $m$. The accuracy and confidence within each bin $B_m$ are defined as:
    \begin{align}
        \text{acc}(B_m) &= \frac{1}{|B_m|} \sum_{\bm{x}_i \in B_m} \mathbb{I}(\hat{\state}_i = \state_i), \notag \\
        \text{conf}(B_m) &= \frac{1}{|B_m|} \sum_{\bm{x}_i \in B_m} \hat{c}_i.
    \end{align}
    
    \item \textbf{Optimization Objective:} We also report the \textbf{Confidence Loss (CL)} as defined in Equation~\ref{eq:proper loss}, which serves as our primary optimization target:
    \begin{equation*}
        \text{CL} = \frac{1}{N} \sum_{i=1}^N \left( -\mathbb{I}(\hat{\state}_i = \state_i) + (\hat{c}_i - \mathbb{I}(\hat{\state}_i = \state_i))^2 \right).
    \end{equation*}
\end{itemize}

\section{Additional Results}
\label{app:additional results}

\subsection{Comparison with Additional Supervised Baselines}

\begin{table*}[!ht]
    \centering
    \caption{\textbf{Main results on MMLU-Redux and CommonSenseQA.} \textbf{Bold} and \underline{underline} denote the \textbf{best} and \underline{second-best} results among \emph{label-free} methods, while \dag indicates the \emph{global} best across all methods. Accuracy for Supervised Baselines is marked with \textit{--} as these post-hoc methods do not alter final predictions.}
    \label{tab:all_baselines}
    \resizebox{\textwidth}{!}{
    \begin{tabular}{l|cccc|cccc}
        \toprule
        \multirow{2}{*}{\textbf{Methods}} & \multicolumn{4}{c|}{\textbf{MMLU-Redux}} & \multicolumn{4}{c}{\textbf{CommonSenseQA}} \\
         & \textbf{ACC} ($\uparrow$) & \textbf{BS} ($\downarrow$) & \textbf{ECE} ($\downarrow$) & \textbf{CL} ($\downarrow$) & \textbf{ACC} ($\uparrow$) & \textbf{BS} ($\downarrow$) & \textbf{ECE} ($\downarrow$) & \textbf{CL} ($\downarrow$) \\
        \midrule
        \multicolumn{9}{c}{\textbf{Qwen3-8B}} \\
        \midrule
        Base Model ($\predRV_0$) & 77.35\% & \textbf{0.1229} & \textbf{0.0326} & $-$0.6506 & \underline{82.88\%} & \textbf{0.1166} & \underline{0.0601} & \textbf{$-$0.7122} \\
        Instruct Model ($\predRV_1$) & \textbf{83.28\%}\dag & 0.1337 & 0.1250 & $-$0.6991 & \underline{82.88\%} & 0.1430 & 0.1385 & $-$0.6859 \\
        \midrule
        \textit{Supervised Baselines:} & & & & & & & & \\
        \cellcolor{gray!10}Temperature Scaling & \cellcolor{gray!10}\textit{--} & \cellcolor{gray!10}0.1116 & \cellcolor{gray!10}0.0584 & \cellcolor{gray!10}$-$0.7212 & \cellcolor{gray!10}\textit{--} & \cellcolor{gray!10}0.1166 & \cellcolor{gray!10}0.0524 & \cellcolor{gray!10}$-$0.7122 \\
        \cellcolor{gray!10}Histogram Binning & \cellcolor{gray!10}\textit{--} & \cellcolor{gray!10}0.1198 & \cellcolor{gray!10}0.0177\dag & \cellcolor{gray!10}$-$0.7130 & \cellcolor{gray!10}\textit{--} & \cellcolor{gray!10}0.1249 & \cellcolor{gray!10}0.0254 & \cellcolor{gray!10}$-$0.7039 \\
        \cellcolor{gray!10}Isotonic Regression & \cellcolor{gray!10}\textit{--} & \cellcolor{gray!10}0.1083\dag & \cellcolor{gray!10}0.0186 & \cellcolor{gray!10}$-$0.7245\dag & \cellcolor{gray!10}\textit{--} & \cellcolor{gray!10}0.1119\dag & \cellcolor{gray!10}0.0219\dag & \cellcolor{gray!10}$-$0.7170\dag \\
        \midrule
        \textit{Label Free:} & & & & & & & & \\
        \textbf{Ours} & \textbf{83.28\%}\dag & \underline{0.1232} & \underline{0.0659} & \textbf{$-$0.7096}& \textbf{83.29\%}\dag & \underline{0.1291} & \textbf{0.0389} & \underline{$-$0.7038} \\
        \midrule
        \multicolumn{9}{c}{\textbf{LLaMA-3.1-8B}} \\
        \midrule
        Base Model ($\predRV_0$) & 65.42\% & \textbf{0.1580}\dag & \textbf{0.0176}\dag & \underline{$-$0.4962} & 70.93\% & \underline{0.1698} & \underline{0.0342} & $-$0.5395 \\
        Instruct Model ($\predRV_1$) &  \textbf{71.74\%}\dag & 0.2450 & 0.2417 & $-$0.4724 & \underline{78.62\%} & 0.2030 & 0.1999 & \underline{$-$0.5832} \\
        \midrule
        \textit{Supervised Baselines:} & & & & & & & & \\
        \cellcolor{gray!10}Temperature Scaling & \cellcolor{gray!10}\textit{--} & \cellcolor{gray!10}0.1874 & \cellcolor{gray!10}0.0352 & \cellcolor{gray!10}$-$0.5299 & \cellcolor{gray!10}\textit{--} & \cellcolor{gray!10}0.1749 & \cellcolor{gray!10}0.0935 & \cellcolor{gray!10}$-$0.6114 \\
        \cellcolor{gray!10}Histogram Binning & \cellcolor{gray!10}\textit{--} & \cellcolor{gray!10}0.1889 & \cellcolor{gray!10}0.0188 & \cellcolor{gray!10}$-$0.5285 & \cellcolor{gray!10}\textit{--} & \cellcolor{gray!10}0.1655 & \cellcolor{gray!10}0.0445 & \cellcolor{gray!10}$-$0.6207 \\
        \cellcolor{gray!10}Isotonic Regression & \cellcolor{gray!10}\textit{--} & \cellcolor{gray!10}0.1807 & \cellcolor{gray!10}0.0185 & \cellcolor{gray!10}$-$0.5366\dag & \cellcolor{gray!10}\textit{--} & \cellcolor{gray!10}0.1616\dag & \cellcolor{gray!10}0.0427 & \cellcolor{gray!10}$-$0.6247\dag \\
        \midrule
        \textit{Label Free:} & & & & & & & & \\
        \textbf{Ours} & \textbf{71.74\%}\dag & \underline{0.2153} & \underline{0.1679} & \textbf{$-$0.5022} & \textbf{78.71\%}\dag & \textbf{0.1657} & \textbf{0.0290}\dag & \textbf{$-$0.6214} \\
        \midrule
        \multicolumn{9}{c}{\textbf{Ministral-3-8B}} \\
        \midrule
        Base Model ($\predRV_0$) & 75.63\% & \textbf{0.1278}\dag & \textbf{0.0534} & \underline{$-$0.6285} & 70.27\% & \underline{0.1960} & \underline{0.1388} & $-$0.5067 \\
        Instruct Model ($\predRV_1$) & \underline{79.21\%} & 0.1720 & 0.1628 & $-$0.6201 & \underline{74.04\%} & 0.2041 & 0.1682 & \underline{$-$0.5363} \\
        \midrule
        \textit{Supervised Baselines:} & & & & & & & & \\
        \cellcolor{gray!10}Temperature Scaling & \cellcolor{gray!10}\textit{--} & \cellcolor{gray!10}0.1571 & \cellcolor{gray!10}0.0546 & \cellcolor{gray!10}$-$0.6350 & \cellcolor{gray!10}\textit{--} & \cellcolor{gray!10}0.1899 & \cellcolor{gray!10}0.0740 & \cellcolor{gray!10}$-$0.5505 \\
        \cellcolor{gray!10}Histogram Binning & \cellcolor{gray!10}\textit{--} & \cellcolor{gray!10}0.1494 & \cellcolor{gray!10}0.0528 & \cellcolor{gray!10}$-$0.6427 & \cellcolor{gray!10}\textit{--} & \cellcolor{gray!10}0.1771\dag & \cellcolor{gray!10}0.0354 & \cellcolor{gray!10}$-$0.5633 \\
        \cellcolor{gray!10}Isotonic Regression & \cellcolor{gray!10}\textit{--} & \cellcolor{gray!10}0.1509 & \cellcolor{gray!10}0.0465\dag & \cellcolor{gray!10}$-$0.6412 & \cellcolor{gray!10}\textit{--} & \cellcolor{gray!10}0.1773 & \cellcolor{gray!10}0.0345 & \cellcolor{gray!10}$-$0.5631 \\
        \midrule
        \textit{Label Free:} & & & & & & & & \\
        \textbf{Ours} & \textbf{80.52\%}\dag & \underline{0.1519} & \underline{0.0664} & \textbf{$-$0.6534}\dag & \textbf{74.30\%}\dag & \textbf{0.1777} & \textbf{0.0295}\dag & \textbf{$-$0.5653}\dag \\
        \bottomrule
    \end{tabular}
    }
\end{table*}

In Table~\ref{tab:all_baselines}, we extend the evaluations in Section~\ref{sec:results} to include Histogram Binning (HB) and Isotonic Regression (IR). Consistent with our main findings, our \emph{label-free} approach significantly reduces the calibration error of Instruct models, achieving performance competitive with these \emph{supervised} baselines. Further, our method marginally improves accuracy in certain cases (e.g. improving MMLU-Redux accuracy to $80.52\%$ with \texttt{Ministral-3-8B}), while the supervised baselines can not change the predictive accuracy.

\subsection{Additional Results for Models with Varying Sizes}

While Section~\ref{sec:results} focuses on the 8B-parameter variants of three open-source LLM families, we further extend our evaluation to models of varying sizes within these families to validate the robustness of our method.

Specifically, we use the following LLMs:
\begin{itemize}
    \item \textbf{QWen3 Family.}~\cite{yang2025qwen3} In addition to \texttt{QWen3-8B}, we evaluate \texttt{QWen3-0.6B}, \texttt{QWen3-1.7B}, \texttt{QWen3-4B} and \texttt{QWen3-14B}.
    \item \textbf{LLaMA-3 Family.}~\cite{grattafiori2024llama} In addition to \texttt{LLaMA-3.1-8B}, we evaluate \texttt{LLaMA-3-8B}, \texttt{LLaMA-3.2-1B} and \texttt{LLaMA-3-3B}.
    \item \textbf{Ministral-3 Family.}~\cite{liu2026ministral} In addition to \texttt{Ministral-3-8B}, we evaluate \texttt{Ministral-3-3B}, \texttt{Ministral-3-14B}.
\end{itemize}

Consistent with the experimental setup in Section~\ref{sec:experiments}, we employ the \textbf{Instruct} checkpoints as the Primary Predictor ($\predRV_1$) and their corresponding \textbf{Base} checkpoints as the Reference Predictor ($\predRV_0$). All evaluation details and datasets remain identical to those described in the main experiments (Section~\ref{sec:experiments} and Appendix~\ref{app:experiment details}).

\begin{table*}[!ht]
    \centering
    \caption{\textbf{Results for Qwen3 family on MMLU-Redux and CommonSenseQA.} \textbf{Bold} and \underline{underline} denote the \textbf{best} and \underline{second-best} results among \emph{label-free} methods, while \dag indicates the \emph{global} best across all methods. Accuracy for Supervised Baselines is marked with \textit{--} as these post-hoc methods do not alter final predictions.}
    \label{tab:qwen3}
    \resizebox{\textwidth}{!}{
    \begin{tabular}{l|cccc|cccc}
        \toprule
        \multirow{2}{*}{\textbf{Methods}} & \multicolumn{4}{c|}{\textbf{MMLU-Redux}} & \multicolumn{4}{c}{\textbf{CommonSenseQA}} \\
         & \textbf{ACC} ($\uparrow$) & \textbf{BS} ($\downarrow$) & \textbf{ECE} ($\downarrow$) & \textbf{CL} ($\downarrow$) & \textbf{ACC} ($\uparrow$) & \textbf{BS} ($\downarrow$) & \textbf{ECE} ($\downarrow$) & \textbf{CL} ($\downarrow$) \\
        \midrule
        \multicolumn{9}{c}{\textbf{Qwen3-0.6B}} \\
        \midrule
        Base Model ($\predRV_0$) & \underline{53.16\%} & \textbf{0.2005}\dag & \textbf{0.0277} & \textbf{$-$0.3310}\dag & \textbf{58.56\%}\dag & \textbf{0.1966}\dag & \textbf{0.0211} & \textbf{$-$0.3890}\dag \\
        Instruct Model ($Q_1$) & \textbf{54.58\%}\dag & 0.3762 & 0.3742 & $-$0.1696 & \underline{56.67\%} & 0.3775 & 0.3767 & $-$0.1893 \\
        \midrule
        \textit{Supervised Baselines:} & & & & & & & & \\
        \cellcolor{gray!10}Temperature Scaling & \cellcolor{gray!10}\textit{--} & \cellcolor{gray!10}0.2356 & \cellcolor{gray!10}0.0546 & \cellcolor{gray!10}$-$0.3102 & \cellcolor{gray!10}\textit{--} & \cellcolor{gray!10}0.2350 & \cellcolor{gray!10}0.0315 & \cellcolor{gray!10}$-$0.3318 \\
        \cellcolor{gray!10}Histogram Binning & \cellcolor{gray!10}\textit{--} & \cellcolor{gray!10}0.2365 & \cellcolor{gray!10}0.0184 & \cellcolor{gray!10}$-$0.3093 & \cellcolor{gray!10}\textit{--} & \cellcolor{gray!10}0.2370 & \cellcolor{gray!10}0.0236 & \cellcolor{gray!10}$-$0.3297 \\
        \cellcolor{gray!10}Isotonic Regression & \cellcolor{gray!10}\textit{--} & \cellcolor{gray!10}0.2294 & \cellcolor{gray!10}0.0178\dag & \cellcolor{gray!10}$-$0.3164 & \cellcolor{gray!10}\textit{--} & \cellcolor{gray!10}0.2309 & \cellcolor{gray!10}0.0155\dag & \cellcolor{gray!10}$-$0.3359 \\
        \midrule
        \textit{Label Free:} & & & & & & & & \\
        \textbf{Ours} & \textbf{54.58\%}\dag & \underline{0.3571} & \underline{0.3485} & \underline{$-$0.1887} & \underline{56.67\%} & \underline{0.2847} & \underline{0.2182} & \underline{$-$0.2821} \\
        \midrule
        \multicolumn{9}{c}{\textbf{Qwen3-1.7B}} \\
        \midrule
        Base Model ($Q_0$) & 64.46\% & \textbf{0.1714} & \textbf{0.0190}\dag & \textbf{$-$0.4732} & 70.84\% & \textbf{0.1636}\dag & \textbf{0.0517} & \textbf{$-$0.5449} \\
        Instruct Model ($Q_1$) & \underline{70.26\%} & 0.2475 & 0.2461 & $-$0.4552 & \underline{71.58\%} & 0.2579 & 0.2574 & $-$0.4579 \\
        \midrule
        \textit{Supervised Baselines:} & & & & & & & & \\
        \cellcolor{gray!10}Temperature Scaling & \cellcolor{gray!10}\textit{--} & \cellcolor{gray!10}0.1744 & \cellcolor{gray!10}0.0591 & \cellcolor{gray!10}$-$0.5282 & \cellcolor{gray!10}\textit{--} & \cellcolor{gray!10}0.1727 & \cellcolor{gray!10}0.0645 & \cellcolor{gray!10}$-$0.5431 \\
        \cellcolor{gray!10}Histogram Binning & \cellcolor{gray!10}\textit{--} & \cellcolor{gray!10}0.1879 & \cellcolor{gray!10}0.0389 & \cellcolor{gray!10}$-$0.5147 & \cellcolor{gray!10}\textit{--} & \cellcolor{gray!10}0.1934 & \cellcolor{gray!10}0.0257 & \cellcolor{gray!10}$-$0.5224 \\
        \cellcolor{gray!10}Isotonic Regression & \cellcolor{gray!10}\textit{--} & \cellcolor{gray!10}0.1696\dag & \cellcolor{gray!10}0.0331 & \cellcolor{gray!10}$-$0.5330\dag & \cellcolor{gray!10}\textit{--} & \cellcolor{gray!10}0.1690 & \cellcolor{gray!10}0.0443 & \cellcolor{gray!10}$-$0.5468\dag \\
        \midrule
        \textit{Label Free:} & & & & & & & & \\
        \textbf{Ours} & \textbf{70.44\%}\dag & \underline{0.2320} & \underline{0.2097} & \underline{$-$0.4724} & \textbf{71.66\%}\dag & \underline{0.2002} & \underline{0.0885} & \underline{$-$0.5165} \\
        \midrule
        \multicolumn{9}{c}{\textbf{Qwen3-4B}} \\
        \midrule
        Base Model ($Q_0$) & \underline{73.54\%} & \textbf{0.1379} & \textbf{0.0279} & $-$0.5976 & 78.46\% & \textbf{0.1252}\dag & \underline{0.0524} & \underline{$-$0.6594} \\
        Instruct Model ($Q_1$) & \textbf{80.26\%}\dag & 0.1653 & 0.1610 & \underline{$-$0.6373} & \underline{80.26\%} & 0.1703 & 0.1660 & $-$0.6323 \\
        \midrule
        \textit{Supervised Baselines:} & & & & & & & & \\
        \cellcolor{gray!10}Temperature Scaling & \cellcolor{gray!10}\textit{--} & \cellcolor{gray!10}0.1299 & \cellcolor{gray!10}0.0563 & \cellcolor{gray!10}$-$0.6727 & \cellcolor{gray!10}\textit{--} & \cellcolor{gray!10}0.1322 & \cellcolor{gray!10}0.0460 & \cellcolor{gray!10}$-$0.6704 \\
        \cellcolor{gray!10}Histogram Binning & \cellcolor{gray!10}\textit{--} & \cellcolor{gray!10}0.1403 & \cellcolor{gray!10}0.0287 & \cellcolor{gray!10}$-$0.6623 & \cellcolor{gray!10}\textit{--} & \cellcolor{gray!10}0.1449 & \cellcolor{gray!10}0.0283\dag & \cellcolor{gray!10}$-$0.6577 \\
        \cellcolor{gray!10}Isotonic Regression & \cellcolor{gray!10}\textit{--} & \cellcolor{gray!10}0.1239\dag & \cellcolor{gray!10}0.0117\dag & \cellcolor{gray!10}$-$0.6788\dag & \cellcolor{gray!10}\textit{--} & \cellcolor{gray!10}0.1259 & \cellcolor{gray!10}0.0286 & \cellcolor{gray!10}$-$0.6767\dag \\
        \midrule
        \textit{Label Free:} & & & & & & & & \\
        \textbf{Ours} & \textbf{80.26\%}\dag & \underline{0.1498} & \underline{0.0965} & \textbf{$-$0.6528} & \textbf{80.67\%}\dag & \underline{0.1438} & \textbf{0.0297} & \textbf{$-$0.6629} \\
        \midrule
        \multicolumn{9}{c}{\textbf{Qwen3-14B}} \\
        \midrule
        Base Model ($Q_0$) & \underline{81.12\%} & \textbf{0.1084} & \textbf{0.0152}\dag & $-$0.7028 & 82.56\% & \textbf{0.1054}\dag & \textbf{0.0173}\dag & \textbf{$-$0.7201}\dag \\
        Instruct Model ($Q_1$) & \textbf{85.75\%}\dag & 0.1177 & 0.1071 & \underline{$-$0.7399} & \underline{82.80\%} & 0.1512 & 0.1483 & $-$0.6768 \\
        \midrule
        \textit{Supervised Baselines:} & & & & & & & & \\
        \cellcolor{gray!10}Temperature Scaling & \cellcolor{gray!10}\textit{--} & \cellcolor{gray!10}0.1011 & \cellcolor{gray!10}0.0679 & \cellcolor{gray!10}$-$0.7565 & \cellcolor{gray!10}\textit{--} & \cellcolor{gray!10}0.1193 & \cellcolor{gray!10}0.0640 & \cellcolor{gray!10}$-$0.7087 \\
        \cellcolor{gray!10}Histogram Binning & \cellcolor{gray!10}\textit{--} & \cellcolor{gray!10}0.1086 & \cellcolor{gray!10}0.0303 & \cellcolor{gray!10}$-$0.7490 & \cellcolor{gray!10}\textit{--} & \cellcolor{gray!10}0.1295 & \cellcolor{gray!10}0.0220 & \cellcolor{gray!10}$-$0.6985 \\
        \cellcolor{gray!10}Isotonic Regression & \cellcolor{gray!10}\textit{--} & \cellcolor{gray!10}0.0986\dag & \cellcolor{gray!10}0.0411 & \cellcolor{gray!10}$-$0.7589\dag & \cellcolor{gray!10}\textit{--} & \cellcolor{gray!10}0.1135 & \cellcolor{gray!10}0.0190 & \cellcolor{gray!10}$-$0.7145 \\
        \midrule
        \textit{Label Free:} & & & & & & & & \\
        \textbf{Ours} & \textbf{85.75\%}\dag & \underline{0.1101} & \underline{0.0612} & \textbf{$-$0.7474} & \textbf{83.37\%}\dag & \underline{0.1262} & \underline{0.0232} & \underline{$-$0.7075} \\
        \bottomrule
    \end{tabular}
    }
\end{table*}

\end{document}